\DeclarePairedDelimiterX{\divbar}[2]{}{}{%
  \left[#1\,\delimsize\|\,#2\right]%
}
\newcommand{\diag}{\mathop{\mathrm{diag}}}
\newcommand{\tr}{\mathop{\mathrm{tr}}}
\newcommand{\E}[1]{ \ifthenelse{\equal{#1}{}} {\mathbb{E}} { \mathbb{E}_{#1} }}
\newcommand{\offdiag}{\mathrm{offdiag}}
\newcommand*\samethanks[1][\value{footnote}]{\footnotemark[#1]}
\begin{document}

\doparttoc 
\faketableofcontents 
\part{} 

\title{Mitigating the Effects of Non-Identifiability on Inference for Bayesian Neural Networks with Latent Variables}

\author{\name Yaniv Yacoby\thanks{Equal contribution} \email yanivyacoby@g.harvard.edu \\
	\name Weiwei Pan\samethanks \email weiweipan@g.harvard.edu \\
	\name Finale Doshi-Velez \email finale@seas.harvard.edu \\
       \addr John A. Paulson School of Engineering and Applied Sciences\\
       Harvard University\\
       Cambridge, MA 02138, USA
       }

\editor{Qiang Liu}

\maketitle

\begin{abstract}
Bayesian Neural Networks with Latent Variables (BNN+LVs) capture predictive uncertainty by explicitly modeling model uncertainty (via priors on network weights) and environmental stochasticity (via a latent input noise variable). In this work, we first show that BNN+LV suffers from a serious form of non-identifiability: explanatory power can be transferred between the model parameters and latent variables while fitting the data equally well. We demonstrate that as a result, in the limit of infinite data, the posterior mode over the network weights and latent variables is asymptotically biased away from the ground-truth. Due to this asymptotic bias, traditional inference methods may in practice yield parameters that generalize poorly and misestimate uncertainty. Next, we develop a novel inference procedure that explicitly mitigates the effects of likelihood non-identifiability during training and yields high-quality predictions as well as uncertainty estimates. We demonstrate that our inference method improves upon benchmark methods across a range of synthetic and real data-sets.
\end{abstract}

\begin{keywords}
  bayesian neural networks, approximate inference, latent variable models, heteroscedastic noise, non-identifiability
\end{keywords}

\section{Introduction}\label{sec:intro}
In machine learning fields such as active learning, Bayesian optimization, and reinforcement learning, 
one often needs to fit functions to labeled data that produce both high-quality predictions 
as well as appropriate quantification of predictive uncertainty. 
Often crucial in these applications is the task of identifying sources of predictive uncertainty, 
especially those that are reducible through additional data collection \citep{zhang_bayesian_2019, Henaff2019}.

In general, one can divide the sources of uncertainty in a prediction into two categories. 
\emph{Epistemic} uncertainty, or model uncertainty, 
comes from having insufficient data to determine the ``true'' predictor, and can be reduced with more observations.  
In contrast, \emph{aleatoric} uncertainty comes from the irreducible or unobservable stochasticity in the environment. 
Although in regression one typically assumes a simple fixed form for aleatoric uncertainty 
(for example, independently and identically sampled Gaussian output noise), 
in practice, environmental stochasticity can be a complex function of the input; 
that is, many downstream tasks require us to model heteroscedastic 
aleatoric uncertainty~\citep{Chaudhuri2017, Nikolov2018, griffiths_heteroscedastic_2019, Kendall2017}. 

Bayesian Neural Networks with Latent Variables (BNN+LVs) 
~\citep{wright1999bayesian, depeweg2018decomposition} 
meet the need for scalable predictive models that can capture complex forms of epistemic uncertainty
and heteroscedastic aleatoric uncertainty. 
In particular, a BNN+LV (Figure \ref{fig:dgm}) assumes the following data generation process:
\begin{align*}
    W &\sim p(W), \quad  z_n \sim p(z), \quad \epsilon_n \sim p(\epsilon), \\
    y_n &= f(x_n, z_n; W) + \epsilon_n, \quad n = 1, \ldots, N,
\end{align*}
where $\epsilon$ is a simple form of output noise, $W$ are the parameters of a neural network, 
and $z$ is an unobserved (latent) input variable associated with \emph{each} observation $(x, y)$, 
sampled independently of the input $x$. 
BNN+LVs explicitly (and separately) model aleatoric and epistemic uncertainties: 
the distribution over $W$ captures model uncertainty (epistemic uncertainty); 
together with the output noise $\epsilon$, the stochastic latent input $z$ captures aleatoric uncertainty.
Even if the output noise $\epsilon$ is assumed to be simple, 
because the latent input $z$ is passed through the neural network alongside the input $x$, 
it can model complex, heteroscedastic noise patterns in data~\citep{depeweg2018decomposition}. 
The presence of both $\epsilon$ and $z$ in the model allows us to further explain aleatoric noise 
as arising from a combination of random observation error and latent stochastic input, 
which can represent white noise or unobserved but meaningful covariates. 
Previous work has demonstrated the usefulness of BNN+LV's decomposition of predictive uncertainty 
(into epistemic and aleatoric uncertainties) in downstream applications like active learning and safe reinforcement learning 
\citep{depeweg2018decomposition}, where one needs to carefully balance the risks and rewards of gathering new data. 

\begin{figure}[t!]
\begin{center}
  \begin{tikzpicture}
    \node[obs] (x) {$x_n$};
    \node[obs] (y) [below=of x] {$y_n$};
    \node[latent] (z) [right=of x] {$z_n$};
    \node[latent] (w) [left=of y] {$W$};
    \edge {x, z, w} {y};
    \plate[inner sep=0.1cm] {} {(x) (y) (z)} {$N$};
  \end{tikzpicture}
\end{center}
\caption{\textbf{Bayesian Neural Network with Latent Variables (BNN+LV).} The outputs $y$ are explained using a function, parameterized by $W$, of the observed inputs $x$ and a latent variable $z$.}
\label{fig:dgm}
\end{figure}

For this model, the goal of inference is to draw samples from the posterior of the weights given the data, 
$p(W | \mathcal{D})$, in order to compute a Monte Carlo estimate of the posterior predictive distribution,
\begin{align*}
\begin{split}
p(y^*| x^*, \mathcal{D})  &= \int p(y^* | x^*, W) p(W | \mathcal{D}) dW,
\end{split}
\end{align*}
which evaluates the probability of a new outcome $y^*$ given a new input $x^*$.
Since $p(W | \mathcal{D})$ requires marginalizing out 
$z_1, \dots, z_N$ from $p(W, z_1, \dots, z_N | \mathcal{D})$
(the posterior over all unknown variables), it is intractable to approximate directly.
In practice, one therefore approximates the joint posterior distribution of the weights and latent inputs given the data, 
$p(W, z_1, \dots, z_N | \mathcal{D})$, and integrates out the latent inputs $z_1, \dots, z_N$ empirically
(by drawing samples from the joint posterior, but only keeping samples over the weights).

In this work, we show that this approach yields poor quality posterior predictives in practice.
We provide a theoretical characterization explaining the problem 
and present a new inference method to mitigate it.
Specifically, we show that in the limit of infinite data, 
the mode of the \emph{true} joint posterior $p(W, z_1, \dots, z_N | \mathcal{D})$ is asymptotically biased away
from the ground-truth parameters that generated the data.
As a result, empirically marginalizing out the latent inputs from an \emph{approximation}
of the joint posterior results in an approximation of $p(W | \mathcal{D})$ that explain the data poorly.
We then propose a practical approximate inference method, 
grounded in our theoretical characterization of the asymptotic bias,
that forces the inferred posterior to satisfy our modeling assumptions. 
Our contributions are:

\paragraph{A. (True Posterior Mode) We prove that in the limit of infinite data, the weights $W$ and latent inputs $z_1, \dots, z_N$ at the mode of BNN+LV joint posterior are asymptotically biased.}
In the case that $z$ represents meaningful latent covariates, 
one would hope that $p(W, z_1, \dots, z_N | \mathcal{D})$ encodes valuable information about these latent covariates. 
However, in this work, we show that in the limit of infinite data, 
the BNN+LV joint posterior mode is not located at the ground-truth $W, z_1, \dots, z_N$ that generated the data.
Specifically, we prove that the BNN+LV's likelihood is non-identifiable and use this non-identifiability to show that,
for any set of ground-truth $W, z_1, \dots, z_N$, 
there exists an alternative set of weights $\widehat{W}, \widehat{z}_1, \dots \widehat{z}_N$
that is scored as more likely under the posterior. 
This alternative set of parameters violates our modeling assumption that $x$ and $z$ are independent:
$\widehat{z}_1, \dots \widehat{z}_N$ memorized the inputs, $x_1, \dots, x_N$,
and as a result, $\widehat{W}$ parameterizes a function that explains the observed data poorly and generalizes poorly. 
This result has two important implications:
\begin{itemize}
\item For any downstream application in which we wish to interpret the inferred latent variables $z_1, \dots, z_N$, 
it is meaningless to look at the mode of the joint posterior $p(W, z_1, \dots, z_N | \mathcal{D})$;
instead, one must look at the mode of the posterior marginal $p(z_1, \dots, z_N | \mathcal{D})$ or conditional $p(z_1, \dots, z_N | \mathcal{D}, W)$ (given a specific $W$ of interest).
\item The mode of the joint posterior $p(W, z_1, \dots, z_N | \mathcal{D})$ is asymptotically biased, 
and may thus bias imperfect approximations of the joint posterior, $q(W, z_1, \dots, z_N | \mathcal{D})$,
towards approximations that violate our generative modeling assumptions, 
just like $\widehat{W}, \widehat{z}_1, \dots \widehat{z}_N$.
As a result, empirically marginalizing out $z_1, \dots, z_N$ from this approximation
may yield an approximation $q(W | \mathcal{D})$ of $p(W | \mathcal{D})$ that explains the data poorly
(leading to contribution B).
\end{itemize}
We note that the latter implication is caused by the \emph{approximation} of the joint posterior.
Given a tractable method that directly infers $p(W | \mathcal{D})$ without approximation, 
the resultant posterior predictive may explain the data well.
In other words, while the joint posterior mode is asymptotically biased (contribution A), 
the mode of the marginal posterior $p(W | \mathcal{D})$ 
may still parameterize the data generating function in the limit of infinite data 
(i.e. the consistency of the BNN+LV true posterior predictive is an open problem). 
We emphasize that this paper focuses on pathologies caused by 
mean-field variational approximations of the joint posterior $p(W, Z | \mathcal{D})$ (contribution B).

\paragraph{B. (Approximate Posterior) We empirically demonstrate that mean-field variational inference is prone to capturing high mass regions of the joint posterior (near the biased mode), corresponding to posterior predictives that explain the data poorly, generalize poorly and misestimate uncertainty.}
Specifically, the weights and latent inputs inferred by
mean-field variational inference correspond to models that, like the weights and latent inputs of the joint posterior mode,
violate our generative modeling assumptions---that the latent variables are independent of the inputs.
The resultant approximation of $p(W | \mathcal{D})$ yields 
a posterior predictive explains the data poorly and generalizes poorly.

\paragraph{C. (Method) We propose a new variational family for BNN+LV that mitigates the effect of the joint posterior bias.}
Our proposed variational family filters out models
that do not satisfy our modeling assumptions---that the latent inputs $z$ are independent of the inputs $x$ --
thereby mitigating the effects of the asymptotic bias on mean-field variational inference. 
Since our proposed variational family is intractable to use directly,
we re-formulate variational inference with this family as a constraint optimization problem using a proxy objective.
On a range of synthetic and real data-sets, 
we empirically show that posterior predictives learned via our method, 
Noise Constrained Approximate Inference (NCAI), perform significantly better: 
models trained this way consistently recover posterior predictives with properties 
(generalization and uncertainty estimates) more similar to those of the ground-truth.


\section{Related Work}\label{sec:related}

\paragraph{Models for heteroscedastic regression.}
In the standard Bayesian Neural Network (BNN) model for regression (e.g. ~\citet{mackay1992practical,neal2012bayesian}), one generally assumes that the irreducible noise (aleatoric uncertainty) in the data is identically and independently distributed.  However, many real-world tasks~\citep{Kendall2017, depeweg2018decomposition} require more complex forms of aleatoric uncertainty.  In particular, one may need to relax the assumption that the noise is identically distributed---not only may the variance of the noise depend on the input (heteroscedasticity) but the form of the distribution may also change depending on the input.
Works that consider more complex noise models take two main forms.  The first considers a predictor of the form $y = f(x;W) + \epsilon(x)$, where the output noise $\epsilon$ is a stochastic function of the input $x$ (e.g. ~\citet{kou2015probabilistic, bauza2017probabilistic}).  These ``output noise'' models have a long history in the Gaussian process (GP) literature (e.g. ~\citet{le2005heteroscedastic, wang2012Gaussian, kersting2007most}) and have been formulated more recently for BNNs (e.g. ~\citet{Kendall2017, gal2016uncertainty}).  Such models are appropriate, for example, when one believes that aleatoric uncertainty is solely rooted in observational error of the output (the input is measured without noise and there are no unobserved explanatory variables), and that this error varies across the input domain. For example, $y$ may represent noisy sensor readings at a region $x$, and some regions may be more error-prone than others.

However, assuming such an additive noise structure can be restrictive: one must fix a specific family of distributions for the output noise. For example, for the noise distribution, one commonly chooses a zero-mean Gaussian family with input-dependent variance; this choice assumes that the observation noise is symmetrically distributed. In this paper, we focus on an alternative approach to modeling irreducible noise, in which we explicitly consider a latent input variable $z$ for each input $x$, representing either white noise or meaningful but unobserved covariates, in addition to i.i.d. output noise $\epsilon$: $y=f(x,z;W)+\epsilon$.  
This ``latent input noise'' model encapsulates the ``output noise'' model as a special case and allows us to capture arbitrarily complex noise patterns while assuming simple distributions for $z$ and $\epsilon$. Furthermore, since the ``latent input noise'' model decomposes the source of noise into observation error and latent stochastic input, this model is more appropriate when, based on domain or task-specific knowledge, one wants to explicitly account for latent factors that affect the output. For example, $y$ may represent patient response to treatment given measured factors, $x$, including BMI, as well as latent factors, $z$, such as stress level, which is either not measured or cannot be measured in practice. In this case, $x$ represents stable characteristics of the patient (e.g. BMI does not vary too much from day to day), while $z$ represents transient characteristics of the patient (stress can vary wildly as a function of external factors, e.g. work-stress, family emergencies). Since in this scenario, $z$ cannot be meaningfully predicted from $x$, we assume that $x$ and $z$ are independent, and that $z$ is randomly sampled every time the patient is observed. During inference, one would ideally infer both the function parameters $W$ as well as the latent factors $z$ that explain the treatment outcome $y$ for a given patient $x$. 

\paragraph{Inference challenges for latent variable models.}
A related ``latent input noise''  model is the frequentist mixed effects model. 
Although the literature in this area is rich, most works only consider cases where the function $f$ is linear. 
In some of these cases, it has been shown that the MLE of the parameters $W$ of $f$ and the latent variables $z$ is inconsistent due to the ``non-vanishing effects" of the random variables $z$ (as the number of observations grows, so does the number of latent variables $z$ that need to be estimated). 
That is, inferring jointly the model parameters and the latent factors could yield asymptotically biased estimates. 
This problem is known as the ``incidental parameters problem''~\citep{neyman_consistent_1948, lancaster_incidental_2000}. 
While for specific models, it is possible to get a consistent MLE estimator for the parameters $W$
by first marginalizing out the latent variable $z$~\citep{kiefer1956}
there are no works to our knowledge that address the consistency of estimators of 
$W$ and $z$ in the case that $f$ is an arbitrarily non-linear function. 

In the case of Bayesian mixed effects models, there are a number of works that 
demonstrate the frequentist consistency of the posterior $p(W, z_1, \dots, z_N | \mathcal{D})$ 
as both the number of inputs and the number of occurrences of 
each latent variable approach infinity~\citep{baghishani_asymptotic_2012}. 
There are, however, fewer works that examine the consistency of  $p(W, z_1, \dots, z_N | \mathcal{D})$ 
when the number of occurrences of each latent variable is fixed~\citep{Wang2017}. 
In fact, it is generally not known whether or not the joint posterior $p(W, z_1, \dots, z_N | \mathcal{D})$ 
or the marginal posterior $p(W | \mathcal{D})$ over $W$ (derived by integrating out $z$) 
is consistent for arbitrary non-linear functions $f$. 

\paragraph{Bayesian latent variable models for non-linear regression.}
For Bayesian ``latent input noise'' models where the function $f$ is nonlinear, 
there are a number of works that place a Gaussian Process prior over $f$
(e.g. ~\citet{lawrence2007hierarchical, mchutchon2011Gaussian, damianou2014variational}). 
In contrast, there are only a few works on ``latent input noise'' models that place a BNN prior over 
$f$~\citep{wright1999bayesian, depeweg2018decomposition}. 
While GP latent variable models have been shown to successfully capture complex noise patterns in 
low-dimensional regression data, for high-dimensional data with non-local structure 
(e.g. images, natural language) it is more natural to apply models like BNN+LV whose 
priors are over flexible parametric forms of $f$.
However, in this work, we show that the flexibility gained by adding latent input noise variables to 
BNNs presents new challenges for inference. 

\paragraph{Non-identifiability in deep probabilistic models.} 
Due to their complexity, deep probabilistic models are often non-identifiable---that is, there exist several different sets of parameters that all explain the observed data equally well.
For example, the weights of a BNN can be permuted while still parameterizing the same function 
(known as ``weight-space symmetry''~\cite{pourzanjani_improving_2017}),
and the latent space of a Variational Autoencoder~\citep{kingma_auto-encoding_2013} can be transformed while still explaining the observed data equally well (e.g. ~\cite{Locatello2018}).
In such scenarios, it is has been shown that the undesirable effects of non-identifiability can be mitigated
by modifying the model itself (e.g. ~\cite{pourzanjani_improving_2017,Khemakhem2019}),
or by specifying additional model selection criteria~\citep{Zhao2018}. 

To our knowledge, we are the first to describe non-trivial likelihood non-identifiability 
that occurs in BNN+LV and how this non-identifiability impacts inference (particularly variational inference).  
The non-identifiability we characterize in this paper is different than the previously studied
weight-space symmetry of BNNs, and thus presents different challenges for inference;
the posterior multi-modality caused by the weight-space symmetry has been empirically shown 
to slow the convergence for MCMC methods and lead to poor variational approximations
~\citep{pourzanjani_improving_2017, papamarkou_challenges_2019}.
In contrast, the BNN+LV likelihood non-identifiability we characterize is between the weights and latent variables.
As we show in this work, it causes the posterior mode of the posterior $p(W, z_1, \dots, z_N | \mathcal{D})$ to be asymptotically biased.
These problems have not been explicitly considered in previous work likely because 
the impacts of non-identifiability on inference are often attributed to general optimization difficulties.  
Based on our analysis of non-identifiability in BNN+LV models, 
we propose modifications to the mean-field variational family that 
explicitly mitigate the effects of likelihood non-identifiability.


\section{Background and Notation} \label{sec:background}
Let $\mathcal{D}=\{(x_1,y_1), \ldots (x_N, y_N) \}$ be a data-set of $N$ 
observations from the true data distribution $p(y | x) p(x)$, 
in which each input $x_n \in \mathbb{R}^D$ is a $D$-dimensional vector and each output $y_n\in \mathbb{R}^L$ is $L$-dimensional.
Let $I$ denote the identity matrix and let capital letters denote sets of variables, 
e.g. $X = \{ x_n \}_{n=1}^N$, $Y = \{ y_n \}_{n=1}^N$, $Z = \{ z_n \}_{n=1}^N$. 
See Appendix \ref{sec:notation} for a summary of the notation.

\paragraph{A Bayesian Neural Network (BNN).} A BNN assumes a predictor of the form $y = f(x; W) + \epsilon$,
where $f$ is a neural network parametrized by $W$ and $\epsilon \sim \mathcal{N}(0, \sigma^2_\epsilon \cdot I)$ 
is a normally distributed noise variable. 
It places a prior $p(W)$ on the network parameters; 
given a data-set $\mathcal{D}$, we can apply Bayesian inference to compute the 
posterior distribution $p(W | \mathcal{D})$ over $W$ or the posterior predictive distribution $p(y^* | x^*, \mathcal{D})$
of over outputs $y^*$ given a new input $x^*$. 
As commonly done, we assume $p(W) = \mathcal{N}(0, \sigma^2_w \cdot I)$. 

\paragraph{A Bayesian Neural Network with Latent Variables (BNN+LV).} A BNN+LV enables more flexible noise distributions for BNNs by introducing a latent variable $z_n \sim \mathcal{N}(0, \sigma_z^2 \cdot I)$ for each observation $(x_n, y_n)$ ~\citep{depeweg2018decomposition}. It assumes the following data generation process (Figure \ref{fig:dgm}):
\begin{equation}\label{eqn:gen_model}
\begin{split}
    W &\sim p(W), \quad  z_n \sim p(z), \quad \epsilon_n \sim \mathcal{N}(0, \sigma_\epsilon^2 \cdot I), \\
    y_n &= f(x_n, z_n; W) + \epsilon_n, \quad n = 1, \ldots, N.
\end{split}
\end{equation}
In this model, $z$ is independent from $x$, and can represent either white noise or meaningful latent explanatory variables. 
When $f$ is non-linear, BNN+LV is able to model heteroscedastic noise by transforming $z$.  
Inference for BNN+LVs involves approximating the posterior distribution, 
\begin{align}
p(W, Z | \mathcal{D}) &\propto p(W) \cdot \prod\limits_n p(y_n | x_n, z_n, W) p(z_n),
\label{eq:true-posterior}
\end{align}
(where $Z = \{z_1,\ldots,z_N\}$),
over both network weights $W$ and the latent input $z_n$ for each observation $x_n$. 
When $Z$ represents meaningful latent variables, we may infer the latent information $z_n$ for each input $x_n$ by marginalizing $p(W, Z | \mathcal{D})$ over $W$. For a new input $x^*$, the posterior predictive is given 
by the expected likelihood under the posterior of $W$, and the prior of $z$~\citep{depeweg2018decomposition}:
\begin{align}
\begin{split}
p(y^*| x^*, \mathcal{D}) 
&= \int p(y^* | x^*, W) p(W| \mathcal{D}) dW \\
&= \iint p(y^* | x^*, z^*, W) p(z^*) dz^* p(W| \mathcal{D}) dW. 
\end{split}
\label{eq:posterior-predictive}
\end{align}
Note that $z^*$ is sampled from the prior $p(z)$ to compute the posterior predictive for a new input. 
This is because, in BNN+LVs, 
the form of environmental stochasticity modeled by the latent input $z$ does not change between train and test time.  
As an example, suppose that the latent input for our model is $z\sim \mathcal{N}(0, 1)$. 
Given an observation $(x_n, y_n)$ in the training data, we may infer the values of $z_n$ 
that is likely to have generated $y_n$ given $x_n$, i.e. we compute the posterior $p(z_n | x_n, y_n, W)$. 
Note that the posterior $p(z_n | x_n, y_n, W)$ will generally not be concentrated around $z_n = 0$ like the prior
(e.g. if the sampled noise $z_n$ is equal to 2 then the posterior $p(z_n | x_n, y_n, W)$ should concentrate near 2). 
However, given a new input $x^*$ for which we want to make a prediction 
(i.e. we are asked to predict $y^*$ rather than being given the target), 
what we've inferred about the input noise for $x_n$ is irrelevant to the prediction task for $x^*$, 
since the latent input $z^*$ for the new input $x^*$ is generated randomly from $p(z)$ and is independent of $z_n$. 

\paragraph{Inference for BNN+LVs.}
Our inference goal is to draw samples from $p(W | \mathcal{D})$, 
so we can compute a Monte-Carlo estimate of the posterior predictive (Equation \ref{eq:posterior-predictive}).
While asymptotically exact, MCMC methods are generally impractical for BNNs with large architectures 
trained over large data-sets; as such, we focus on variational inference. 
However, for BNN+LV, it is intractable to approximate $p(W | \mathcal{D})$ directly using variational inference
(by minimizing $D_\text{KL} \lbrack q_\phi(W | \mathcal{D}) || p(W | \mathcal{D}) \rbrack$), 
since doing so requires an intractable marginalization of $z_1, \dots, z_N$ (see Appendix \ref{sec:intractable-marginalization-of-z}). 
Instead, ~\citet{depeweg2018decomposition} advocate for approximating the posterior over all unobserved variables,
$p(W, Z | \mathcal{D})$.
One can then easily sample from $p(W | \mathcal{D})$ by sampling from $p(W, Z | \mathcal{D})$ 
and disregarding the samples over $Z$. 

As commonly done in the BNN literature (e.g. ~\cite{blundell2015weight,depeweg2018decomposition,Foong2019}), 
we approximate the true posterior with a fully factorized Gaussian over network weights and latent variables:
\begin{equation}
\begin{split}
q_\phi(Z, W | \mathcal{D}) &= q_\phi(Z | \mathcal{D}) \cdot q_\phi(W | \mathcal{D}) \\
&= \prod\limits_n q_\phi(z_n | x_n, y_n) \cdot \prod\limits_i q_\phi(w_i) \\
&= \prod\limits_n \mathcal{N} (z_n | \mu_{z_n}, \sigma_{z_n}^2 \cdot I) \cdot \prod\limits_i \mathcal{N} (w_i | \mu_{w_i}, \sigma_{w_i}^2),
\end{split} \label{eq:mfvf}
\end{equation}
where $\phi$ is the set of variational parameters $\{\mu_{z_n}, \sigma_{z_n}^2\}_{n=1}^N \cup \{\mu_{w_i}, \sigma_{w_i}^2\}_{i=1}^I$, over which we minimize a choice of divergence between the $q_\phi(Z, W | \mathcal{D})$ and the true posterior $p(W, Z | \mathcal{D})$.  We choose the commonly used KL-divergence, yielding the following evidence lower bound:
\begin{align}
\begin{split}
\text{ELBO}(\phi) &= \E{q_\phi(Z, W | \mathcal{D})} \lbrack p(Y | X, W, Z) \rbrack 
- D_{\text{KL}}[q_\phi(W | \mathcal{D})\|p(W)] 
- D_{\text{KL}}[q_\phi(Z | \mathcal{D}) \| p(Z)].
\label{eq:mf-elbo}
\end{split}
\end{align}
Maximizing $\text{ELBO}(\phi)$ over $\phi$ is equivalent to minimizing the KL-divergence of our approximate and true posteriors.

\paragraph{Uncertainty decomposition in BNN+LV.} 
Following \citet{depeweg2018decomposition}, we quantify the overall uncertainty in the 
posterior predictive using entropy, $\mathbb{H}\left[p(y_*|x_*)\right]$. 
We compute the aleatoric uncertainty due to $z$ and $\epsilon$ 
by taking the expectation of $\mathbb{H}\left[p(y_*|W, x_*)\right]$ with respect to $W$:
\begin{align}
\mathbb{E}_{q_\phi(W | \mathcal{D})} \left[ \mathbb{H}\left[p(y_*|W, x_*)\right]\right]. 
\label{eq:aleatoric}
\end{align}
We then quantify the epistemic uncertainty due to $W$ by computing the difference between total and aleatoric uncertainties: 
\begin{align}
\mathbb{H}\left[p(y_*|x_*)\right] - \mathbb{E}_{q_\phi(W | \mathcal{D})} \left[ \mathbb{H}\left[p(y_*|W, x_*)\right]\right].
\label{eq:epistemic}
\end{align}

\section{Asymptotic Bias of the BNN+LV Posterior Modes} \label{sec:model}

In this section, we prove that in the limit of infinite data,
due to structural non-identifiability in the BNN+LV likelihood, 
and due to the non-vanishing effect of the prior over the latent inputs,
the mode of the BNN+LV joint posterior $p(W, Z | \mathcal{D})$ is asymptotically biased 
towards parameters that explain the observed data poorly and generalize poorly.
We do this by first characterizing a number of non-trivial ways in which 
network weights $W$ and latent input $z$ in BNN+LV models can be non-identifiable under the likelihood. 
Then, we prove that due to this non-identifiability, for any ground-truth set of weights $W^\text{true}$ 
and corresponding ground-truth latent variables $Z^\text{true} = \{z_n^\text{true} \}_{n=1}^N$, 
there exists an alternative sets of weights and latent variables
that are scored higher under the posterior as the number $N$ of observations grows. Furthermore, the functions 
parametrized by the alternate set of weights explain the observed data poorly and generalize poorly to new data. 
We note that this is unlike the case of BNNs without latent variables:
BNNs are also non-identifiable under the likelihood 
(e.g. one can permute the weights and retain the same function), but the posterior modes 
parameterize functions that recover the data generating function as the number of observations increases,
and the posterior predictive of BNNs nonetheless concentrates around the ground-truth function, 
under mild assumptions~\citep{lee_consistency_2000}. 
Our results have two notable consequences: 
\begin{enumerate}
\item \textbf{Interpretation of the Latent Inputs:} For any downstream task in which we wish to interpret the inferred latent inputs $Z$, 
one should never summarize $p(W, Z | \mathcal{D})$ with its mode;
instead, one may use the mode to summarize $p(Z | \mathcal{D})$
or $p(Z | \mathcal{D}, W)$ (given a specific $W$ of interest). 
\item \textbf{Approximate Inference:} As we show in Section \ref{sec:inference-pathologies},
the asymptotic bias of the mode of the joint posterior $p(W, Z | \mathcal{D})$ exacerbates 
the bias in our mean-field approximation of the joint posterior,
resulting in approximations of $p(W | \mathcal{D})$ that generalize poorly. 
\end{enumerate}

For intuition and clarity, we begin by characterizing likelihood non-identifiability and posterior bias for a single node of a BNN+LV and then generalize these results to a 1-layer BNN+LV. 
We assume the model is well-specified. 

\subsection{Asymptotic Bias of 1-Node BNN+LV Posterior Modes}

\paragraph{Non-Identifiability.} 
Consider univariate output generated by a single hidden-node neural network with LeakyReLU activation. For simplicity, we study a case with zero network biases, unit output weights, and additive input noise:
\begin{align*}
f(x, z; W) = \max \left \{W (x + z), \alpha W (x + z)\right \}, 
\end{align*}
where $0 < \alpha < 1$.
For any non-zero constant $C$, the pair $\widehat{W}^{(C)} = W/C$, $\widehat{z}^{(C)} = (C - 1) x + Cz$ reconstructs the observed data equally well:
\begin{equation*}
\begin{split}
\max \left \{W (x + z), \alpha W (x + z)\right \} = 
\max \left \{\widehat{W}^{(C)}\left(x + \widehat{z}^{(C)}\right), \alpha \widehat{W}^{(C)} \left(x + \widehat{z}^{(C)}\right)\right \}.
\end{split}
\end{equation*}
Now, suppose that the output is observed with Gaussian noise: $y \sim \mathcal{N}(f(x, z; W), \sigma^2_\epsilon)$. Then the true values of the parameter $W$ and the latent input noise $z$ are equally likely as $\widehat{W}^{(C)}$ and $\widehat{z}^{(C)}$ under the likelihood: $p(y| f(x, z, W), \sigma^2_\epsilon) = p(y| f(x, \widehat{z}^{(C)}, \widehat{W}^{(C)}), \sigma^2_\epsilon)$. 
We show in Theorem \ref{thm:post} that for this model, \emph{the posterior over the model parameter and latent inputs $W, Z$ is biased away from the ground-truth towards parameters that generalize poorly as the sample size grows, regardless of the choice of $W$ prior}.  

\begin{restatable}[Asymptotic Bias of the Posterior Mode of 1-Node BNN+LV]{theorem}{ThmSingleNode} \label{thm:post}~ 
Fix any $W \in \mathbb{R}$ and any bounded prior $p_W(W)$ on $W$. Suppose that inputs $\{x_1, \ldots, x_N\}$ are sampled i.i.d from $p_x$ with finite first and second moments $\mu_x, \sigma^2_x$, and that $\{z_1, \ldots, z_N\}$ are sampled i.i.d from $\mathcal{N}(0, \sigma^2_z)$, where $\mu_x^2 + \sigma^2_x > \sigma^2_z$. There exist a non-zero $\mathcal{C}$ such the probability that the scaled values $(\widehat{W}^{(C)}, \{\widehat{z}^{(C)}_n\}_{n=1}^N)$ are more likely than $\left(W, \{z_n\}_{n=1}^N\right)$ under the posterior approaches 1 as $N \to \infty$, for every $C \in (\mathcal{C}, 1)$.  
\end{restatable}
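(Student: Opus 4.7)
The strategy is to exploit the exact likelihood-invariance of the map $(W, z_n) \mapsto (\widehat{W}^{(c)}, \widehat{z}_n^{(c)})$ established in the preamble, so that every factor of the likelihood cancels in the ratio of posterior densities. First I would write
\[
\log \frac{p(\widehat{W}^{(c)}, \widehat{Z}^{(c)} \mid \mathcal{D})}{p(W, Z \mid \mathcal{D})}
 = \bigl[\log p_W(\widehat{W}^{(c)}) - \log p_W(W)\bigr]
 + \sum_{n=1}^{N}\bigl[\log p(\widehat{z}_n^{(c)}) - \log p(z_n)\bigr],
\]
and observe that, for fixed $W$ and $c$, the $W$-prior difference does not depend on $N$ and is finite by the boundedness hypothesis on $p_W$.

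Next, substituting $p(z) = \mathcal{N}(0,\sigma_z^2)$, each summand becomes $(2\sigma_z^2)^{-1}[z_n^2 - (\widehat{z}_n^{(c)})^2]$. Expanding $(\widehat{z}_n^{(c)})^2 = (c-1)^2 x_n^2 + 2c(c-1)\, x_n z_n + c^2 z_n^2$ and dividing the sum by $N$, I would invoke the strong law of large numbers, using independence of $x_n$ and $z_n$ together with $\mathbb{E}[z_n]=0$, $\mathbb{E}[z_n^2]=\sigma_z^2$, and $\mathbb{E}[x_n^2]=\mu_x^2 + \sigma_x^2$, to conclude
\[
\frac{1}{N}\sum_{n=1}^{N}\bigl[z_n^2 - (\widehat{z}_n^{(c)})^2\bigr] \xrightarrow{\text{a.s.}} (1-c^2)\sigma_z^2 - (1-c)^2(\mu_x^2 + \sigma_x^2).
\]
Factoring the right-hand side as $(1-c)\bigl[(1+c)\sigma_z^2 - (1-c)(\mu_x^2 + \sigma_x^2)\bigr]$ and solving for positivity on $c \in (0,1)$ gives the threshold $\mathcal{C} := \frac{\mu_x^2 + \sigma_x^2 - \sigma_z^2}{\mu_x^2 + \sigma_x^2 + \sigma_z^2}$. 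The hypothesis $\mu_x^2 + \sigma_x^2 > \sigma_z^2$ is precisely what makes $\mathcal{C} \in (0,1)$, so $(\mathcal{C},1)$ is a non-empty sub-interval.

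For any fixed $c$ in this sub-interval, the $z$-sum grows like $N\cdot\rho(c)$ for a strictly positive constant $\rho(c)$, almost surely, while the $W$-term remains $O(1)$. Hence the full log posterior ratio diverges to $+\infty$ almost surely, and in particular the event that it exceeds $0$ has probability tending to $1$, giving the claim.

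The main non-obvious step is identifying the correct direction of scaling: one must take $c < 1$, which inflates $\widehat{W}^{(c)} = W/c$ and therefore incurs a constant loss from \emph{any} reasonable $W$-prior, and then offset that $O(1)$ loss by a linear-in-$N$ gain in the $z$-prior. The condition $\mu_x^2 + \sigma_x^2 > \sigma_z^2$ is the exact regime in which the reparametrization on average shortens the latent codes under their Gaussian prior, so the asymmetry of the hypothesis is not a technical convenience but precisely the threshold at which the pathology emerges.
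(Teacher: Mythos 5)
Your proposal is correct and follows essentially the same route as the paper: exact likelihood invariance of the reparametrization, reduction of the log posterior ratio to a prior ratio, the observation that the $W$-prior term is $O(1)$ while the $z$-prior sum is the interesting part, the same algebraic expansion of $z_n^2 - (\widehat z_n^{(c)})^2$, and the same threshold $\mathcal{C} = \frac{\mu_x^2+\sigma_x^2-\sigma_z^2}{\mu_x^2+\sigma_x^2+\sigma_z^2}$. The one genuine difference is the final limit step: the paper normalizes by $N$ and applies Cantelli's inequality, which requires $\mathbb{V}[R_1^{(C)}] < \infty$ and hence (since the summands contain $x_n^2$) implicitly a finite fourth moment of $x$ beyond the stated hypotheses; your strong-law argument needs only integrability of $x^2$, $z^2$, and $xz$, which follows directly from the assumed finite first and second moments, so your version is marginally more economical in its assumptions while reaching the same (indeed slightly stronger, almost-sure) conclusion.
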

The proof of Theorem \ref{thm:post} is in Appendix \ref{sec:1node}. 
The theorem says that in the limit of infinite data, with probability approaching $1$, 
the posterior over $W, Z$ is biased away from the ground-truth model parameters. 
As such, the function corresponding to these alternative parameters generalizes poorly 
(that is, since $W$ and $\widehat{W}^{(C)}$ parameterize models with different slopes, 
$p(y | x, W) \neq p(y | x, \widehat{W}^{(C)})$, and $\widehat{W}^{(C)}$ thereby explains new data poorly). 
Furthermore, since these results hold for any bounded, data-independent prior over the weights,
they suggests that the bias in the joint posterior cannot be removed via model selection
(e.g. through a clever selection of priors). 


Next, using the exact same mechanism, 
we characterize non-identifiability and the resultant bias in the posterior for a 1-layer BNN+LV.

\subsection{Asymptotic Bias of 1-Layer BNN+LV Posterior Modes} \label{sec:1layer-bnnlv}

\paragraph{Non-Identifiability.}
Sources of non-identifiability increase when $f$ is a neural network.  Consider a single-output neural network, $f$, that takes as input $x$ and $z$ (represented as a concatenated vector in $\mathbb{R}^{2D}$) and has a single hidden layer containing $H$ hidden nodes. At the output node, we fix the activation to be the identity. Thus, the activation of the output node is computed as
\begin{align*}
a^\text{out} = (a^\text{hidden})^\top W^{\text{out}} + b^\text{out},
\end{align*}
where $W^{\text{out}}$ is a $H$-dimensional weight vector, $b^\text{out}$ is the bias and $a^\text{hidden}$ is the $H$-dimensional vector of activations of the hidden nodes. We can further expand $a^\text{hidden}$ as
\begin{align*}
a^\text{hidden} =g \left(W^x x + W^z z + b^\text{hidden}\right),
\end{align*}
where $W^x$ and $W^z$ are weight matrices in $\mathbb{R}^{H\times D}$, $b^\text{hidden}$ is a $H$-dimensional bias vector and $g$ is the activation function, applied element-wise. We characterize ways that the models parameters and the latent input variables $z$ are non-identifiable given a set of observed data. For any choice of diagonal matrix $S \in \mathbb{R}^{D\times D}$, vector $U\in \mathbb{R}^D$, and any factorization $W^z = RT$ where $T$ is in $\mathbb{R}^{D\times D}$, we can express $a^\text{hidden}$ in two \emph{equivalent} ways:
\begin{equation*}
\begin{split}
a^\text{hidden} &= g \left(W^x x + W^z z + b^\text{hidden}\right) = g \left(\widehat{W}^x x + \widehat{W}^z \widehat{z}  + \widehat{b}^\text{hidden}\right)
\end{split}
\end{equation*}
by setting:
\begin{equation} 
\begin{split} 
\widehat{W}^x = W^x + W^zS,\quad \widehat{W}^z =R, \quad 
\widehat{z} = Tz - TSx - U, \quad \widehat{b}^\text{hidden} = b + RU. \label{eq:dependence}
\end{split}
\end{equation}
Suppose that the output is observed with Gaussian noise: $y \sim \mathcal{N}(f(x, z; W), \sigma^2_\epsilon)$, 
where $W$ denotes the set of weights $(W^{\text{out}}, W^x, W^z, b^\text{out}, b^\text{hidden})$. 
Then, by only observing outputs generated by this network given observed input, 
one cannot identify ground-truth parameter and latent variable values under the likelihood: 
$p(y| f(x, z, W), \sigma^2_\epsilon) = p(y| f(x, \widehat{z}, \widehat{W}), \sigma^2_\epsilon)$. 
Just like in the case of a network with a single node, we show in Theorem \ref{thm:post_1layer} that 
\emph{the joint posterior is biased away from the ground-truth parameters, regardless of the choice of weight prior}.
\begin{restatable}[Asymptotic Bias of the Posterior Mode of 1-Layer BNN+LV]{theorem}{ThmSingleLayer} \label{thm:post_1layer}~ 
Fix any set of parameters $W$ and any bounded prior $p_W(W)$ on $W$.
Suppose that $\{x_1, \ldots, x_N\}$ is sampled i.i.d from $p_x$, with finite first and second moments, and that $\{z_1, \ldots, z_N\}$ are sampled i.i.d from $\mathcal{N}(0, \sigma^2_z \cdot I)$. There exists an alternate set of parameters $\left(\widehat{W}, \{\widehat{z}_n\}_{n=1}^N \right)$ such that the probability that these alternative parameters are valued as more likely than $\left(W, \{z_n\}_{n=1}^N \right)$ under the posterior approaches 1 as $N \to \infty$. 
\end{restatable}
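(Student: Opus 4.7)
The plan is to mirror the strategy used for the 1-node theorem, but exploit the explicit family of non-identifiability in Eq. \ref{eq:dependence} to build a very simple alternate parameter choice. I would instantiate that family with $T = cI$ for some fixed $c \in (0,1)$, and $S = 0$, $U = 0$. The condition $W^z = RT$ then forces $R = W^z/c$, so the transformation specializes to
\begin{align*}
\widehat{W}^x = W^x, \quad \widehat{W}^z = W^z/c, \quad \widehat{b}^{\mathrm{hidden}} = b^{\mathrm{hidden}}, \quad \widehat{z}_n = c\, z_n,
\end{align*}
with $W^{\mathrm{out}}$ and $b^{\mathrm{out}}$ left untouched. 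By Eq. \ref{eq:dependence}, this substitution preserves every hidden-layer pre-activation on every $x_n$, hence preserves $f(x_n, z_n; W) = f(x_n, \widehat{z}_n; \widehat{W})$ and the conditional likelihood $p(y_n \mid x_n, z_n, W)$.

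Because the likelihood terms cancel exactly, the log-posterior ratio reduces to a pure log-prior ratio:
\begin{align*}
\log \frac{p(\widehat{W}, \widehat{Z} \mid \mathcal{D})}{p(W, Z \mid \mathcal{D})} \;=\; \log\frac{p_W(\widehat{W})}{p_W(W)} \;+\; \sum_{n=1}^N \bigl[\log p_z(\widehat{z}_n) - \log p_z(z_n)\bigr].
\end{align*}
The first term is bounded in $N$: only $W^z$ is rescaled, and the ``bounded prior'' hypothesis caps $\log p_W(\widehat{W})$ by a constant independent of $N$, while $\log p_W(W)$ is a fixed finite number. Under the Gaussian latent prior $p_z = \mathcal{N}(0, \sigma_z^2 I)$, each summand in the second term equals $(1-c^2)\|z_n\|^2/(2\sigma_z^2)$, an i.i.d. sequence with strictly positive mean. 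The strong law of large numbers then gives
\begin{align*}
\frac{1}{N}\sum_{n=1}^N \bigl[\log p_z(\widehat{z}_n) - \log p_z(z_n)\bigr] \;\xrightarrow{\mathrm{a.s.}}\; \frac{(1-c^2)}{2\sigma_z^2}\,\mathbb{E}\lVert z\rVert^2 \;=\; \frac{(1-c^2)\,D}{2} \;>\; 0,
\end{align*}
so the latent-prior contribution grows linearly in $N$ and almost surely dominates the $O(1)$ weight-prior penalty. Consequently the total log-ratio diverges to $+\infty$, and the probability that $(\widehat{W}, \widehat{Z})$ has strictly higher posterior density than $(W, Z)$ tends to $1$ as $N \to \infty$.

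The hardest part is purely bookkeeping: one must check that $(\widehat{W}, \widehat{Z})$ lies in the support of the prior, verify that the chosen $(T, S, U, R)$ is a valid instance of the non-identifiability in Eq. \ref{eq:dependence}, and carefully separate the $N$-independent weight term from the $N$-growing latent sum (a Slutsky-type argument). Unlike the 1-node case, no moment condition of the form $\mu_x^2 + \sigma_x^2 > \sigma_z^2$ is required here: by setting $S = 0$ the substitution never mixes $z$ with $x$, and the added freedom of an independent weight matrix $W^z$ — which can absorb the rescaling $1/c$ — is precisely what removes that restriction. Finally, the same transformation makes $\|\widehat{W}^z\|$ arbitrarily large as $c \to 0$, so the fitted function will generalize poorly on test inputs (where fresh $z^* \sim p(z)$ is not rescaled), delivering the ``biased towards models that generalize poorly'' interpretation stated in the theorem.
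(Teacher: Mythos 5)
Your proposal is correct, and it follows the same overall skeleton as the paper's proof: instantiate the non-identifiability family of Equation \ref{eq:dependence}, observe that the likelihood terms cancel exactly, and show that the per-observation log-prior gain on the latent variables has strictly positive mean so that the sum dominates the $O(1)$ weight-prior term. The difference is in the instantiation. The paper takes $S = I$, $U = 0$, $T = \diag(t)$ with $R = W^z T^{-1}$, so that $\widehat{z}_n = Tz_n - Tx_n$ mixes the input into the latent variable; positivity of the mean then requires the constraint $|t_d| < \sqrt{\sigma_z^2/(\mu_{x_d}^2 + \sigma_{x_d}^2 + \sigma_z^2)}$, and the concentration step is done via Cantelli's inequality, whose variance term involves fourth moments of $x$ that the theorem does not actually assume. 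Your choice $S = 0$, $T = cI$ keeps $x$ out of $\widehat{z}_n$ entirely, so each summand is $(1-c^2)\lVert z_n\rVert^2/(2\sigma_z^2)$, a nonnegative i.i.d.\ sequence with all moments finite; the strong law applies without any side condition, which is both simpler and more faithful to the stated hypotheses (only first and second moments of $x$). Two caveats are worth flagging. First, as in the paper's proof, you implicitly need $p_W(W) > 0$ and $p_W(\widehat{W}) > 0$ for the $O(1)$ claim on the weight-prior ratio; this holds for the Gaussian priors the paper uses but is not literally implied by ``bounded prior.'' Second, your construction only violates the ``$z$ drawn from the prior'' assumption (the $\widehat{z}_n$ have variance $c^2\sigma_z^2$) and not the $z$--$x$ independence assumption, so while it suffices for the theorem as stated and still yields poor generalization (inflated aleatoric noise through $\widehat{W}^z = W^z/c$), it does not by itself exhibit the input-dependence pathology that the paper's $S=I$ construction produces and that motivates the first NCAI constraint.
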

The proof for Theorem \ref{thm:post_1layer} is in Appendix \ref{sec:1layer_proof}. 
As in the 1-node BNN+LV case, the bias in the joint posterior cannot be removed via model selection
(e.g. through a clever selection of bounded, data-independent priors over the weights and latent variables). 

Note that the form of non-identifiability in 1-layer BNN+LV models is not limited to the cases we describe above. In Appendix \ref{sec:y_dep}, we analyze another form of non-identifiability, in which the latent variable compensates for any scaling of $W^{\text{out}}$ by encoding the training outputs $y$. In practice, we find that poor posterior predictive distributions are always associated with the latent variable encoding for either the input $x$ or the output $y$, both of which we quantify by measuring the mutual information of the inferred $z$'s and the training data (see Section \ref{sec:exp}). Lastly, we note that our characterization of non-identifiability can be easily extended to multi-layer networks, which have, at the very least, the types of non-identifiability we describe above at the input layer. 

We next demonstrate that the weights and latent inputs most likely under the joint posterior
violate our generative modeling assumptions---an insight that we will exploit in Section \ref{sec:method} 
to develop a new method to mitigate the effects of the asymptotic bias on approximate inference.

\begin{figure*}[p!]
    \centering
    
    \begin{subfigure}[t]{0.48\textwidth}
        \centering
        \small        
        \includegraphics[width=1.0\textwidth]{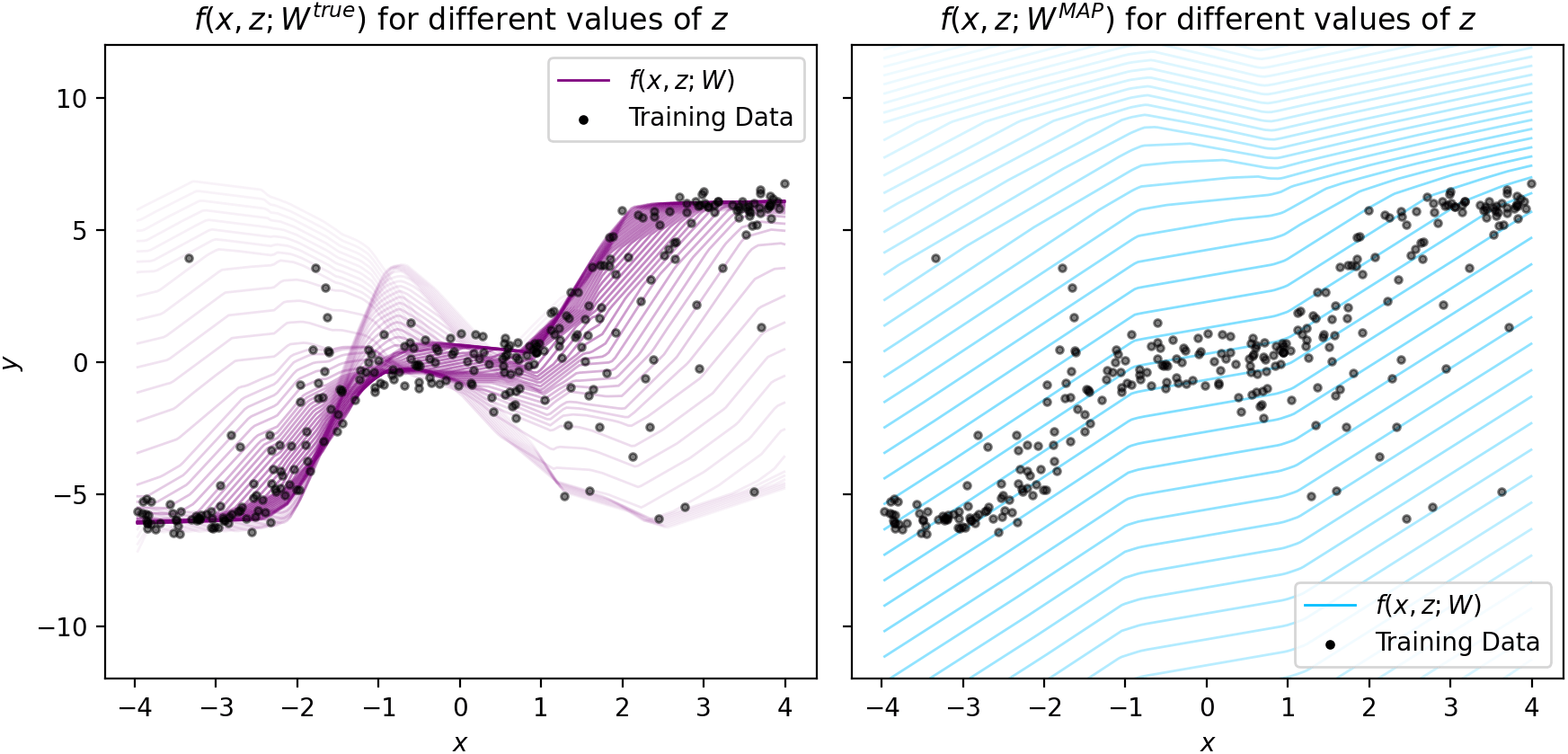} 
        \includegraphics[width=1.0\textwidth]{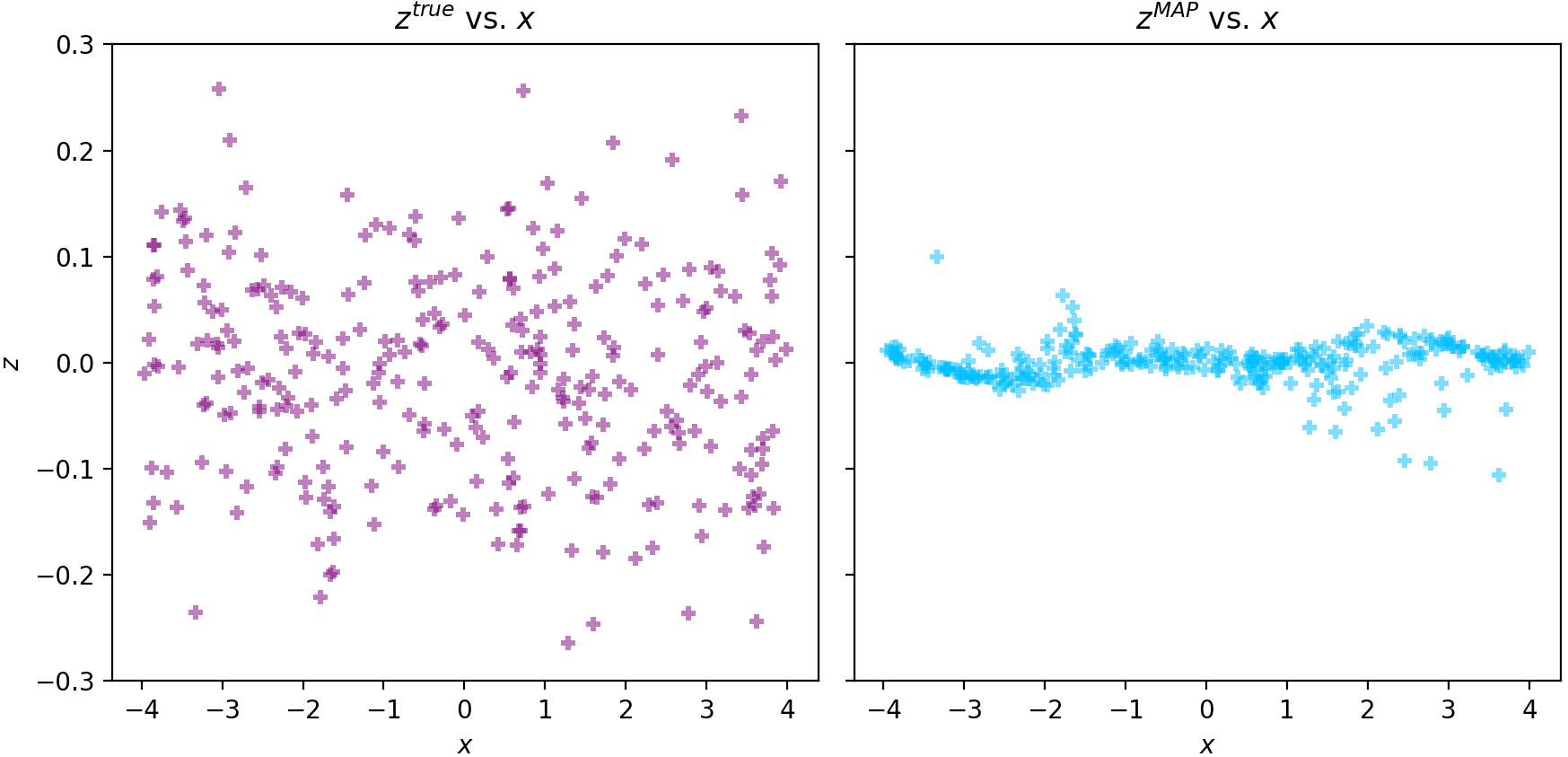} 
        \caption{\textbf{Heavy-Tail:} true (purple) vs. MAP (blue)}
    \end{subfigure}%
    ~ 
    \begin{subfigure}[t]{0.48\textwidth}
        \centering
        \small
        \includegraphics[width=1.0\textwidth]{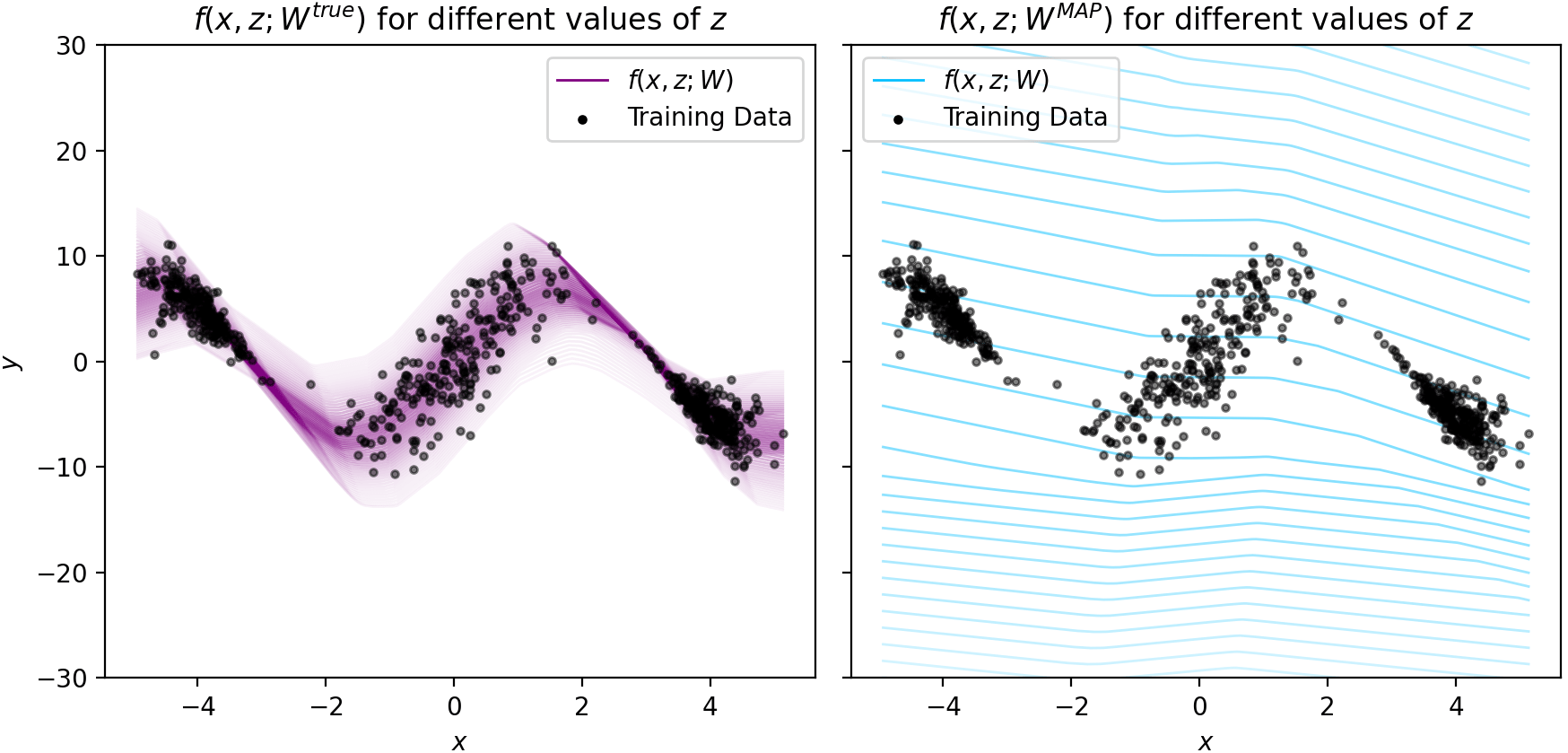}
        \includegraphics[width=1.0\textwidth]{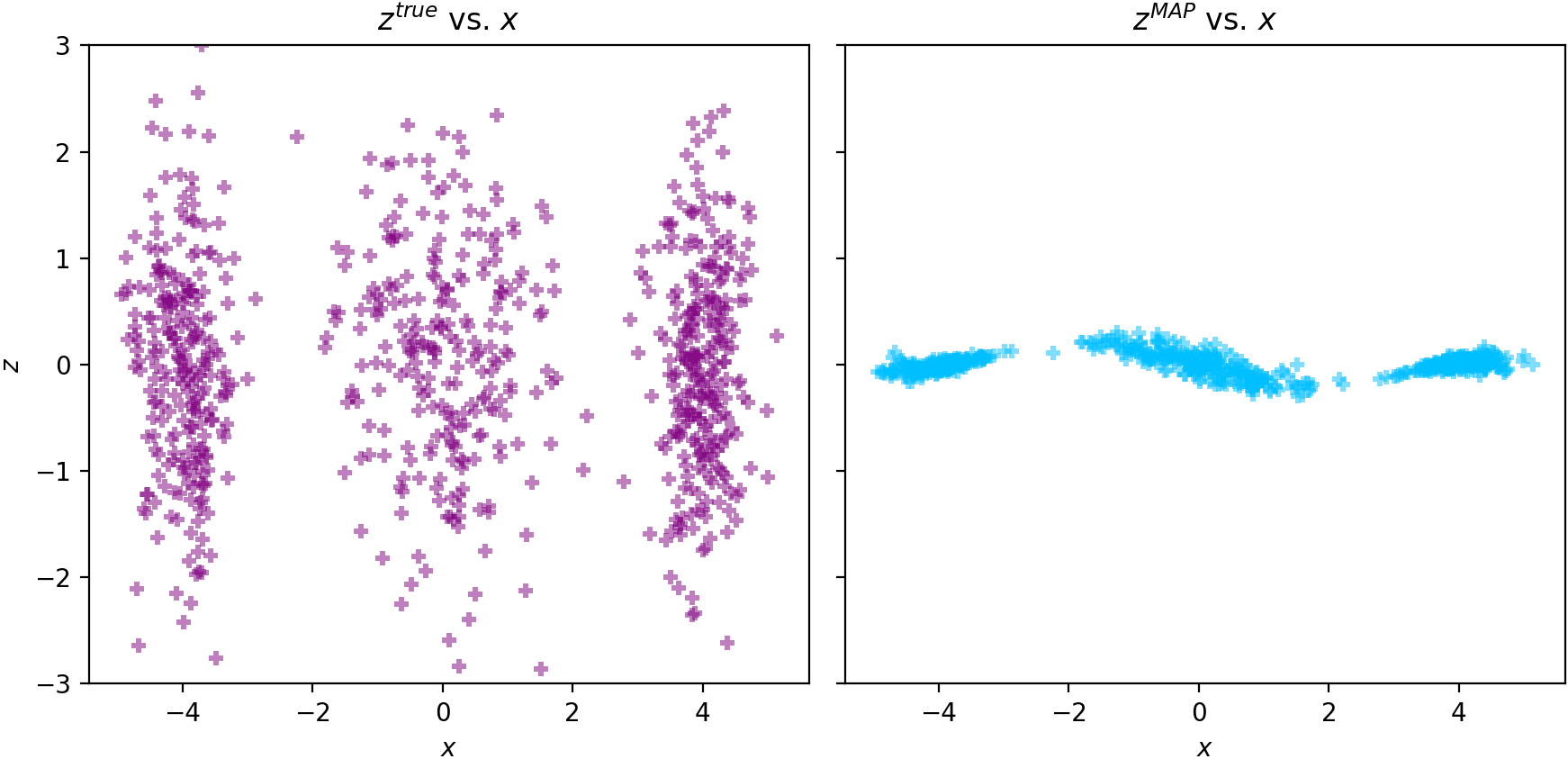}  
        \caption{\textbf{Depeweg:} true (purple) vs. MAP (blue)}
    \end{subfigure}%
    
    \vspace{0.4cm}
    \begin{subfigure}[t]{0.48\textwidth}
        \centering
        \small
        \includegraphics[width=1.0\textwidth]{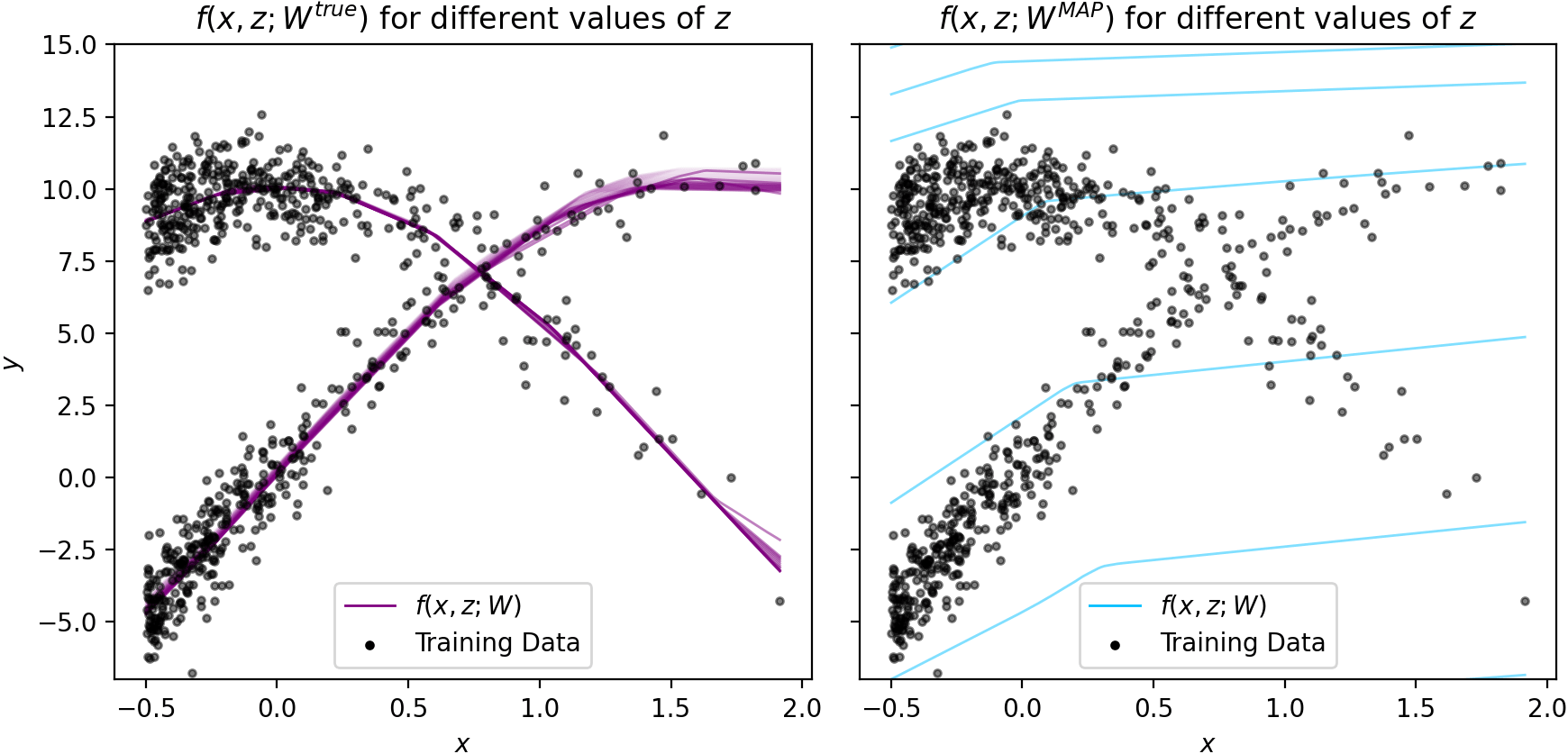}
        \includegraphics[width=1.0\textwidth]{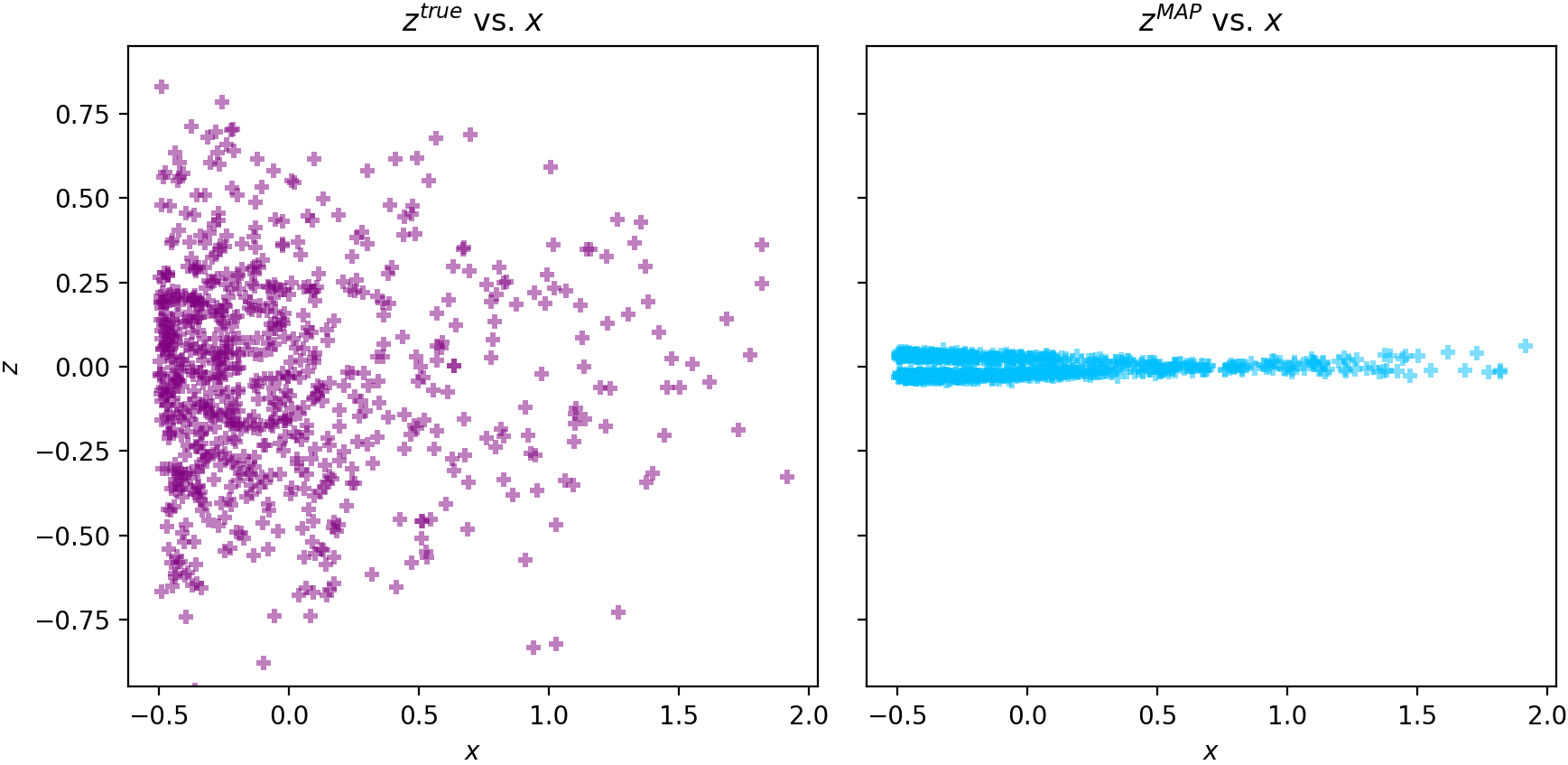}  
        \caption{\textbf{Bimodal:} true (purple) vs. MAP (blue)}
    \end{subfigure}%
    ~
    \begin{subfigure}[t]{0.48\textwidth}
    \footnotesize
    \begin{tabular}{l||c|c}
    & \multicolumn{2}{c}{log posterior of} \\
    & $W^\text{true}, Z^\text{true}$ & $W^\text{MAP}, Z^\text{MAP}$ \\ \hline \hline
    \textbf{Heavy-Tail} & $-311.2 \pm 12.9$  & $\bm{53.0 \pm 0.4}$   \\ \hline
    \textbf{Depeweg}    & $-1719.5 \pm 20.7$ & $\bm{-938.7 \pm 0.2}$  \\ \hline
    \textbf{Bimodal}    & $-2516.1 \pm 22.3$ & $\bm{-1560.2 \pm 0.0}$ \\ 
    \end{tabular}
    \caption{Comparison of log-posterior evaluated at the true parameter $W^\text{true}, Z^\text{true}$ vs. the
    MAP parameters $W^\text{MAP}, Z^\text{MAP}$. The posterior is significantly higher at the MAP.}
    \label{tab:map}
    \end{subfigure}%
    
    \caption{\textbf{At the MAP of the joint posterior, $W$ explains the data poorly and $Z$ memorizes the data.} Top-row of (a)-(c): We compare the ground-truth function (purple) vs. function learned via MAP estimate over $W, Z$ (blue) (with $\sigma^2_w = 10.0$). The functions are plotted for every value of $z$ on $\lbrack -3 \sigma_z, 3 \sigma_z \rbrack$ with opacity proportional to $p(z)$. Bottom row of (a)-(c):  $Z^\text{true}$ and $Z^\text{MAP}$ are plotted vs. $X$. The MAP estimated parameters (i) have \emph{significantly} higher log-posterior probability than the ground-truth, (ii) $Z^\text{MAP}$ memorizes the data (violating the assumptions in Section \ref{sec:assumption-violation}), and (iii) as a result, the learned functions explain the observed data poorly and over-estimate aleatoric uncertainty.}
    \label{fig:map-pathology}
\end{figure*}

\subsection{The Joint Posterior Mode Violates Generative Modeling Assumptions} \label{sec:assumption-violation}

Generally, when assuming a probabilistic model, we hope that in the limit of infinite data,
inference recovers model parameters that satisfy our generative modeling assumptions. 
For example, when assuming a BNN, we hope that in the limit of infinite data,
the MLE and MAP over the weights parameterize the true function that generated the data.
For the BNN+LV model, however, the parameters given by the MAP of the joint posterior 
$W^\text{MAP}, Z^{\text{MAP}}$ do not satisfy our generative modeling assumptions. 
To illustrate this, we compare $W^\text{MAP}, Z^{\text{MAP}}$ 
with the ground-truth data generating parameters, $W^\text{true}, Z^{\text{true}}$;
under our ground-truth data-generating process from Equation \ref{eqn:gen_model}, 
$Z^{\text{true}}$ satisfies two modeling assumptions:
\begin{enumerate}
\item[] \textbf{Assumption 1:} $z^\text{true}$ is independent of $x$---that is, $p(z, x)$ factorizes.
\item[] \textbf{Assumption 2:} $z^\text{true}$ is distributed like the prior $p(z)$. 
\end{enumerate}
In contrast, using our characterization of likelihood non-identifiability from Equation \ref{eq:dependence}, 
in the limit of infinite data,
the posterior always prefers an alternative set of parameters $\widehat{W}, \widehat{Z}$
that violate the above modeling assumptions.
Specifically in Equation \ref{eq:dependence}, each $\widehat{z}_n$ becomes directly dependent on the inputs $x_n$,
(or indirectly dependent on $x_n$ through $y_n$---see Appendix \ref{sec:y_dep}), 
thereby violating assumption 1. 
Next, since our characterization, $\widehat{z}_n$ is dependent on $x_n$ (which may not be drawn from a Gaussian),
$\widehat{z}_1, \dots, \widehat{z}_N$ may not be distributed like the prior, thereby violating assumption 2.

\paragraph{The two assumptions are independent of one another.} 
We note that violating one assumption does not necessarily imply violating the other.
Consider for instance $\widehat{z}_1, \dots, \widehat{z}_N$ that are distributed like $p(z)$ 
(and hence do not violate assumption 2),
but that are dependent on $x_1, \dots, x_N$ 
(e.g. $\widehat{z}_1, \dots, \widehat{z}_N$ are sorted such that small 
$\widehat{z}$'s are paired with small $x$'s and vice versa),
thus violating assumption 1.
Alternatively, consider $\widehat{z}_1, \dots, \widehat{z}_N$ that are independent of the $x$'s, 
but that are not distributed like the $p(z)$ 
(e.g. $\widehat{z}_1, \dots, \widehat{z}_N$ are still distributed like a Gaussian but with a different variance),
thereby only violating assumption 1. 
We also note that, while there may be other modeling assumption that the joint posterior mode violates,
these are the two assumptions we found to be important empirically
and defer investigation of other modeling assumption to future work. 

\paragraph{} On three synthetic data-sets (for which we know the ground-truth),
we next empirically demonstrate that under the joint posterior mode,
the weights parameterize a function that explains the observed data poorly,
and the latent inputs violate the above modeling assumptions.
In Section \ref{sec:inference-pathologies} we demonstrate empirically that,
since the mean-field variational posterior is closer to the MAP than to the ground-truth solution,
it similarly violates these two modeling assumptions
and therefore results in posterior predictives that explain the data poorly and generalize poorly.
Then, in Section \ref{sec:method}, we use this insight to propose a new inference method
that enforces these modeling assumptions explicitly.

\subsection{Empirical Demonstration of Asymptotic Bias of Joint Posterior Mode} \label{sec:empirical-MAP}

In Figure \ref{fig:map-pathology}, we empirically demonstrate that 
(1) the posterior mode $W^\text{MAP}, Z^\text{MAP}$,  
is not located at the ground-truth parameters $W^\text{true}, Z^\text{true}$,
(2) that $W^\text{MAP}$ parametrizes functions that generalize poorly 
and misestimate aleatoric uncertainty, by putting mass where there is no data,
and (3) that $Z^\text{MAP}$ violates the two modeling assumptions from Section \ref{sec:assumption-violation}. 
In these experiments, we optimize for the MAP using gradient descent, 
selecting the best solution across 9 random initializations / 1 initialization at the ground-truth $W^\text{true}, Z^\text{true}$.
In Table \ref{tab:map}, we see that the observed data has a significantly higher log posterior probability 
under the MAP solution $W^\text{MAP}, Z^\text{MAP}$ 
than it does under the ground-truth parameters $W^\text{true}, Z^\text{true}$,
confirming that indeed $W^\text{true}, Z^\text{true} \neq W^\text{MAP}, Z^\text{MAP}$. 
In Figure \ref{fig:map-pathology}, we visualize the predictive distribution corresponding to 
$W^\text{MAP}$ for data drawn from the same distribution as the training data. 
For each of the three data-sets, we see that the  $W^\text{MAP}$ parametrizes a function 
that puts mass where there is no data (i.e. over-estimates aleatoric uncertainty),
thereby explaining the observed data poorly.
Correspondingly, we also see that the $Z^\text{MAP}$ memorized the data 
and exhibits a distribution different than the prior, 
violating the two modeling assumptions from Section \ref{sec:assumption-violation}. 

We next show that because mean-field variational inference prefers solutions closer to the MAP than to the ground-truth, 
it suffers from the same issues at the MAP---mean-field VI yields approximations of $p(W | \mathcal{D})$ that explain the observed data poorly because they violate the generative modeling assumptions.

\section{Effect of Asymptotic Bias of the BNN+LV Posterior Mode on Variational Inference} \label{sec:inference-pathologies}

Whereas in Section \ref{sec:model} we focused on the asymptotic bias of the BNN+LV joint posterior mode,
in this section we unfold the consequence of this asymptotic bias on mean-field variational inference. 
We show that because solutions returned by mean-field variational inference
are in practice closer to the MAP than they are to the ground-truth parameters, 
they violate the assumptions made in the generative model.
As a result, they correspond to posterior predictives that under-fit the observed data and misestimate uncertainty.

\paragraph{Mean-field VI returns posterior predictives that under-fit the data and violate generative modeling assumptions.} 
We follow the standard practice of:
initializing the parameters of the mean-field variational family (Equation \ref{eq:mfvf}) 
using a random initialization (described in Appendix \ref{sec:exp_details}),
maximizing the ELBO (in Equation \ref{eq:mf-elbo}), 
and selecting models with the highest validation log-likelihood over 10 random restarts.
In Figure \ref{fig:1d-qualitative} (blue column), 
we see that traditional inference posterior predictives that under-fit the data and misestimate the noise.
Furthermore, the figure shows that the latent variables recovered from mean-field VI exhibit strong dependence on the data
(i.e. memorized the data), thereby violating our generative modeling assumptions (Section \ref{sec:assumption-violation}). 
Whereas in Figure \ref{fig:1d-qualitative} we offer qualitative results, 
in Section \ref{sec:exp} we demonstrate that this pathology occurs on a variety of synthetic and real data-sets
both qualitatively and quantitatively.  
We next argue that traditional inference results models that under-fit the data and violate our generative modeling assumptions because, in practice, traditional inference yields posterior distributions closer to the MAP than to the ground-truth. 

\definecolor{FixedGT}{RGB}{190, 130, 190}
\definecolor{GT}{RGB}{219, 184, 219}
\definecolor{MAP}{RGB}{132, 192, 255}

\begin{table}[t]
  \begin{subtable}[t]{1.0\textwidth} 
  \centering
  \small
  \begin{tabular}{l||c|c|c|c|c}
  & \textbf{Draw 1} & \textbf{Draw 2} & \textbf{Draw 3} & \textbf{Draw 4} & \textbf{Draw 5} \\ \hline \hline
  \textbf{Lowest ELBO} &         \cellcolor{FixedGT} Fixed @ GT &  \cellcolor{FixedGT} Fixed @ GT &  \cellcolor{FixedGT} Fixed @ GT &  \cellcolor{FixedGT} Fixed @ GT &  \cellcolor{FixedGT} Fixed @ GT \\ \hhline{~-----}
  \multicolumn{1}{c||}{\multirow{3}{*}{$\bm{\bigg\downarrow}$}} &        Random &      Random &          \cellcolor{GT} GT &          Random &      \cellcolor{GT} GT \\ \hhline{~-----}
  &          \cellcolor{GT} GT &          \cellcolor{GT} GT &      Random &      \cellcolor{GT} GT &          Random  \\ \hhline{~-----}
  &            $\text{NCAI}_{\lambda=0}$ &        \cellcolor{MAP} MAP &        $\text{NCAI}_{\lambda=0}$ &        $\text{NCAI}_{\lambda=0}$ &         $\text{NCAI}_{\lambda=0}$  \\ \hhline{~-----}
  \textbf{Highest ELBO} &  \cellcolor{MAP} MAP &        $\text{NCAI}_{\lambda=0}$ &        \cellcolor{MAP} MAP &        \cellcolor{MAP} MAP &       \cellcolor{MAP} MAP \\ \hline
  \end{tabular}
         
  \caption{Bimodal data-set}
  \label{fig:bimodal-ranking}
  \end{subtable}
  
  \begin{subtable}[t]{1.0\textwidth} 
  \centering
  \small
  \begin{tabular}{l||c|c|c|c|c}
  & \textbf{Draw 1} & \textbf{Draw 2} & \textbf{Draw 3} & \textbf{Draw 4} & \textbf{Draw 5} \\ \hline \hline
  \textbf{Lowest ELBO} &        \cellcolor{FixedGT} Fixed @ GT &  \cellcolor{FixedGT} Fixed @ GT &  \cellcolor{FixedGT} Fixed @ GT &  \cellcolor{FixedGT} Fixed @ GT &  \cellcolor{FixedGT} Fixed @ GT \\ \hhline{~-----}
  \multicolumn{1}{c||}{\multirow{3}{*}{$\bm{\bigg\downarrow}$}} &        \cellcolor{GT} GT &          \cellcolor{GT} GT &          \cellcolor{GT} GT &          \cellcolor{GT} GT &         \cellcolor{GT} GT \\ \hhline{~-----}
  &         \cellcolor{MAP} MAP &        \cellcolor{MAP} MAP &        \cellcolor{MAP} MAP &        \cellcolor{MAP} MAP &        \cellcolor{MAP} MAP  \\ \hhline{~-----}
  &            Random &      Random &      Random &      Random &      Random  \\ \hhline{~-----}
  \textbf{Highest ELBO} &  $\text{NCAI}_{\lambda=0}$ &        $\text{NCAI}_{\lambda=0}$ &        $\text{NCAI}_{\lambda=0}$ &        $\text{NCAI}_{\lambda=0}$ &        $\text{NCAI}_{\lambda=0}$ \\ \hline
  \end{tabular}
         
  \caption{Heavy-Tail data-set}
  \label{fig:heavytail-ranking}
  \end{subtable}
  
  \begin{subtable}[t]{1.0\textwidth} 
  \centering
  \small
  \begin{tabular}{l||c|c|c|c|c}
  & \textbf{Draw 1} & \textbf{Draw 2} & \textbf{Draw 3} & \textbf{Draw 4} & \textbf{Draw 5} \\ \hline \hline
  \textbf{Lowest ELBO} &        \cellcolor{FixedGT} Fixed @ GT & \cellcolor{FixedGT} Fixed @ GT &      Random &    \cellcolor{FixedGT}  Fixed @ GT &          \cellcolor{FixedGT} Fixed @ GT \\ \hhline{~-----}
  \multicolumn{1}{c||}{\multirow{3}{*}{$\bm{\bigg\downarrow}$}} &        Random &      Random &  \cellcolor{FixedGT} Fixed @ GT &  Random &  Random \\ \hhline{~-----}
  &          NCAI &        \cellcolor{GT}  GT &         \cellcolor{GT} GT &         \cellcolor{GT} GT &      \cellcolor{GT} GT  \\ \hhline{~-----}
  &           \cellcolor{GT} GT &        \cellcolor{MAP} MAP &        \cellcolor{MAP} MAP &        NCAI &         \cellcolor{MAP} MAP  \\ \hhline{~-----}
  \textbf{Highest ELBO} &  \cellcolor{MAP} MAP &        NCAI &        NCAI &        \cellcolor{MAP} MAP &        NCAI \\ \hline
  \end{tabular}
         
  \caption{Depeweg data-set}
  \label{fig:depeweg-ranking}
  \end{subtable}
  
  \caption{\textbf{Mean-field VI returns solutions closer to the MAP than to the ground-truth parameters.} For each of the tables above (each corresponding to a different synthetic data-set, detailed in Appendix \ref{sec:datasets}), and for each initialization scheme (Section \ref{sec:inference-pathologies}), we initialize and run mean-field VI 10 times, selecting the run with the highest ELBO. We then sort the random initialization from lowest to highest ELBO. These tables show that, both across different data-sets, as well as across different draws from the same data-generating process, MAP initialization results in a higher ELBO than initialization at the ground-truth (GT). Even more problematically, it results in a higher ELBO than the ELBO evaluated directly at the ground-truth (Fixed @ GT).}
    \label{tab:elbo-rankings}
\end{table}

\paragraph{Mean-field VI may return solutions closer to the MAP than to the ground-truth.}
To show that mean-field VI typically returns solutions closer to the MAP than to the ground-truth,
we show that when initialized at the MAP, mean-field VI yields a higher ELBO than when fixed at the ground-truth, 
or when initialized at the ground-truth (i.e. the ELBO prefers models initialized at the MAP).
Correspondingly, MAP-initialized inference also yields posterior predictives that under-fit the data. 
This experiment suggests that in practice, the optima of the ELBO commonly found via gradient descent
share more characteristics with the problematic MAP of the joint posterior
than with the ground-truth data-generating parameters.

In this experiment, 
we initialize the variational parameters using each of the following schemes (after which we optimize the ELBO):
\begin{itemize}
\item Ground-Truth (GT): We initialize the variational means to $W^\text{true}, Z^\text{true}$. Holding the variational means fixed, we optimize the variational variances until convergence. 
\item MAP: We compute the MAP of $p(W, Z | \mathcal{D})$ via gradient descent, selecting the best of 10 random restarts (1 initialized at $W^\text{true}, Z^\text{true}$, and 9 with $W^\text{true}$ and with $Z$ sampled randomly from the prior). We then initialize the variational means to $W^\text{MAP}, Z^\text{MAP}$. Holding the variational means fixed, we optimize the variational variances until convergence. 
\item Random: We randomly initialize the variational parameters.
\item $\text{NCAI}_{\lambda=0}$: For completeness, we even use the initialization we propose later in this work (in Section \ref{sec:method}).
\end{itemize}
For each of the above initialization schemes, we initialize and run mean-field VI 10 times, 
selecting the models with the highest ELBO.
We also compare the ELBO optimized from each of the above initializations
to the ELBO evaluated at the ground-truth initialization (``Fixed @ GT'').
We sort the resultant ELBOs from lowest to highest,
and repeat this whole experiment 5 times (for 5 draws of data-sets from each synthetic data-generating process).

Table \ref{tab:elbo-rankings} shows the results.
If we were to \emph{only} consider the two ground-truth initializations (``Fixed @ GT'' and ``GT'')
and the MAP initialization, we find that we find that \emph{across all} three synthetic data-sets
\emph{and across all} 5 draws of each of these data-sets,
the MAP initialization yields a higher ELBO
(i.e. the relative ordering of blue, light purple and dark purple in \emph{all three} tables is always the same).
Correspondingly, inference initialized at the MAP yields posterior predictives that fit the data poorly.
For instance, in the Bimodal data-set (Figure \ref{fig:bimodal-global-optima-viz}),
in draws 2, 3 and 4, the posterior predictive places mass where there is no data when $x > 0.5$,
in draw 1, the model places a little more mass above the observed data at $x = 0.75$,
and in draw 5, the posterior predictive mean is significantly biased relative to the ground-truth. 
This result helps us explain why in practice, mean-field VI for BNN+LV yields poor quality models:
it shows that there exist several different sets of variational parameters $\phi$ 
that are preferred by the ELBO over the ground-truth parameters, 
but that retain undesirable properties of the joint posterior mode. 

Of course, in practice we cannot use the ground-truth initializations,
and we do not want to use the MAP initialization (for reasons mentioned here),
so we are left with only one option: random initialization.
However, as already shown in Figure \ref{fig:1d-qualitative},
random initialization also yield poor results in practice.
So in practice, what initialization should we use?
In Section \ref{sec:method}, we propose a novel and practical initialization, $\text{NCAI}_{\lambda=0}$,
which we find to be empirically effective.
We include this initialization, as well as the random initialization, in Table \ref{tab:elbo-rankings}
because these two initializations will help us explore whether
these undesirable optima of the ELBO are local or global, discussed next.

\begin{figure*}[p]
\vspace{-0.5cm}

  \begin{subfigure}[t]{1.0\textwidth}
\centering
\small
\begin{tabular}{l||c|c|c|c|c}
\textbf{Initialization} & \textbf{Draw 1} & \textbf{Draw 2} & \textbf{Draw 3} & \textbf{Draw 4} & \textbf{Draw 5} \\ \hline \hline
\textbf{Fixed @ GT} & $-3662.098$ & $-3663.944$ & $-3699.185$ & $-3671.299$ & $-3698.769$ \\ \hline
\textbf{Ground-Truth}   & $-2117.360$ & $-2102.342$      & $-2151.512$ & $-2127.975$ & $-2152.788$ \\ \hline
\textbf{Random}         & $-2150.645$ & $-2120.855$      & $-2146.312$ & $-2139.177$ & $-2150.549$ \\ \hline
$\bm{\textbf{NCAI}_{\lambda=0}}$           & $-2093.275$ & $\bm{-2086.503}$ & $-2115.208$ & $-2104.083$ & $-2113.348$ \\ \hline
\textbf{MAP} & $\bm{-2087.035}$ & $-2088.970$ & $\bm{-2105.432}$ & $\bm{-2083.557}$ & $\bm{-2082.548}$ \\ 
\end{tabular}
         
         \caption{The highest ELBO (of 10 restarts) using different initialization schemes. MAP initialization results in the highest ELBO in 4 of the 5 draws.}
         \label{fig:bimodal-global-optima-tab}
    \end{subfigure}
    
     \begin{subfigure}[t]{1.0\textwidth}
     	\vspace{0.5cm}
     
     	\begin{subfigure}[t]{0.01\textwidth}
        \centering
        \small
    	\rotatebox[origin=l]{90}{\hspace{65pt}$y$}
   	\end{subfigure}
    	 \begin{subfigure}[t]{0.320\textwidth}
    	 \includegraphics[width=\textwidth]{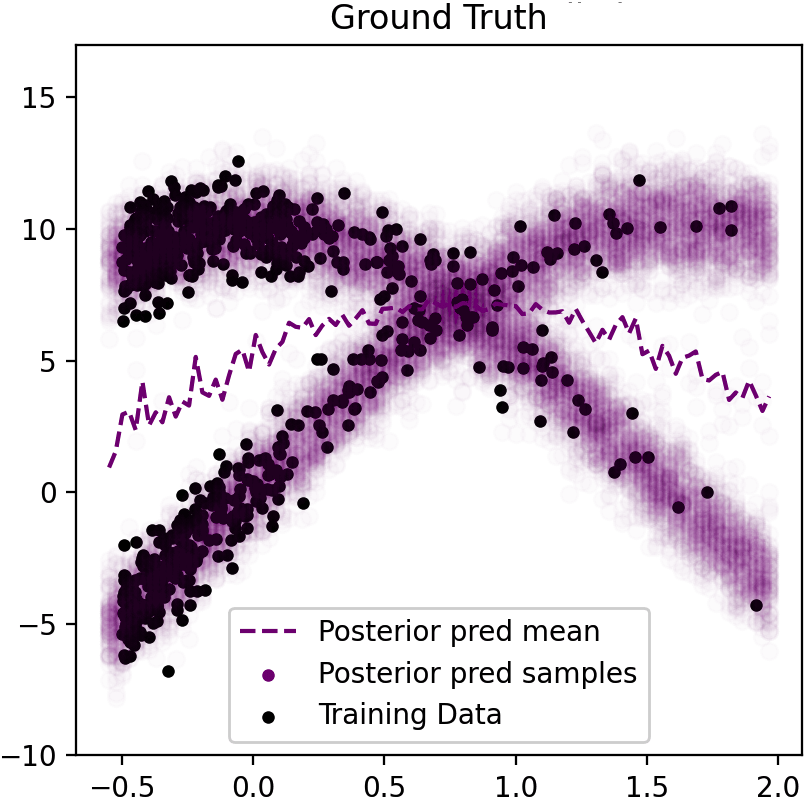}
   	 \end{subfigure}
	 \begin{subfigure}[t]{0.321\textwidth}
    	 \includegraphics[width=\textwidth]{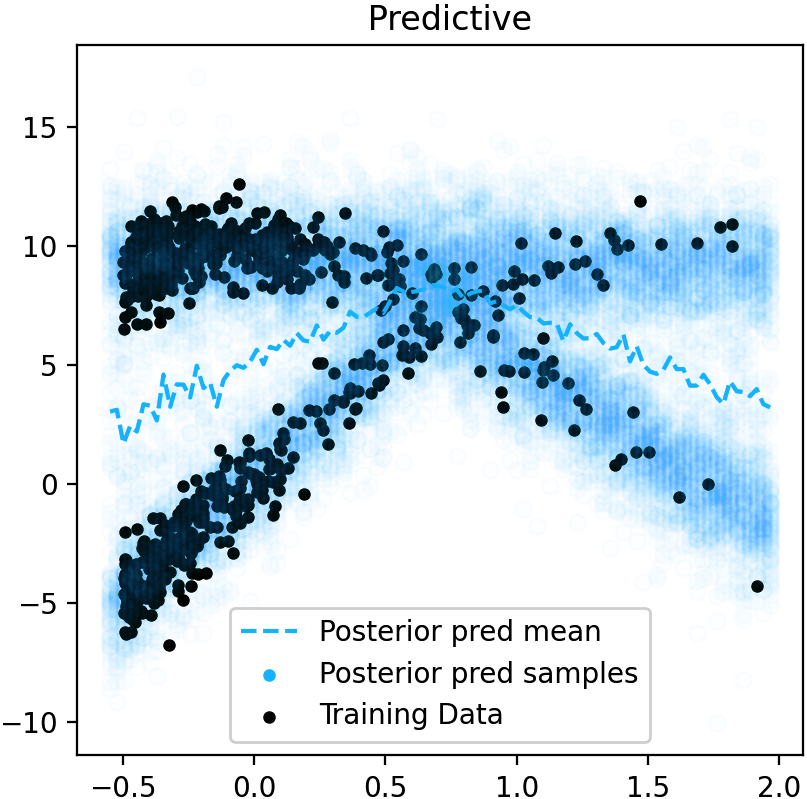}
   	 \end{subfigure}
	 \begin{subfigure}[t]{0.324\textwidth}
    	 \includegraphics[width=\textwidth]{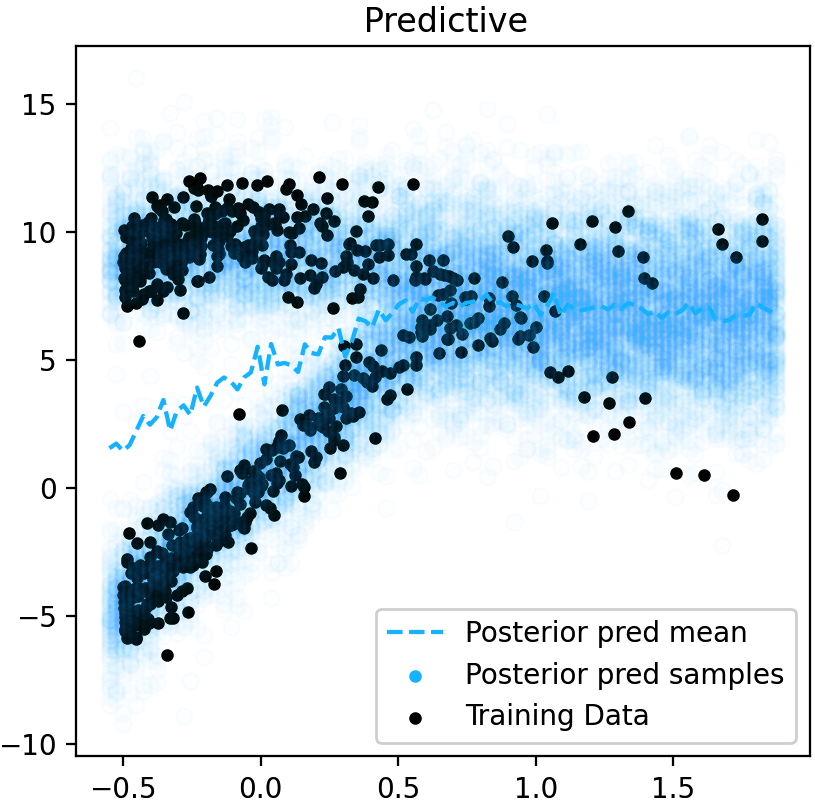}
   	 \end{subfigure}
	
	\begin{subfigure}[t]{0.01\textwidth}
        \centering
        \small
   	\end{subfigure}
	 \begin{subfigure}[t]{0.320\textwidth}
        \centering
        \small
        \vspace{-0.3cm}
        $x$
   	 \end{subfigure}
	 \begin{subfigure}[t]{0.321\textwidth}
        \centering
        \small
        \vspace{-0.3cm}
        $x$
   	 \end{subfigure}
	 \begin{subfigure}[t]{0.324\textwidth}
        \centering
        \small
        \vspace{-0.3cm}
        $x$
   	 \end{subfigure}
	
	\begin{subfigure}[t]{0.01\textwidth}
        \centering
        \small
   	\end{subfigure}
	 \begin{subfigure}[t]{0.320\textwidth}
	 \center
    	 Fixed @ GT
   	 \end{subfigure}
	 \begin{subfigure}[t]{0.321\textwidth}
	 \center
    	 Draw 1
   	 \end{subfigure}
	 \begin{subfigure}[t]{0.324\textwidth}
	 \center
    	 Draw 2
   	 \end{subfigure}
	 
	 \vspace{0.65cm}
	 
	 \begin{subfigure}[t]{0.01\textwidth}
        \centering
        \small
    	\rotatebox[origin=l]{90}{\hspace{65pt}$y$}
   	\end{subfigure}
	 \begin{subfigure}[t]{0.321\textwidth}
    	 \includegraphics[width=\textwidth]{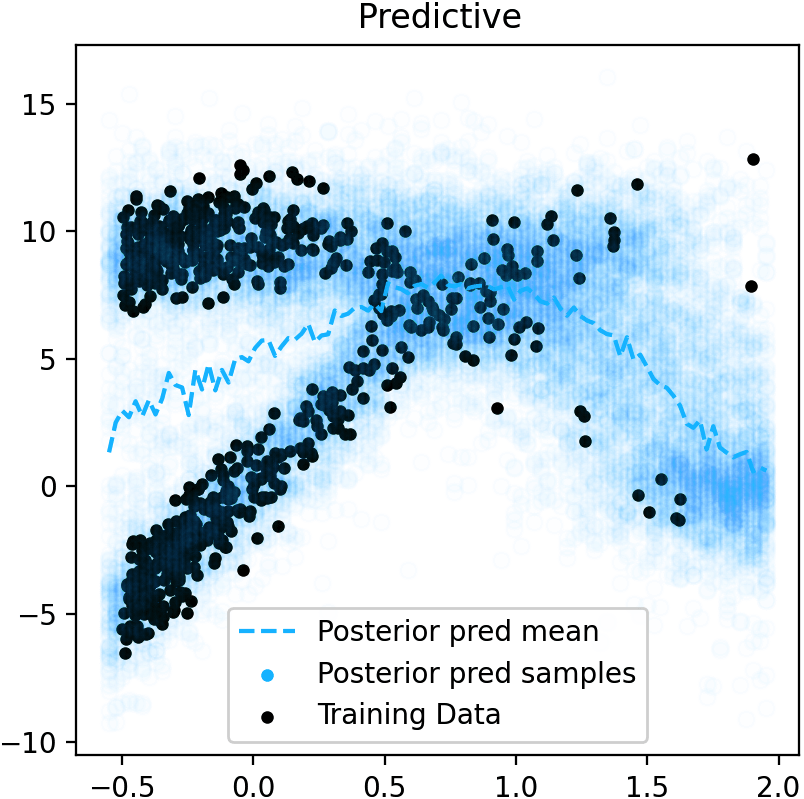}
   	 \end{subfigure}
	 \begin{subfigure}[t]{0.320\textwidth}
    	 \includegraphics[width=\textwidth]{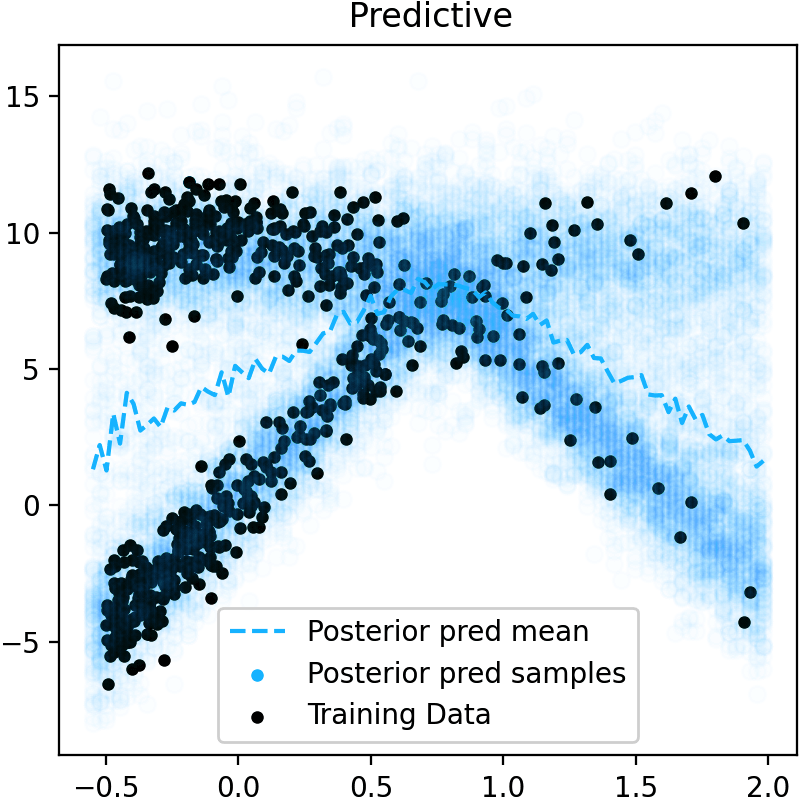}
   	 \end{subfigure}
	 \begin{subfigure}[t]{0.324\textwidth}
    	 \includegraphics[width=\textwidth]{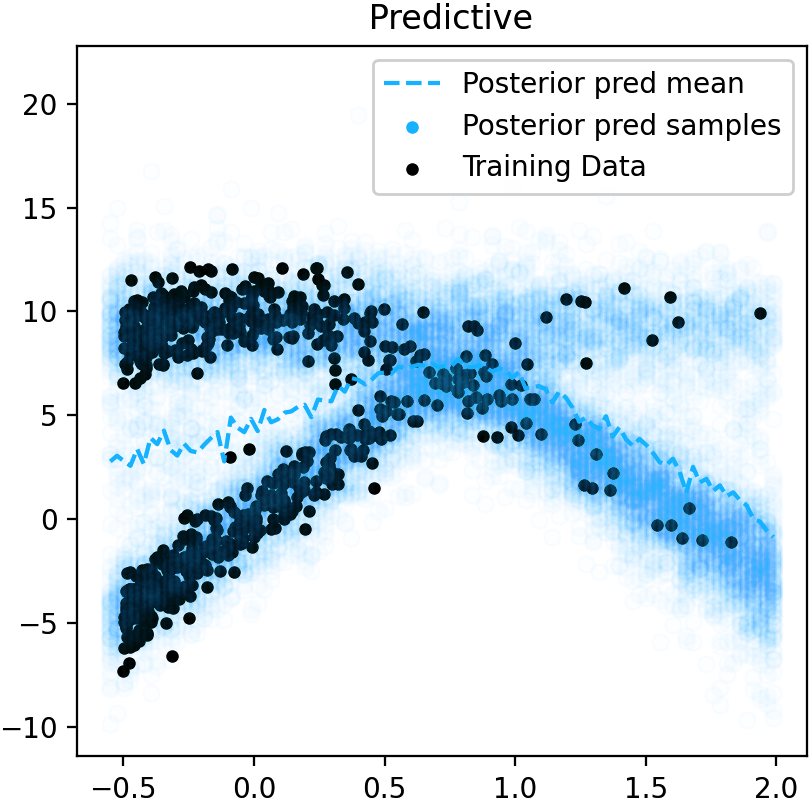} 
   	 \end{subfigure}
	 
	\begin{subfigure}[t]{0.01\textwidth}
        \centering
        \small
   	\end{subfigure}
	 \begin{subfigure}[t]{0.321\textwidth}
        \centering
        \small
        \vspace{-0.3cm}
        $x$
   	 \end{subfigure}
	 \begin{subfigure}[t]{0.320\textwidth}
        \centering
        \small
        \vspace{-0.3cm}
        $x$
   	 \end{subfigure}
	 \begin{subfigure}[t]{0.324\textwidth}
        \centering
        \small
        \vspace{-0.3cm}
        $x$
   	 \end{subfigure}
	 
	 \begin{subfigure}[t]{0.01\textwidth}
        \centering
        \small
   	\end{subfigure}
	 \begin{subfigure}[t]{0.331\textwidth}
	 \center
    	 Draw 3
   	 \end{subfigure}
	 \begin{subfigure}[t]{0.320\textwidth}
	 \center
    	 Draw 4
   	 \end{subfigure}
	 \begin{subfigure}[t]{0.324\textwidth}
	 \center
    	 Draw 5
   	 \end{subfigure}
	 
	  \vspace{0.35cm}
	  
	 \caption{Visualization of the posterior predictive, resulting from optimizing the MAP-initialized ELBO. Relative to the ground-truth (purple), posterior predictives learned via MAP-initialization do not explain the observed data well; they place mass where there is no data.}
	 \label{fig:bimodal-global-optima-viz}
    \end{subfigure}

    \caption{\textbf{Mean-field VI returns solutions closer to the MAP than to the ground-truth parameters.} We draw the Bimodal data-set (\cite{depeweg2018decomposition}, detailed in Appendix \ref{sec:datasets}) 5 times. The table displays the highest ELBO (of 10 restarts) using different initialization schemes, and the figures show the fit of the posterior predictive from MAP-initialized VI. In 4 of 5 draws of the data, the MAP initialization yields the highest ELBO, and corresponds to poor fits on the training data (blue) relative to the ground-truth (purple)---the learned posterior predictives all place mass where there is no data.}
    \label{fig:bimodal-global-optima}
\end{figure*}

\paragraph{Global vs. local optima of the ELBO.}
While we have shown that MAP-initialized inference yields higher ELBOs than the ELBO at the ground-truth,
and returns posterior predictives that explain the observed data poorly, 
does this phenomenon occur at the local or global optima of the ELBO?
That is, even the \emph{best} imperfect approximation of the joint posterior under the ELBO
would be biased towards the undesirable posterior mode?
We conjecture here that, for data-sets for which the ELBO is a loose bound to the log marginal likelihood, 
the global optima would exhibit the same properties as the MAP,
while for data-sets in which the ELBO is tight, 
only local optima will exhibit these properties. 

The Bimodal data-set is one for which we expect the ELBO to be loose,
and therefore for the global optima of the ELBO to be biased towards the MAP.
In this data-set, the ground-truth function uses $z$ as a 
binary indicator to select which function to generate. 
As such, the true posterior of $z$ given the ground-truth function $p(Z | \mathcal{D}, W^\text{true})$ 
is highly skewed and poorly approximated by a mean-field Gaussian. 
This causes the gap between the ELBO and the true marginal likelihood, 
$\mathbb{E}_{p(Z)} \lbrack p(Y | X, W^\text{true}, Z) \rbrack$, to be particularly high. 
The results in Table \ref{tab:elbo-rankings} confirm that for the Bimodal data-set,
the global optima of the ELBO may be problematic,
since the MAP-initialization results in the highest ELBO across all initializations,
whereas for the other data-sets, there exist other initializations that yield higher ELBOs, 
e.g. $\text{NCAI}_{\lambda=0}$ (which, as we show in Section \ref{sec:exp}, also results in a better fit).

\section{Mitigating Inference Challenges Caused by Model Non-Identifiability} \label{sec:method}

In Section \ref{sec:model} we showed that the BNN+LV posterior mode is asymptotically biased towards 
functions that generalize poorly.
We furthermore showed (in Section \ref{sec:assumption-violation}) that the parameters preferred by the posterior 
violate two modeling assumptions---(1) that the latent variable $z$ is independent of the input $x$
and (2) that the latent variable $z$ is drawn from a normal distribution.
More surprisingly, in Section \ref {sec:inference-pathologies} we showed that empirically, 
mean-field VI returns solutions that violate the same modeling assumptions,
and as a consequence, that the resultant posterior predictives generalize poorly and misestimate uncertainty.
This leads us to hypothesize that when these modeling assumptions are satisfied, 
the the resultant posterior predictive will generalize well and provide appropriate estimations of uncertainty. 
In this section, we therefore develop a method to enforce these modeling assumptions 
\emph{explicitly} during variational inference. 
We call this method Noise-Constrained Approximate Inference (NCAI), 
since it restricts the variational family to treat the latent input variable $z$ as identically distributed and independent of $x$. 

Our method consists of two steps: first, an intelligent model-assumption satisfying initialization,
and second, variational inference using a variational family constrained to satisfy the modeling assumptions 
from Section \ref{sec:assumption-violation}. 

\paragraph{Step 1: Model-Satisfying Initialization.}  
Since local optima are a major concern in BNN+LV inference, 
we start with settings of the variational parameters $\phi$ that 
satisfy the properties implied by our generative model (Equation \ref{eqn:gen_model}).  
We initialize the variational means $\mu_{w_i}$ of the weights 
(except for weights associated with the input noise) with those of a deterministic neural network trained on the same data,
based on the observation that a neural network is often able to capture the trend of the data (but not the uncertainty). 
We then initialize the variational means $\mu_{z_n}$ of the latent noise to $0$, 
to ensure that in the early stages of optimization, the model is forced to explain the data using $W$
(as opposed to by memorizing it with $Z$). 
Lastly, we initialize all variational variances randomly.

\paragraph{Step 2: Inference with Noise-Constrained Variational Family.} 
We further ensure that the two key modeling assumptions---that the noise variables 
$z$ are drawn \emph{independently} of $x$ and i.i.d from the \emph{prior} $p(z)$---remain satisfied 
during training by restricting our variational family.
Specifically, we construct a variational family by filtering out distributions from the mean-field variational family 
(Equation \ref{eq:mfvf}) that do not obey our modeling assumptions:
\begin{align}
\mathcal{Q} &= \{ q_\phi(W, Z | \mathcal{D}) : \underbrace{I_\phi(x; z) = 0}_{\text{assumption 1}} \text{ and } \underbrace{D\lbrack q_\phi(z) || p(z) \rbrack = 0}_{\text{assumption 2}} \}, 
\label{eq:constrained-mf-vf}
\end{align}
Here, $I_\phi(x; z)$ quantifies the statistical dependence between the  $x$'s and $z$'s 
under the posterior,
\begin{align}
I_\phi(x; z) &= D \lbrack q_\phi(z | x) p(x) || q_\phi(z) p(x) \rbrack,
\label{eq:dep}
\end{align}
where $q_\phi(z | x)$ is the approximate posterior with $y$ marginalized out
(since we only have a single $y$ associated with every $x$, $q_\phi(z | x)$ is approximated with $q_\phi(z | x, y)$.)
$D\lbrack q_\phi(z) || p(z) \rbrack$ quantifies the ``distance'' between the 
approximated aggregated posterior $q_\phi(z)$ and the prior $p(z)$.
The aggregated posterior is the posterior $q_\phi(z | x, y)$ marginalized over the observed data,
approximated as follows~\citep{Makhzani2015}:
\begin{align}
q_\phi(z) &= \E{p(x, y)} \left\lbrack q_\phi(z | x, y) \right\rbrack \approx \frac{1}{N} \sum\limits_{n=1}^N q_\phi(z_n | x_n, y_n),
\label{eq:aggregated-posterior}
\end{align}
We note that while each posterior $q_\phi(z | x, y)$ can be an arbitrary distribution, 
the aggregate posterior $q_\phi(z)$ must recover the prior when inference is exact;
that is, when  $q_\phi(z | x, y) = p(z | x, y)$, we have that 
$q_\phi(z) = \mathbb{E}_{p(x, y)} [q_\phi(z | x, y)] \approx \mathbb{E}_{p(x, y)} [p(z | x, y)]= p(z)$.

Together, these two constraints explicitly enforce both assumptions, respectively. 
We emphasize that since both constraints are satisfied by the true posterior
(i.e. when $q_\phi(W, Z) = p(W, Z | \mathcal{D})$),
these constraints are not at odds with the model---they simply help select a posterior approximation that retains the desired properties of the original model. 
Moreover, since the two constraints are orthogonal (i.e. satisfying one does not imply satisfying the other---see Section \ref{sec:assumption-violation}), both are needed. 

Now, using this noise-constrained mean-field variational family, we perform variational inference:
\begin{align}
\mathrm{argmin}_{q_\phi \in \mathcal{Q}} D_{\text{KL}}[q_\phi(W, Z | \mathcal{D})\|p(W, Z | \mathcal{D})].
\label{eq:constrained-vi}
\end{align}
As with standard variational inference, 
once we have the variational approximation that minimizes Equation \ref{eq:constrained-vi},
we use it to compute a Monte-Carlo estimate of the posterior predictive in Equation \ref{eq:posterior-predictive}:
\begin{align*}
\begin{split}
p(y^*| x^*, \mathcal{D}) 
&= \iint p(y^* | x^*, z^*, W) p(z^*) dz^* p(W| \mathcal{D}) dW \\
&\approx \frac{1}{S} \sum\limits_{s=1}^S p(y^* | x^*, z^{(s)^*}, W^{(s)}), \quad  z^{(s)^*} \sim p(z^*), \quad W^{(s)}, Z^{(s)} \sim q_\phi(W, Z | \mathcal{D}).
\end{split}
\end{align*}

\paragraph{Variational Inference with the Noise Constrained Variational Family.}
Since performing inference over the constrained $\phi$-space is challenging,
we equivalently re-write Equation \ref{eq:constrained-vi} as:
\begin{align}\label{eqn:framework}
\begin{split}
\mathrm{argmin}_\phi D_\text{KL}\lbrack q_\phi(W, Z | \mathcal{D}) || p(W, Z | \mathcal{D}) \rbrack  \quad &\text{s.t} \\
&I_\phi(x; z) = 0,\\
&D\lbrack q_\phi(z) || p(z) \rbrack = 0.
\end{split}
\end{align}
and solve Equation \ref{eqn:framework} by gradient descent on the Lagrangian:
\begin{align}\label{eqn:obj}
\begin{split}
\mathcal{L}_{\text{NCAI}}(\phi) = -\text{ELBO}(\phi) &+ \lambda_1 \cdot I_\phi(x; z) 
+  \lambda_2 \cdot D\lbrack q_\phi(z) || p(z) \rbrack.
\end{split}
\end{align}
We emphasize that even though in practice, using our proposed variational family requires solving a constraint optimization problem (similar to posterior regularization~\citep{zhu2014bayesian}), 
the theoretical justification of our method nevertheless does not deviate 
from the standard application of variational inference with a specific choice of variational family.

\paragraph{Empirical properties of the constraints.} 
We note that even though the two constraints are theoretically satisfied by the true posterior,
in practice, the constraints cannot be minimized to $0$ completely.
Firstly, even given the best mean-field posterior approximation $\phi^* = \mathrm{argmin}_\phi -\text{ELBO}(\phi)$,
the aggregated posterior $q_\phi(z)$ (used in both constraints) may not equal $p(z)$ due to approximation error.
Secondly, it is not possible to estimate $I_\phi(x; z)$ without bias:
$I_\phi(x; z)$ depends on $q_\phi(z | x)$, which is estimated by integrating out $y$ from $q_\phi(z | x, y)$,
and since only one $y$ is observed for every $x$, $q_\phi(z | x)$ is simply estimated with $q_\phi(z | x, y)$
(see Appendix \ref{sec:mixz} for details).
While $q_\phi(z | x)$ should approximate $q_\phi(z)$ (i.e. $I_\phi(x; z) = 0$),
in general $q_\phi(z | x, y)$ does not approximate $q_\phi(z)$ (i.e. $D\lbrack q_\phi(z | x, y) p(x, y) || q_\phi(z) p(x, y) \rbrack > 0$).
As such, even with no approximation error (i.e. $q_\phi(z | x, y) = p(z | x, y)$), 
replacing $q_\phi(z | x)$ with $q_\phi(z | x, y)$ in Equation \ref{eq:dep} may yield $I_\phi(x; z) > 0$.

These properties of the constraints mean that in practice, 
we cannot solve the Lagrangian from Equation \ref{eqn:obj}. 
As such, we instead relax the equality constraints by optimizing Equation \ref{eqn:obj},
for fixed lambdas, selected to maximize validation log-likelihood,
as commonly done in the generative modeling literature~\citep{Zhao2018}.
These properties of the constraints also have two implications on the evaluation of NCAI (Section \ref{sec:exp}):
(1) quantitatively one should not expect NCAI to minimize both constraints all the way to 0, 
(2) qualitatively the means of $q_\phi(z | x, y)$, while not independent of $(x, y)$, 
should still exhibit less dependence than simply having memorized the data.
In Section \ref{sec:exp}, we show that using NCAI, the constraints are better satisfied than when using
vanilla mean-field VI (MFVI), and as a consequence the learned posterior predictives generalize better. 

\paragraph{Differentiable and Easy-to-Optimize Proxies for the Two Constraints.}
While the KL-divergence in both constraints may seem like a differentiable, easy-to-optimize choice,
we find that in practice it is in fact not.
In Appendix~\ref{sec:training-with-ncai}, we describe why mutual information in assumption 
(1) is intractable to compute using KL-divergence;
we empirically show how traditional divergences (e.g. Jensen-Shannon, reverse/forward-KL divergences)
in assumption (2) can all be trivially minimized by inflating the variational variances;
finally, we describe our choices of differential and tractable proxies for these constraints that do not suffer
from these issues. 
To encourage the aggregated posterior to match the prior over $z$, 
we propose a proxy based on the Henze-Zirkler statistical test for Gaussianity~\citep{hztest1990}. 
To minimize $I_\phi(x; z)$, we instead propose a proxy that penalizes the linear correlation between $x$ and $z$,
and that penalizes non-linear correlation between $x$ and $z$ by penalizing 
the linear correlation between $y$ (which depends non-linearly on $x$) and $z$.
See Appendix~\ref{sec:training-with-ncai} for the full definitions and justifications of both proxies.
While empirically effective (later shown in Section \ref{sec:exp}), 
these proxies are nonetheless somewhat heuristic in nature.
We therefore consider this method as a demonstration of validity of our theoretical analysis.

\begin{figure*}[p]
    \centering
    
    \begin{subfigure}[t]{0.01\textwidth}
        \centering
        \small
    \end{subfigure}%
    ~ 
    \begin{subfigure}[t]{0.19\textwidth}
        \centering
        \small
        \hspace{6.0mm}BNN
        \vspace{1mm}
    \end{subfigure}%
    ~ 
    \begin{subfigure}[t]{0.373\textwidth}
        \centering
        \small
        \hspace{8.5mm}BNN+LV with MFVI
        \vspace{1mm}    
    \end{subfigure}
   ~ 
    \begin{subfigure}[t]{0.373\textwidth}
        \centering
        \small
        \hspace{8.0mm}BNN+LV with $\text{NCAI}_\lambda$
        \vspace{1mm}    
    \end{subfigure}
    
    \begin{subfigure}[t]{0.01\textwidth}
        \centering
        \small
        \rotatebox[origin=l]{90}{\hspace{22pt}Goldberg}
    \end{subfigure}%
    ~ 
    \begin{subfigure}[t]{0.19\textwidth}
        \centering
        \includegraphics[width=1.0\textwidth]{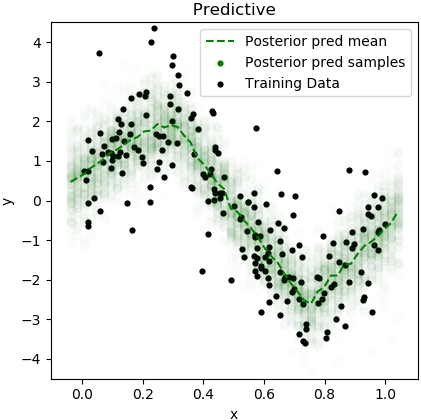} 
    \end{subfigure}%
    ~ 
    \begin{subfigure}[t]{0.373\textwidth}
        \centering
         \includegraphics[width=1.05\textwidth]{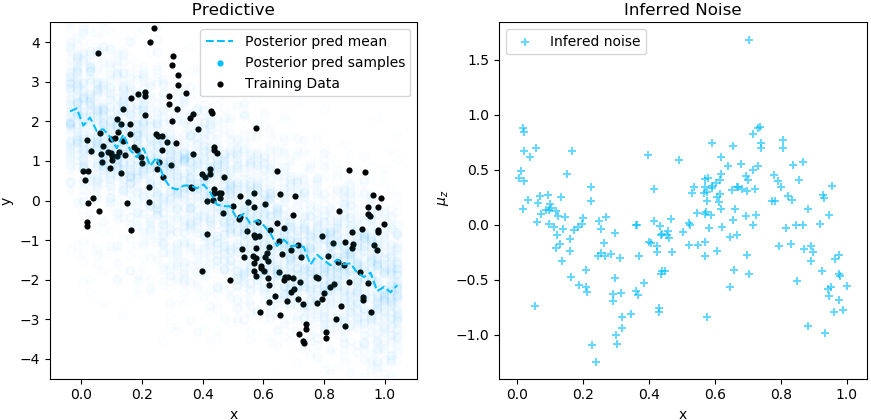} 
    \end{subfigure}
   ~ 
    \begin{subfigure}[t]{0.373\textwidth}
        \centering
         \includegraphics[width=1.05\textwidth]{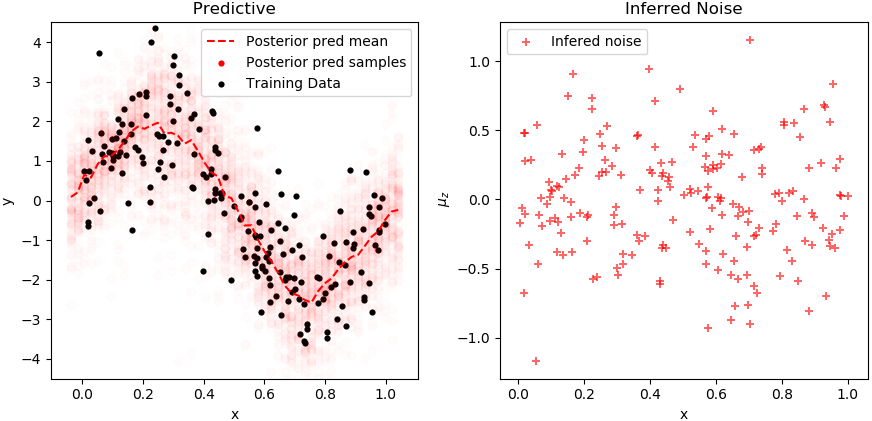}
    \end{subfigure}
    
     \begin{subfigure}[t]{0.01\textwidth}
        \centering
        \small
        \rotatebox[origin=l]{90}{\hspace{31pt}Lidar}
    \end{subfigure}%
    ~ 
    \begin{subfigure}[t]{0.19\textwidth}
        \centering
        \includegraphics[width=1.0\textwidth]{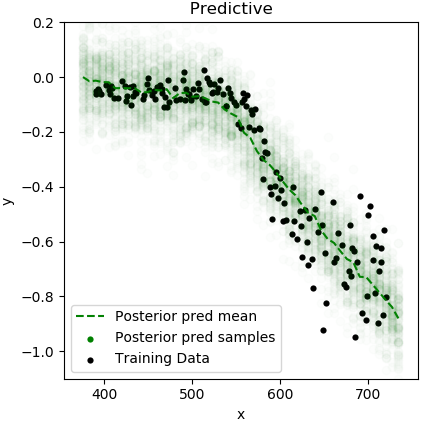} 
    \end{subfigure}%
    ~ 
    \begin{subfigure}[t]{0.373\textwidth}
        \centering
         \includegraphics[width=1.05\textwidth]{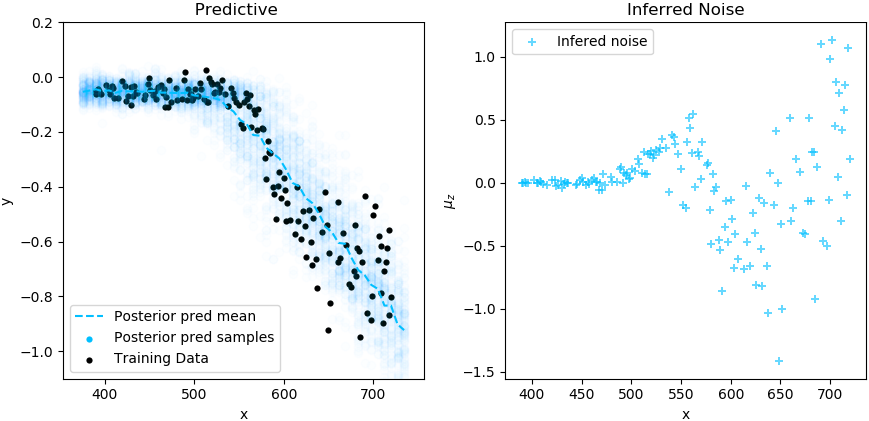} 
    \end{subfigure}
   ~ 
    \begin{subfigure}[t]{0.373\textwidth}
        \centering
         \includegraphics[width=1.05\textwidth]{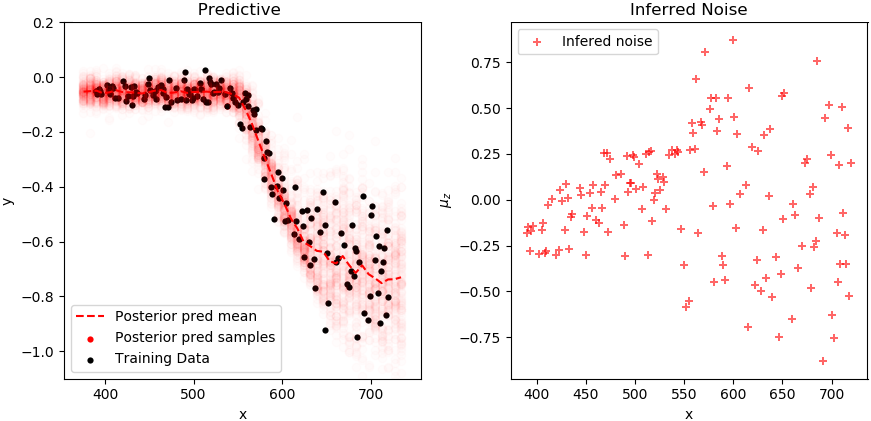}
    \end{subfigure}
    
     \begin{subfigure}[t]{0.01\textwidth}
        \centering
        \small
        \rotatebox[origin=l]{90}{\hspace{21pt}Williams}
    \end{subfigure}%
    ~ 
     \begin{subfigure}[t]{0.19\textwidth}
        \centering
        \includegraphics[width=1.0\textwidth]{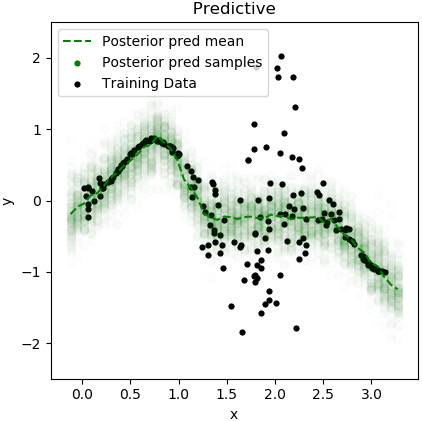} 
    \end{subfigure}%
    ~ 
    \begin{subfigure}[t]{0.373\textwidth}
        \centering
         \includegraphics[width=1.05\textwidth]{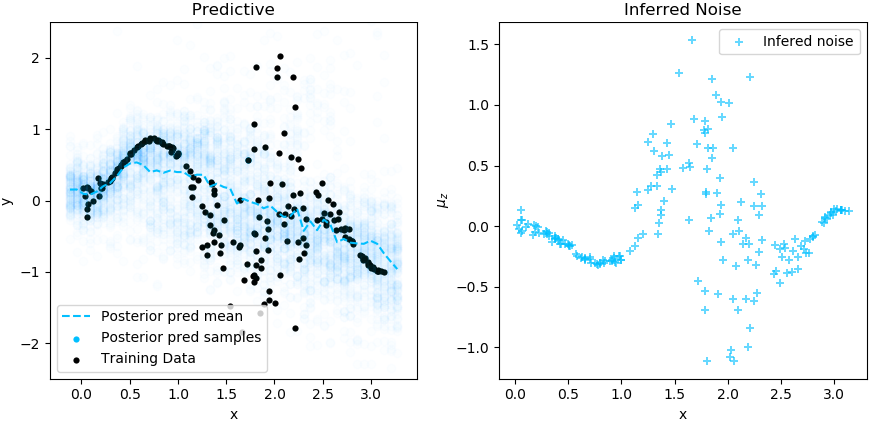} 
    \end{subfigure}
   ~ 
    \begin{subfigure}[t]{0.373\textwidth}
        \centering
         \includegraphics[width=1.05\textwidth]{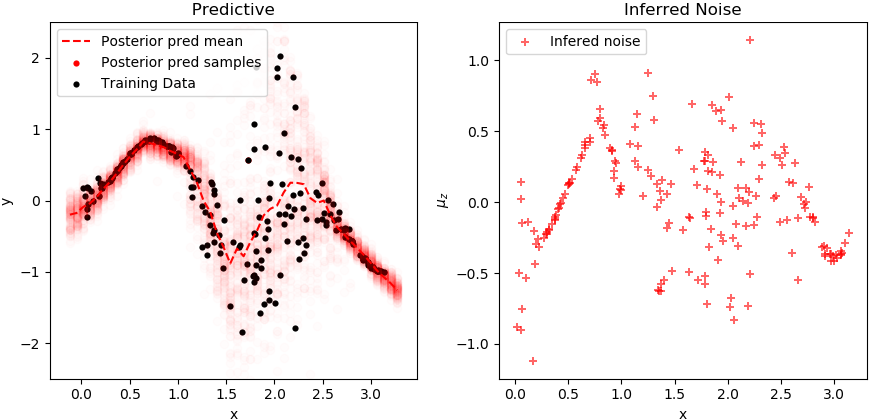}
    \end{subfigure}
    
   \begin{subfigure}[t]{0.01\textwidth}
        \centering
        \small
        \rotatebox[origin=l]{90}{\hspace{30pt}Yuan}
    \end{subfigure}%
    ~ 
    \begin{subfigure}[t]{0.19\textwidth}
        \centering
        \includegraphics[width=1.0\textwidth]{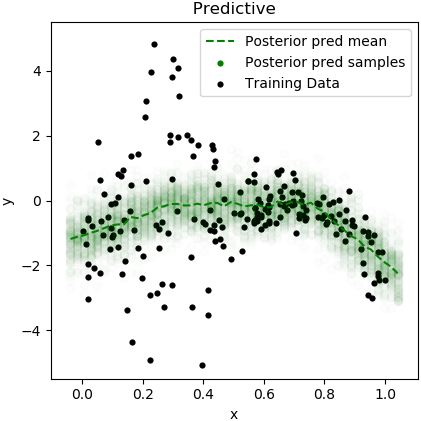} 
    \end{subfigure}%
    ~ 
    \begin{subfigure}[t]{0.373\textwidth}
        \centering
         \includegraphics[width=1.05\textwidth]{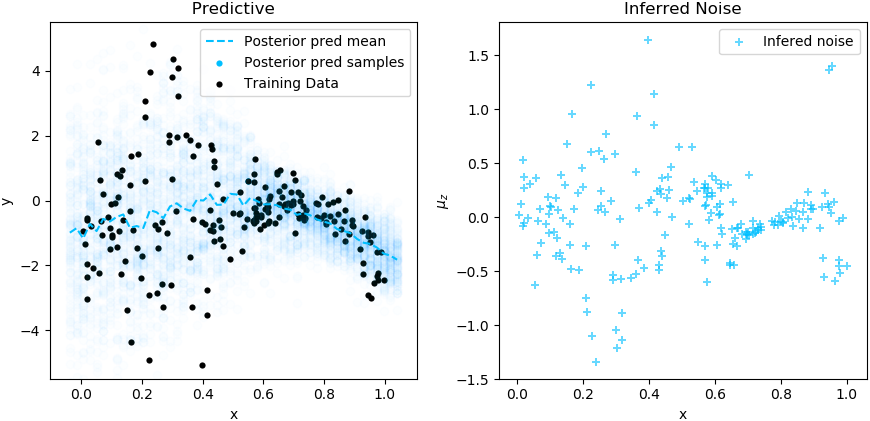} 
    \end{subfigure}
   ~ 
    \begin{subfigure}[t]{0.373\textwidth}
        \centering
         \includegraphics[width=1.05\textwidth]{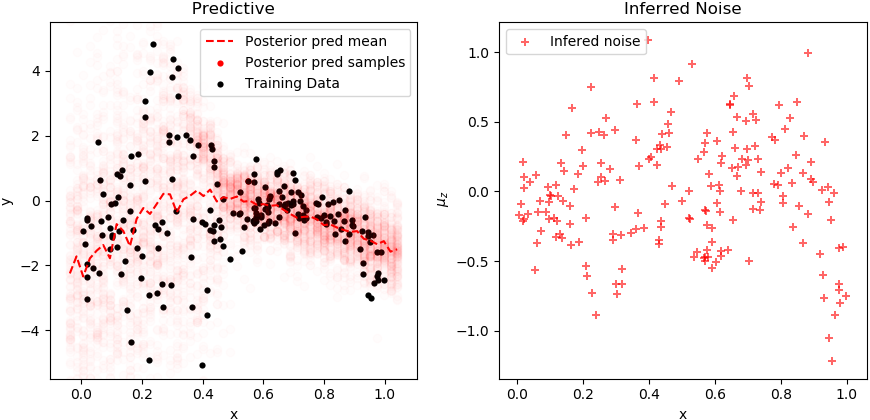}
    \end{subfigure}
   
   \begin{subfigure}[t]{0.01\textwidth}
        \centering
        \small
        \rotatebox[origin=l]{90}{\hspace{17pt}Heavy-Tail}
    \end{subfigure}%
    ~ 
     \begin{subfigure}[t]{0.19\textwidth}
        \centering
        \includegraphics[width=1.0\textwidth]{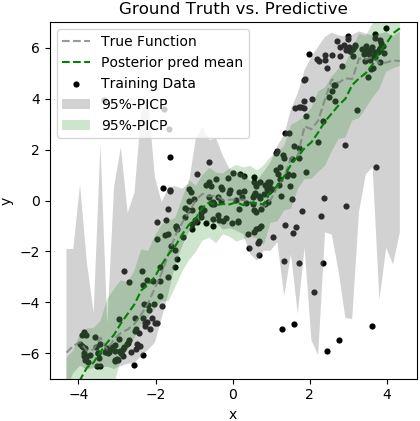} 
    \end{subfigure}%
    ~ 
    \begin{subfigure}[t]{0.373\textwidth}
        \centering
         \includegraphics[width=1.05\textwidth]{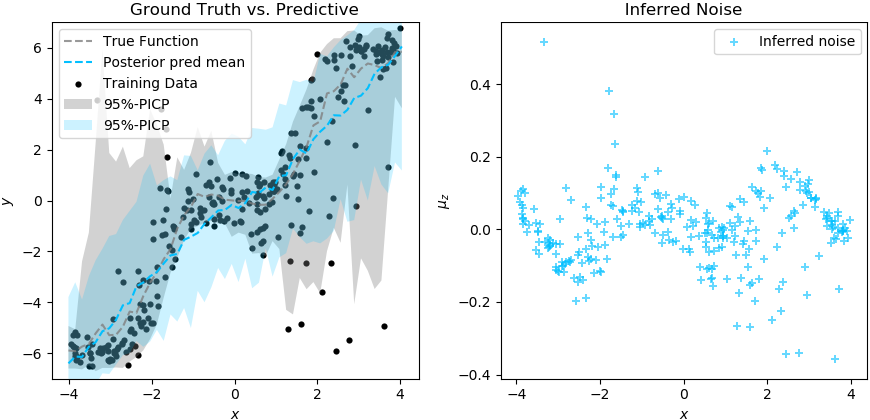} 
    \end{subfigure}
   ~ 
    \begin{subfigure}[t]{0.373\textwidth}
        \centering
         \includegraphics[width=1.05\textwidth]{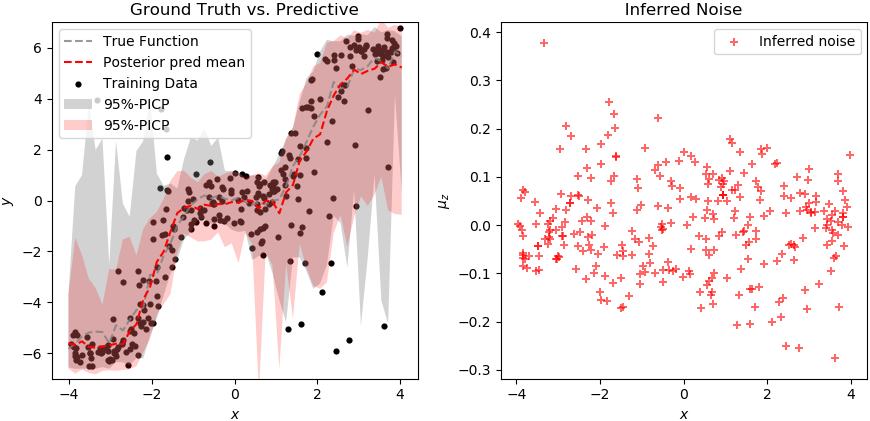}
    \end{subfigure}
    
   \begin{subfigure}[t]{0.01\textwidth}
        \centering
        \small
        \rotatebox[origin=l]{90}{\hspace{22pt}Depeweg}
    \end{subfigure}%
    ~ 
    \begin{subfigure}[t]{0.19\textwidth}
        \centering
        \includegraphics[width=1.0\textwidth]{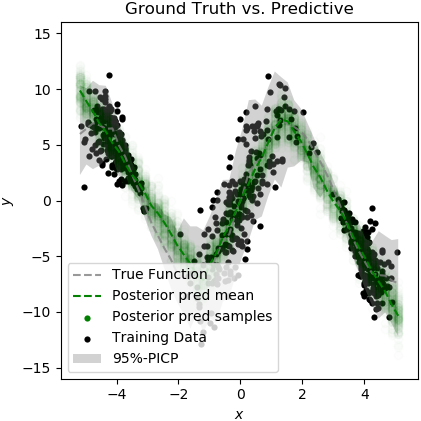} 
    \end{subfigure}%
    ~ 
    \begin{subfigure}[t]{0.373\textwidth}
        \centering
         \includegraphics[width=1.05\textwidth]{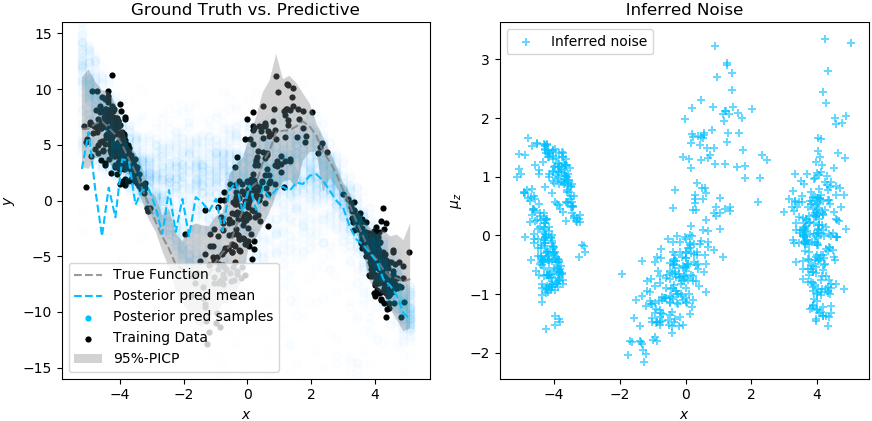} 
    \end{subfigure}
   ~ 
    \begin{subfigure}[t]{0.373\textwidth}
        \centering
         \includegraphics[width=1.05\textwidth]{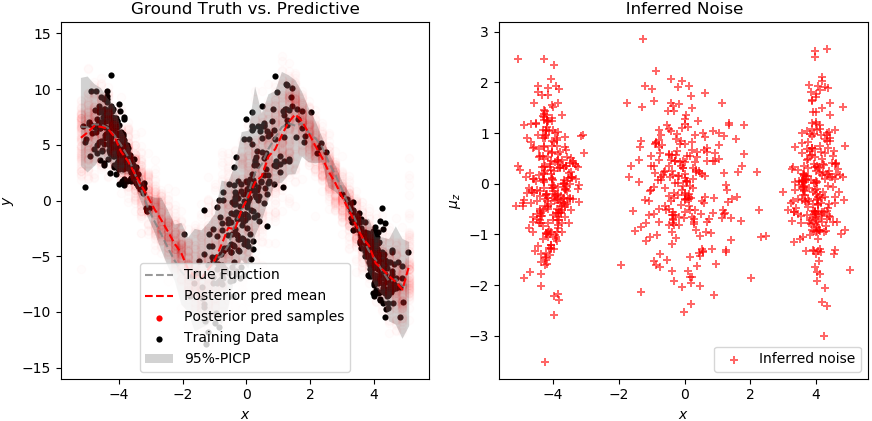}
    \end{subfigure}
    
    \caption{\textbf{Comparison of posterior predictives.} BNN (green) captures trend but underestimates variance; BNN+LV with mean-field VI (blue) captures more variance, but infers $z$'s that are dependent on the data, shown in scatter plot. BNN+LV with $\text{NCAI}_\lambda$ (red) best captures heteroscedasticity and infers $z$'s that best resemble white noise (shown in scatter plot). For the bottom two rows, since the true function is known, it is visualized in gray.}
    \label{fig:1d-qualitative}
\end{figure*}


\section{Experiments} \label{sec:exp}

In this section, we demonstrate that on a wide range of data-sets,
mean-field variational inference for BNN+LV suffers from the theoretical issues we identify in Section \ref{sec:model}. 
We also demonstrate that NCAI improves the quality of approximate inference for this class of models. That is, 
we show that latent inputs inferred by NCAI better satisfy modeling assumptions, and as a result, 
the learned posterior predictives are closer to the ground-truth data-generating models---they generalize better and do not misestimate uncertainty.

\subsection{Setup} \label{sec:exp-setup}

\paragraph{Data-sets.}
We consider 5 synthetic data-sets that are frequently used in heteroscedastic regression literature 
(Goldberg, Williams, Yuan, Depeweg and Heavy Tail),
as well as 6 real data-sets with different patterns of epistemic and aleatoric uncertainty
(Lidar, Yacht, Energy Efficiency, Airfoil, Abalone, Wine Quality Red),
all of which are described in Appendix \ref{sec:datasets}. 
Each data-set is split into 5 random train/validation/test sets. 
For every split of each data-set, each method is evaluated on the best-learned posterior predictive 
(according validation log-likelihood) out of 10 random restarts (see Appendix \ref{sec:exp_details} for details). 

\paragraph{Experimental Setup.} We use neural networks with LeakyReLU activations with $\alpha=0.01$ in all experiments. 
We set the prior variances $\sigma_z^2, \sigma_w^2$ using empirical Bayes
and grid-search over remaining hyper-parameters. 
For optimization, we use Adam \citep{kingma2014adam} with a learning rate of $0.01$,
train for 30,000 epochs (and verify convergence). 
Lastly, for each method, we select the best hyper-parameters using the 
average log-likelihood on the validation set. 
Full details are in Appendix \ref{sec:exp_setup}. 

\paragraph{Baselines.} We compare NCAI on BNN+LV with unconstrained mean-field VI~\citep{blundell2015weight}. We also compare selecting constraint strength parameters, $\lambda_1, \lambda_2$, of NCAI through cross-validation (denoted $\text{NCAI}_{\lambda}$) against fixing $\lambda_1, \lambda_2$ at zero (denoted $\text{NCAI}_{\lambda=0}$).  Finally, we compare the performance of BNN+LV (for all inference methods) with that of a BNN. 

\paragraph{Evaluation.} 
We evaluate the learned posterior predictives for quality of fit using test average log-likelihood, RMSE, 
and calibration of the posterior predictives 
(using the $95\%$ Prediction Interval Coverage Probability (PICP), and the $95\%$ Mean Prediction Interval Width (MPIW)). 
We also check whether they satisfy the two generative modeling assumptions from Section \ref{sec:assumption-violation}.
Specifically, we check whether $z$ is independent of $x$ under the posterior using mutual information 
(estimated via a non-parametric nearest neighbor method ~\citep{kraskov2004}),
and we check that $z$ is Gaussian under the posterior predictive using the Henze-Zirkler test-statistic for normality
(as well as using Jensen-Shannon, and forward/reverse KL divergences between the aggregated posterior and the prior). 
We note that due to the reasons mentioned in Appendix \ref{sec:training-with-ncai}---that traditional variances are trivially minimized by inflating the variational variances---we primarily focus our analysis on the Henze-Zirkler metric during evaluation,
even though it is also used in our objective (Equation \ref{eq:full-ncai-objective}).
Details about evaluation metrics are found in Appendix \ref{sec:eval}.

\begin{table*}[t!]
\setlength{\tabcolsep}{3pt}
\small
\centering
\begin{tabular}{l||ccccc}
\multicolumn{6}{c}{\emph{Mutual Information between $x$ and $z$ (Synthetic Data)}}\\
Inference                          & Heavy Tail         & Goldberg           & Williams           & Yuan               & Depeweg             \\
\hline \hline
\textbf{MFVI}                                & $0.243 \pm 0.079$  & $0.229 \pm 0.113$  & $0.982 \pm 0.121$  &               $\bm{0.24 \pm 0.129}$   & $0.428 \pm 0.04$    \\ 
\hline
\textbf{$\text{NCAI}_{\lambda=0}$}  & $0.051 \pm 0.049$  & $\bm{0.02 \pm 0.024}$   & $\bm{0.519 \pm 0.091}$  & $0.283 \pm 0.112$  & \bm{$0.032 \pm 0.017$}   \\
\textbf{$\text{NCAI}_\lambda$}      & $\bm{0.036 \pm 0.04}$   & $0.046 \pm 0.067$  & $\bm{0.519 \pm 0.091}$  & $0.283 \pm 0.112$  & \bm{$0.032 \pm 0.017$}
\end{tabular}
\vskip0.1cm
\caption{Comparison of  \emph{mutual information} between $z$ and $x$ on synthetic data-sets ($\pm$ std). Across all but one of the data-sets, $\text{NCAI}_\lambda$ training infers $z$'s that have the least mutual information in comparison mean-field VI (MFVI). Additional evaluations of model assumption satisfaction are in Appendix \ref{sec:results_tab}.}
\label{table:synth_metrics}
\normalsize
\end{table*}

\begin{table*}[t!]
\setlength{\tabcolsep}{3pt}
\small
\centering
\begin{tabular}{l||ccccc}
\multicolumn{6}{c}{\emph{Henze-Zirkler Test-Statistic (Synthetic Data)}}\\
Inference                                    & Heavy Tail         & Goldberg           & Williams           & Yuan               & Depeweg             \\
\hline \hline
\textbf{MFVI}                     & $4.701 \pm 5.439$            & $0.918 \pm 0.41$                   & \bm{$6.445 \pm 2.818$}  & \bm{$5.252 \pm 5.607$}  & $6.408 \pm 2.439$   \\ 
\hline
\textbf{$\text{NCAI}_{\lambda=0}$}  & $7.137 \pm 5.436$  & $0.621 \pm 0.234$              & $7.248 \pm 2.598$  & $8.091 \pm 5.185$  & $\bm{0.792 \pm 0.357$}   \\
\textbf{$\text{NCAI}_\lambda$}      & \bm{$0.027 \pm 0.011$}  & \bm{$0.026 \pm 0.038}$  & $7.248 \pm 2.598$  & $8.091 \pm 5.185$  & $\bm{0.792 \pm 0.357$}  
\end{tabular}
\vskip0.1cm
\caption{Comparison of model assumption satisfaction (in terms of the HZ metric) on synthetic data-sets ($\pm$ std). Across all but one data-sets, $\text{NCAI}_\lambda$ training infers $z$'s that are more Gaussian (lowest HZ) relative to mean-field VI (MFVI).}
\label{table:synth_metrics2}
\end{table*}

\begin{table*}[t!]
\setlength{\tabcolsep}{1.5pt}
\scriptsize
\centering
\begin{tabular}{ll||ccccc}
\multicolumn{6}{c}{\emph{Test Log-Likelihood (Synthetic Data)}}\\
Model & Inference                                    & Heavy Tail               & Goldberg            & Williams            & Yuan                & Depeweg              \\
\hline \hline
\textbf{BNN} & \textbf{MFVI}                       & $-2.47 \pm 0.083$        & $-1.055 \pm 0.08$   & $-1.591 \pm 0.417$  & $-2.846 \pm 0.346$  & $-2.306 \pm 0.059$   \\
\textbf{BNN+LV} & \textbf{MFVI}                     & $-1.867 \pm 0.078$       & $-1.026 \pm 0.056$  & $-1.033 \pm 0.156$  & $-1.278 \pm 0.164$  & $-2.342 \pm 0.048$   \\ 
\hline
\textbf{BNN+LV} & \textbf{$\text{NCAI}_{\lambda=0}$}  & $-1.481 \pm 0.018$       & $\bm{-0.962 \pm 0.040}$  & $\bm{-0.414 \pm 0.184}$  & $\bm{-1.211 \pm 0.083}$  & $\bm{-1.973 \pm 0.049}$   \\
\textbf{BNN+LV} & \textbf{$\text{NCAI}_\lambda$}      & $\bm{-1.426 \pm 0.042}$  & $-0.963 \pm 0.041$  & $\bm{-0.414 \pm 0.184}$  & $\bm{-1.211 \pm 0.083}$  & $\bm{-1.973 \pm 0.049}$  \\
\end{tabular}
\vskip0.1cm
\caption{Comparison of \emph{test log-likelihood} on synthetic data-sets ($\pm$ std). For all data-sets, BNN+LV trained with NCAI outperforms BNN+LV and BNN trained with mean-field VI (MFVI).}
\label{table:synth_gen_ll}
\end{table*}
\normalsize

\begin{table*}[t!]
\begin{adjustwidth}{-.5in}{-.5in}  
\setlength{\tabcolsep}{1.5pt}
\ssmall
\centering
\begin{tabular}{ll||ccccccc}         
\multicolumn{7}{c}{\emph{\normalsize{Test Log-Likelihood (Real Data)}}}\\                                                                                             
Model & Inference                                   & Abalone                   & Airfoil                   & Energy                   & Lidar                    & Wine                      & Yacht                    \\ \hline \hline
\textbf{BNN} & \textbf{MFVI}                      & $-1.248 \pm 0.153$        & $-0.995 \pm 0.143$        & \bm{{$1.281 \pm 0.171$}} & $-0.31 \pm 0.069$        & $-1.143 \pm 0.027$        & $0.818 \pm 0.187$        \\
\textbf{BNN+LV}  & \textbf{MFVI}                   & $-0.843 \pm 0.071$        & $-0.512 \pm 0.083$        & $0.573 \pm 0.288$        & $0.129 \pm 0.131$        & $-1.709 \pm 0.22$         & $0.638 \pm 0.121$        \\ \hline
\textbf{BNN+LV}  &  \textbf{$\text{NCAI}_{\lambda=0}$} & \bm{{$-0.831 \pm 0.086$}} & \bm{{$-0.462 \pm 0.056$}} & $0.862 \pm 0.138$        & \bm{{$0.269 \pm 0.107$}} & $-1.147 \pm 0.025$        & \bm{{$0.832 \pm 0.077$}} \\
\textbf{BNN+LV}  &  \textbf{$\text{NCAI}_\lambda$}     & \bm{{$-0.831 \pm 0.086$}} & \bm{{$-0.462 \pm 0.056$}} & $0.898 \pm 0.452$        & $0.263 \pm 0.11$         & \bm{{$-0.849 \pm 0.038$}} & \bm{{$0.832 \pm 0.077$}}
\end{tabular}
\end{adjustwidth}
\caption{Comparison of \emph{test log-likelihood} on real data-sets ($\pm$ std). For all but one of the data-sets, BNN+LV trained with NCAI outperforms BNN+LV and BNN trained with mean-field VI (MFVI).}
\label{table:real_gen_ll}
\end{table*}
\normalsize

\subsection{Results}

\paragraph{The BNN+LV joint posterior prefers models that do not explain the observed data well.}
In Figure \ref{fig:map-pathology}, we compare the ground-truth function and 
the function learned via the MAP estimate over $W, Z$.
We show that in comparison to the true parameters $W^\text{true}, Z^\text{true}$, 
the MAP-inferred parameters $W^\text{MAP}, Z^\text{MAP}$
are (1) scored \emph{significantly} higher under the log-posterior, 
(2) have $Z^\text{MAP}$ that violate our modeling assumptions (Section \ref{sec:assumption-violation}),
and (3) have $W^\text{MAP}$ that generalize poorly and misestimates aleatoric uncertainty.

\paragraph{NCAI better satisfies generative modeling assumptions than mean-field VI.}
As shown in Tables \ref{table:synth_metrics} and \ref{table:real_metrics1}, 
across all synthetic and real data-sets, NCAI learns posterior predictives
for which $z$ and $x$ are significantly less dependent
under the posterior than those learned via mean-field VI, thereby satisfying assumption 1.
As shown in Tables \ref{table:synth_metrics2} and \ref{table:real_metrics2}, across all synthetic and real data-sets, 
NCAI learns posterior predictives for which the aggregated posterior better matches the prior, 
thereby satisfying assumption 2.
Furthermore, on synthetic data, Figure \ref{fig:1d-qualitative} confirms qualitatively from 
scatter plots of variational posterior mean of $z$ vs. $x$ that NCAI better satisfies modeling assumptions.
Additional evaluation metrics of model assumption satisfaction 
can be found in Appendix \ref{sec:results_tab} (described in Appendix \ref{sec:eval}). 

\paragraph{Learned posterior predictives perform better when generative model assumptions are satisfied.}
On all 1-dimensional data-sets, 
Figure \ref{fig:1d-qualitative} shows a qualitative comparison of the posterior predictive distributions of 
BNN+LV trained with $\text{NCAI}_\lambda$ compared with benchmarks.
We see that, as expected, BNNs underestimate the posterior predictive uncertainty, 
whereas BNN+LV with mean-field VI improves upon the BNN in terms of log-likelihood 
by expanding posterior predictive uncertainty nearly symmetrically about the predictive mean. 
The predictive distribution obtained by BNN+LV trained with NCAI, however, 
captures the asymmetry of the observed heteroscedasticity. 
Furthermore, its predictive mean better captures the overall trend and its predictive uncertainty is better-calibrated.

Across all synthetic and real data-sets (apart from Energy Efficiency), 
when modeling assumptions are satisfied (i.e. when NCAI is used), 
the learned posterior predictives have higher average log-likelihood 
(Tables \ref{table:synth_gen_ll}, \ref{table:real_gen_ll} for synthetic and real, respectively).
On Energy Efficiency, the BNN performs best in terms of test log-likelihood,
but drastically underestimates the uncertainty in the data;
specifically, the $95\%$-PICP and MPIW show that BNN has a small predictive interval width 
that only covers about $80\%$ of the data, whereas NCAI overs about $94\%$ 
of the data (see Table \ref{res:ee} details). 
This is because the BNN, when properly trained, is able to capture the trends in the data 
but tends to underestimate the variance (log-likelihood and calibration)---this tendency
is especially apparent in the presence of heteroscedastic noise. 

\paragraph{Selecting between $\text{NCAI}_{\lambda=0}$ and $\text{NCAI}_{\lambda>0}$.}
Generally, we observe that on data-sets in which the noise is roughly symmetric around the posterior predictive mean
(as in the Goldberg, Yuan, Williams, Lidar, and Depeweg data-sets) 
$\text{NCAI}_{\lambda=0}$ and $\text{NCAI}_{\lambda>0}$ perform comparably well on average test log-likelihood.
However, when the noise is skewed around the posterior predictive mean (like in the HeavyTail data-set), 
we find that $\text{NCAI}_{\lambda>0}$ out-performs $\text{NCAI}_{\lambda=0}$.
This is because $\text{NCAI}_{\lambda=0}$ first fits the variational parameters of the weights to best capture
as best as possible, often fitting a function that represents the mean.
After the warm-start, when training with respect to the variational parameters of the $z$'s,
the uncertainty is increased about the mean to best capture the data, often in a way that does not significantly alter the parameters of the weights, thereby resulting in a posterior predictive with symmetric noise.

\subsection{Application: Uncertainty Decomposition}
By explicitly modeling sources of epistemic uncertainty, $W$, and aleatoric noise, $z$ and $\epsilon$, 
the BNN+LV model can decompose the uncertainty in its posterior predictive distribution. 
This decomposition can improve performance on downstream tasks that rely on exploiting uncertainty in data. 
For example, \cite{depeweg2018decomposition} shows that accurate decomposition improves active-learning with 
BNN+LV in the presence of complex noise; the authors also formulate a new `risk-sensitive criterion' 
for safe model-based RL based on the decomposition of predictive uncertainties in BNN+LV. 

Following \cite{depeweg2018decomposition}, we quantify the uncertainty in the posterior predictive using entropy 
(see Equations \ref{eq:aleatoric} and \ref{eq:epistemic}). 
Using Hamiltonian Monte Carlo (HMC) \citep{Neal2012} as the ``gold-standard'' approximation of the true posterior, 
we compare the uncertainty decomposition learned by BNN+LV and NCAI with that learned by HMC.
Figure \ref{fig:decomp} shows that like HMC, 
NCAI has appropriately high total and aleatoric uncertainties at $x$'s for which there is a 
high variance in $y$, as well as high epistemic uncertainty for $x$'s near the boundary of the data.
In contrast, BNN+LV trained via mean-field VI does not.
This is evidence that our method learns a decomposition closer to that given by the ``ground-truth'' posterior predictive.

We see that BNN+LV likelihood non-identifiability negatively impacts the accuracy of uncertainty decompositions: BNN+LV trained with mean-field VI, while able to reconstruct training data well, produces inaccurate uncertainty decompositions. In contrast, NCAI consistently produces aleatoric and epistemic uncertainties that align well with those produced by HMC (more details in Appendix \ref{sec:uncertainty}).



\section{Discussion}\label{sec:discuss}

\paragraph{Non-identifiability negatively impacts inference in theory and practice.} 
In Section \ref{sec:model} we show that BNN+LV models are meaningfully non-identifiable 
under the likelihood and, as a consequence, the posterior mode is asymptotically biased towards models 
that both explain the data poorly and generalize poorly,
regardless of the choices of priors.
We argue that approximations of the joint posterior via mean-field VI
are negatively impacted by this asymptotic bias, resulting in posterior predictives that will similarly 
explain the data poorly, generalize poorly and misestimate predictive uncertainty.
In Section \ref{sec:exp} we demonstrate the negative effect on mean-field VI using a variety of synthetic and real data-sets.

\paragraph{Enforcing modeling assumptions explicitly during training mitigates the effects of non-identifiability.}
In Section \ref{sec:assumption-violation}, we show that model parameters that are scored as likely under the posterior 
often violate the generative modeling assumption---that the latent variable is generated i.i.d from the prior. 
Based on this analysis,
we develop a two-step method, NCAI, that explicitly enforces these assumptions during training.
We demonstrate on both synthetic and observed data that in enforcing these assumptions explicitly, 
we recover posterior predictives that generalize better and do not misestimate predictive uncertainty.

\paragraph{Can one alter the original BNN+LV model to correct for the asymptotic bias of the joint posterior?}
The insights from this work suggest that this is likely not possible. 
As $N$ increases, any non-degenerate prior over $z$ (that is independent of $x$ under the generative process)
will have a non-vanishing effect on the joint posterior mode.
One may be tempted to construct a prior over $z$ that weakens as $N$ increases;
however, when this prior is sufficiently weak,
this reduces to the ``incidental parameters problem''~\citep{neyman_consistent_1948, lancaster_incidental_2000}.
This work therefore highlights the more general challenge of approximate inference for 
Bayesian models that have both global parameters and local latent variables.

\paragraph{Limitations and future work.}
In this work, we prove that the joint posterior $p(W, Z | \mathcal{D})$ is biased towards models 
that generalize poorly; however, it is not currently known whether the posterior predictive of BNN+LVs
(computed using the marginal of the weights $p(W | \mathcal{D})$) is consistent.
We hope to study this in future work.
Experimentally, we show that vanilla VI suffers from both local and global optima problems
that cause the learned models to generalize poorly.
In future work, we hope to extend our analysis to other inference methods, such as MCMC-based methods.
Although in this work we proposed a new training framework, NCAI, which out-performs naive VI, 
our framework still has several limitations.
Firstly, our training objective can be challenging to optimize:
our proposed intelligent initialization alone does not always recover good quality models,
and our proposed constraints that are trained jointly with the ELBO require additional optimization
tricks to avoid local optima. 
Secondly, as we show in Section \ref{sec:method}, some of the constraints cannot be estimated without bias,
and more generally, the constraints cannot be optimized directly, requiring tractable proxies. 
In future work, we hope to both explore proxies for our constraints that are more amenable to easy 
optimization.
As such, we regard the specific instantiation of NCAI in this paper as an extension of our theoretical analysis---as a link
between the asymptotic bias of the posterior mode and the solutions returned by mean-field VI---as opposed 
to a method to be readily deployed in safety-critical applications. 
We expect that the analysis provided in this paper is useful in diagnosing poor performance of other similar models.


\section{Conclusion}\label{sec:conclude}
In this paper we identify a key issue with a promising class of flexible latent variable models for Bayesian regression---that
model non-identifiability can bias the posterior mode towards model parameters that generalize poorly. 
By analyzing the sources of non-identifiability in BNN+LV models, 
we propose an approximate inference framework, NCAI, that explicitly enforces model assumptions during training. 
On synthetic and real data-sets with complex patterns and sources of uncertainty, 
we demonstrate that NCAI better recovers posterior predictives that 
generalize well and accurately estimate uncertainty relative to baselines.


\acks{YY acknowledges support from NIH 5T32LM012411-04 and from IBM Research. WP acknowledges support from
Harvard's IACS. We thank Melanie F. Pradier, Beau Coker and Jiayu Yao for helpful feedback and discussions.}



\appendix

\addcontentsline{toc}{section}{Appendix} 
\part{Appendix} 
\parttoc 

\section{Notation} \label{sec:notation}
\clearpage
\begin{center}
\renewcommand{\arraystretch}{1.3}
\begin{longtable}{l || p{0.6\textwidth}}
\textbf{Symbol} & \textbf{Definition} \\ \hline \hline
$N$ & Number of observations (or training points) \\ \hline
$(x_n, y_n)$ & The $n$th observed input and output, where $x_n \in \mathbb{R}^D, y_n \in \mathbb{R}^L$. \\ \hline
$(x^*, y^*)$ & A point from the test-set. \\ \hline
$z_n$ & The latent code corresponding to the $n$th observation. Since the latent variables are sampled i.i.d from the prior, $p(z_n) = p(z)$. \\ \hline
$z^*$ & The latent code corresponding to $x^*$. Again, $p(z^*) = p(z)$. \\ \hline
$\mathcal{D}$ & $\{ (x_n, y_n) \}_{n=1}^N$ \\ \hline
$X$ & $\{ x_n \}_{n=1}^N$ \\ \hline
$Y$ & $\{ y_n \}_{n=1}^N$ \\ \hline
$Z$ & $\{ z_n \}_{n=1}^N$ \\ \hline
$p(y | x) p(x)$ or $p(y | x; W^\text{true}) p(x)$ & The ground-truth data-generating distribution (that generated $\mathcal{D}$). \\ \hline
$f(\cdot, \cdot; W)$ & A neural network $f$ parameterized by weights $W$. \\ \hline
$W$ & The set of all neural network weights $w_i$. \\ \hline
$p(W, Z | \mathcal{D})$ & The BNN+LV joint posterior (Equation \ref{eq:true-posterior}). \\ \hline
$p(W | \mathcal{D})$ & The marginal posterior of the weights, $\int_Z p(W, Z | \mathcal{D}) dZ$ (intractable). \\ \hline
$q_\phi(Z, W | \mathcal{D}) = q_\phi(Z | \mathcal{D}) q_\phi(W | \mathcal{D})$ & The mean-field variational approximation of the joint posterior (Equation \ref{eq:mfvf}), parameterized by $\phi$. \\ \hline
$Z^\text{MAP}, W^\text{MAP}$ & The MAP of the true joint posterior: $Z^\text{MAP}, W^\text{MAP} = \mathrm{argmax}_{Z, W} p(Z, W | \mathcal{D})$ \\ \hline
$Z^\text{true}, W^\text{true}$ & The ground-truth data-generating weights $W^\text{true}$ and latent codes $Z^\text{true} = \{ z_n^\text{true} \}_{n=1}^N$ that produced the observed data. \\ \hline
$\widehat{Z}, \widehat{W}$ & The alternative set of latent variables $\widehat{Z} = \{ \widehat{z}_n \}_{n=1}^N$ and weights, defined in Equation \ref{eq:dependence}, that have a higher probability under the joint posterior. \\ \hline
$q_\phi(z)$ & The aggregated posterior (Equation \ref{eq:aggregated-posterior}). \\ \hline
$q_\phi(z | x)$ & The approximate posterior marginalized over $y$: $\int_y q_\phi(z | x, y) p(y | x) dy$. Since we only observe one $y$ for every $x$, we cannot obtain unbiased estimates of this distribution in practice. \\
\bottomrule
\caption{\textbf{Notation}}
\label{tab:notation}
\end{longtable}
\end{center}

\section{Asymptotic Bias of the BNN+LV Joint Posterior Mode}\label{sec:proof} 
\subsection{Asymptotic Bias of 1-Node BNN+LV Posterior Mode}\label{sec:1node}

Consider univariate output generated by a single hidden-node neural network with LeakyReLU activation:
\begin{equation} \label{eq:one_node}
\begin{split}
z &\sim \mathcal{N}(0, \sigma^2_z)\\
\epsilon & \sim \mathcal{N}(0, \sigma^2_\epsilon)\\
y &=  \max \left \{W (x + z), \alpha W (x + z)\right \} + \epsilon
\end{split}
\end{equation}
where $\alpha$ is a fixed constant in $(0, 1)$. 

\ThmSingleNode*

\begin{proof}

We assume the model in Equation \ref{eq:one_node}.
We denote the prior on $W$ as $p_W$ and suppose that it is bounded, 
we denote the prior on $z$ as $p_z$,
and we suppose that $p_x$, the distribution over the input $x$, 
has bounded first and second moments $\mu_x$ and $\sigma^2_x$. 
For any non-zero constant $0 < C < 1$, we define 
\begin{align*}
\widehat{W}^{(C)} = W/C, \quad \widehat{z}_n^{(C)} = (C - 1) x_n + Cz_n,
\end{align*}
and we define $D_N^{(C)}$ to be the difference between the scaled parameters 
$(\widehat{W}^{(C)}, \widehat{z}_1^{(C)}, \ldots \widehat{z}_N^{(C)})$ 
and the ground-truth parameters $(W, z_1, \ldots z_N)$ under the joint posterior $\log p(W, Z|\mathcal{D})$. 

For any set of parameters $(W, z_1, \ldots z_N)$,
we show that we can find alternate parameters $(\widehat{W}^{(C)}, \widehat{z}_1^{(C)}, \ldots \widehat{z}_N^{(C)})$, 
that are scored as more likely under the log-posterior $\log p(W, \{z_n\} | \mathcal{D})$.
To do this, we first show that the alternative parameters are scored as equally likely under the log-likelihood,
and then we show that the alternate parameters are scored as more likely under the log-prior.

Since we have that 
\begin{equation*}
\begin{split}
\max \left \{W^{\text{true}} (x + z^{\text{true}}_n), \alpha W^{\text{true}} (x + z^{\text{true}}_n)\right \}  
= \max \left \{\widehat{W}^{(C)} (x + \widehat{z}_n^{(C)}), \alpha \widehat{W}^{(C)} (x + \widehat{z}_n^{(C)})\right \},
\end{split}
\end{equation*}
we see that the alternate parameters are as likely as the original parameters under the likelihood; that is,
\begin{align*}
\prod_n p(y_n | x_n, z^{\text{true}}_n, W^{\text{true}}) = \prod_n p\left(y_n | x_n, \widehat{z}_n^{(C)}, \widehat{W}^{(C)}\right). 
\end{align*}
Since the likelihood under both sets of parameters is equal, 
the difference between the log-posterior, $D_N^{(C)}$,
is simply the difference of the two sets of parameters under the log-prior:
\begin{align*}
D_N^{(C)} &= \log p_W(\widehat{W}^{(C)}) - \log p_W(W) + \sum_{n=1}^N \log p_z(\widehat{z}_n^{(C)}) - \log p_z(z_n).
\end{align*}

We now show that as $N \to \infty$, the probability that $(\widehat{W}^{(C)}, \widehat{z}_1^{(C)}, \ldots \widehat{z}_N^{(C)})$ is valued higher than $(W, z_1, \ldots z_N)$ in the posterior approaches 1. That is,
\begin{align*}
\lim_{N\to \infty} \mathrm{Pr}\left[D_N^{(C)} > 0\right]
= \lim_{N\to \infty} \mathbb{E}_{p_x p_z} \left\lbrack \mathbb{I} (D_N^{(C)} > 0) \right\rbrack = 1.
\end{align*}
Since we are only interested in the sign of $D_N^{(C)}$, we demonstrate, equivalently, that 
\begin{align*}
\lim_{N\to \infty} \mathbb{E}_{p_x p_z} \left\lbrack \mathbb{I} \left(\frac{1}{N} \cdot D_N^{(C)} > 0\right) \right\rbrack = 1.
\end{align*}

Expanding $\frac{1}{N} \cdot D_N^{(C)}$, we obtain
\begin{align*}
\frac{1}{N} \cdot D_N^{(C)} &= \underbrace{\frac{1}{N} \cdot \left( \log p_W(\widehat{W}^{(C)}) - \log p_W(W) \right)}_{L_N^{(C)}} + \underbrace{\frac{1}{N} \sum_{n=1}^N \log p_z(\widehat{z}_n^{(C)}) - \log p_z(z_n)}_{R_N^{(C)}}.
\end{align*} 
We note that as $N \to \infty$, the term $L_N^{(C)}$ approaches $0$. 
Thus, it suffices to analyze the sign of  $R_N^{(C)}$; that is, we compute $\mathrm{Pr}[R_N^{(C)} > 0]$.

Let $\mathbb{E} \left[R_N^{(C)}\right]$ and $\mathbb{V} \left[R_N^{(C)}\right]$ denote the mean and variance of the random variable $R_N^{(C)}$. We next show that $\mathbb{E} \left[R_N^{(C)}\right]$ is positive, allowing us to use Cantelli's Inequality to bound the probability
that $R_N^{(C)}$ is negative under $p_x$ and $p_z$ as follows:
\begin{align*}
\mathrm{Pr}\left[R_N^{(C)} < 0\right]
&= \mathrm{Pr}\left[R_N^{(C)} - \mathbb{E} \left[R_N^{(C)}\right]  < -\mathbb{E} \left[R_N^{(C)}\right] \right] \\
&= 1 - \mathrm{Pr}\left[R_N^{(C)} - \mathbb{E} \left[R_N^{(C)}\right]  \geq -\mathbb{E} \left[R_N^{(C)}\right]\right] \\
&\leq \frac{\mathbb{V} \left[R_N^{(C)}\right]}{\mathbb{V} \left[R_N^{(C)}\right] + \mathbb{E} \left[R_N^{(C)}\right]^2}. 
\end{align*}
As $N \to \infty$, we show that this upper bound on $\mathrm{Pr}\left[R_N^{(C)} < 0\right]$ approaches $0$.

We start by expanding $R_N^{(C)}$ as follows:
\begin{align*}
R_N^{(C)} &= \frac{1}{N} \sum_{n=1}^N \log p_z(\widehat{z}_n^{(C)}) - \log p_z(z_n) \\
&= -\frac{1}{2 \sigma^2_z} \cdot \frac{1}{N} \sum_{n=1}^N (\widehat{z}_n^{(C)})^2 - z_n^2  \\
&= \frac{1}{2 \sigma^2_z} \cdot \frac{1}{N} \sum_{n=1}^N z_n^2 - (\widehat{z}_n^{(C)})^2 \\
2 \sigma^2_z \cdot R_N^{(C)} &= \frac{1}{N} \sum_{n=1}^N z_n^2 - (\widehat{z}_n^{(C)})^2 \\
&= \frac{1}{N} \sum_{n=1}^N z_n^2 - ((C - 1) x_n + Cz_n)^2  \\
&= \frac{1}{N} \sum_{n=1}^N (2 C - C^2 - 1) \cdot x_n^2 + (1 - C^2) \cdot z_n^2 + (2C - 2C^2) \cdot x_n z_n \\
&= (2 C - C^2 - 1) \cdot \frac{1}{N} \sum_{n=1}^N x_n^2 
+ (1 - C^2) \cdot \frac{1}{N} \sum_{n=1}^N z_n^2 
+ (2C - 2C^2) \cdot \frac{1}{N} \sum_{n=1}^N x_n z_n 
\end{align*}

The variance of $R_N^{(C)}$ can be computed as follows:
\begin{align*}
\mathbb{V} \left[R_N^{(C)}\right] 
= \frac{\mathbb{V} \left[R_1^{(C)}\right]}{N}.
\end{align*}
Therefore, as $N$ increases, the variance approaches $0$ at a rate of $1 / N$.

The mean of $R_N^{(C)}$ can be written as:
\begin{align*}
2 \sigma^2_z \cdot \mathbb{E} \left[R_N^{(C)}\right] 
&= (2 C - C^2 - 1) \cdot \mathbb{E}_{p_x} \left[x^2\right] 
+ (1 - C^2) \cdot \mathbb{E}_{p_z} \left[z^2\right]
+ (2C - 2C^2) \cdot \mathbb{E}_{p_x p_z} \left[x z\right] \\
&= (2 C - C^2 - 1) \cdot (\sigma^2_x + \mu_x^2)
+ (1 - C^2) \cdot \sigma^2_z
+ (2C - 2C^2) \cdot \mu_x \cdot 0 \\
&= (2 C - C^2 - 1) \cdot (\sigma^2_x + \mu_x^2)
+ (1 - C^2) \cdot \sigma^2_z
\end{align*}
Thus, the mean is positive when
\begin{align*}
0 < \frac{\sigma^2_x + \mu_x^2 - \sigma^2_z}{\sigma^2_x + \mu_x^2 + \sigma^2_z} < C < 1,
\end{align*}
which is satisfied whenever $\sigma^2_x + \mu_x^2 > \sigma^2_z$. 

Now, by Cantelli's Inequality, we have that:
\begin{align*}
\mathrm{Pr}\left[R_N^{(C)} < 0\right] 
\leq \frac{\mathbb{V} \left[R_N^{(C)}\right]}{\mathbb{V} \left[R_N^{(C)}\right] + \mathbb{E} \left[R_N^{(C)}\right]^2}
= \frac{\mathbb{V} \left[R_1^{(C)}\right]}{\mathbb{V} \left[R_1^{(C)}\right] + N \cdot \mathbb{E} \left[R_N^{(C)}\right]^2},
\end{align*}
which shows that $\mathrm{Pr}\left[R_N^{(C)} < 0\right]$ approaches 0 as $N \to \infty$.

\end{proof}

\subsection{Asymptotic Bias of 1-Layer BNN+LV Posterior Mode}\label{sec:1layer_proof}

Let $W$ be the set of network weights $\{W^x, W^z, W^{\text{out}}, b^\text{hidden}, b^\text{out}\}$.
Consider a data-set generated from the following model:
\begin{equation} \label{eq:1-layer}
\begin{split}
W &\sim p(W) \\
z &\sim \mathcal{N}(0, \sigma^2_z \cdot I)\\
\epsilon & \sim N(0, \sigma^2_\epsilon)\\
a^\text{hidden} &=g \left(W^x x + W^z z + b^\text{hidden}\right),\\
y &=  (a^\text{hidden})^\top W^{\text{out}} + b^\text{out} + \epsilon
\end{split}
\end{equation}

\ThmSingleLayer*

\begin{proof}

The proof follows in the same fashion as the one for Theorem \ref{thm:post}.
We assume the model in Equation \ref{eq:1-layer}. 
For a given $W$, let $\widehat{W}$ denote the set 
$\{\widehat{W}^x, \widehat{W}^z, W^{\text{out}}, \widehat{b}^\text{hidden}, b^\text{out}\}$,
where
\begin{equation*}
\begin{split}
\widehat{W}^x = W^x + W^zS,\quad \widehat{W}^z =R, \quad
\widehat{z} = Tz - TSx - U, \quad \widehat{b}^\text{hidden} = b + RU,
\end{split}
\end{equation*}
for any choice of diagonal matrix $S \in \mathbb{R}^{D\times D}$, vector $U\in \mathbb{R}^D$, 
and any factorization $W^z = RT$ where $T \in \mathbb{R}^{D\times D}$.

For any set of parameters $(W, z_1, \ldots z_N)$,
we show that we can find alternate parameters $(\widehat{W}, \widehat{z}_1 \ldots \widehat{z}_N)$, 
that are scored as more likely under the log-posterior $\log p(W, \{z_n\} | \mathcal{D})$.
We define $D_N$ to be the difference between the scaled parameters 
$(\widehat{W}, \widehat{z}_1, \ldots \widehat{z}_N)$ 
and the ground-truth parameters $(W, z_1, \ldots z_N)$ under the joint posterior $\log p(W, Z|\mathcal{D})$. 
Since $f(x_n, z_n; W) = f(x_n,  \widehat{z}_n; \widehat{W})$ by construction, 
the alternate parameters are equally as likely as the original parameters under the likelihood:
$p(y_n | x_n, z_n, W) = p(y_n | x_n,  \widehat{z}_n, \widehat{W})$.
As such, $D_N$ is simply the difference of the two sets of parameters under the log-prior:
\begin{align*}
D_N &= \log p_W(\widehat{W}) - \log p_W(W) + \sum_{n=1}^N \log p_z(\widehat{z}_n) - \log p_z(z_n).
\end{align*}
We now show that as $N \to \infty$, the probability that $(\widehat{W}, \widehat{z}_1, \ldots \widehat{z}_N)$ is valued higher than $(W, z_1, \ldots z_N)$ in the posterior approaches 1. That is,
\begin{align*}
\lim_{N\to \infty} \mathrm{Pr}\left[D_N > 0\right]
= \lim_{N\to \infty} \mathbb{E}_{p_x p_z} \left\lbrack \mathbb{I} (D_N > 0) \right\rbrack = 1.
\end{align*}
Since we are only interested in the sign of $D_N$, we demonstrate, equivalently, that 
\begin{align*}
\lim_{N\to \infty} \mathbb{E}_{p_x p_z} \left\lbrack \mathbb{I} \left(\frac{1}{N} \cdot D_N > 0\right) \right\rbrack = 1.
\end{align*}

Expanding $\frac{1}{N} \cdot D_N$, we obtain
\begin{align*}
\frac{1}{N} \cdot D_N &= \underbrace{\frac{1}{N} \cdot \left( \log p_W(\widehat{W}) - \log p_W(W) \right)}_{L_N} + \underbrace{\frac{1}{N} \sum_{n=1}^N \log p_z(\widehat{z}_n) - \log p_z(z_n)}_{R_N}.
\end{align*} 
We note that as $N \to \infty$, the term $L_N$ approaches $0$. 
Thus, it suffices to analyze the sign of  $R_N$; that is, we compute $\mathrm{Pr}[R_N > 0]$.

Let $\diag(t)$ represent a diagonal matrix with diagonal elements $t_1, \dots, t_D$.
For simplicity, we set $U = 0$, $S = I$, $R = W^z T^{-1}$ and $T = \diag(t)$ (where $t_d \neq 0$), giving us:
\begin{align*}
\widehat{W}^x = W^x + W^z,\quad \widehat{W}^z = W^z T^{-1}, \quad
\widehat{z} = T \cdot z - T \cdot x, \quad \widehat{b}^\text{hidden} = b.
\end{align*}
Following the proof for Theorem \ref{thm:post}, it suffices to show that the mean of
$R_N$ is positive and that the its variance approaches $0$ at a rate of $1 / N$ in order 
to prove that $\mathrm{Pr}\left[R_N < 0\right]$ approaches $0$ as $N \to \infty$. 

For this model, $R_N$ is defined as:
\begin{align*}
R_N &= \frac{1}{N} \sum_{n=1}^N \log p_z(\widehat{z}_n) - \log p_z(z_n) \\
2 \sigma^2_z \cdot R_N &= \frac{1}{N} \sum_{n=1}^N z_n^\intercal z_n - (T z_n - T x_n)^\intercal (T z_n - T x_n) \\
&= \frac{1}{N} \sum_{n=1}^N \sum\limits_{d=1}^D -t_d^2 \cdot x_{n,d}^2 + 2 t_d^2 \cdot x_{n,d} \cdot z_{n,d} - t_d^2 z_{n,d}^2 + z_{n,d}^2 \\
&= \frac{1}{N} \sum_{n=1}^N \sum\limits_{d=1}^D -t_d^2 \cdot x_{n,d}^2 + 2 t_d^2 \cdot x_{n,d} \cdot z_{n,d} + (1 - t_d^2) \cdot z_{n,d}^2
\end{align*}
The mean of $R_N$ can be computed as follows:
\begin{align*}
 \mathbb{E} \left[R_N\right] 
 &= \sum\limits_{d=1}^D -t_d^2 \cdot \mathbb{E}_{p_{x_d}} \left\lbrack x^2 \right\rbrack + 2 t_d^2 \cdot \mathbb{E}_{p_{x_d}} \left\lbrack x \right\rbrack \cdot \mathbb{E}_{p_{z_d}} \left\lbrack z \right\rbrack + (1 - t_d^2) \cdot \mathbb{E}_{p_{z_d}} \left\lbrack z^2 \right\rbrack \\
 &= \sum\limits_{d=1}^D -t_d^2 \cdot \mathbb{E}_{p_{x_d}} \left\lbrack x^2 \right\rbrack + 2 t_d^2 \cdot \mathbb{E}_{p_{x_d}} \left\lbrack x \right\rbrack \cdot 0 + (1 - t_d^2) \cdot \mathbb{E}_{p_{z_d}} \left\lbrack z^2 \right\rbrack \\
&= \sum\limits_{d=1}^D -t_d^2 \cdot (\mu_{x_d}^2 + \sigma^2_{x_d}) + (1 - t_d^2) \cdot \sigma^2_{z_d} 
\end{align*}
where $p_{x_d}, p_{z_d}$ are the $d$th marginals of $p_x, p_z$,
and $\mu_{x_d}, \sigma^2_{x_d}$ are the mean and variance of the $d$th marginal of $x$. 
As such, for any $t_d$ that satisfies the following conditions,
\begin{align*}
0 < |t_d| < \sqrt{\frac{\sigma^2_z}{\mu_{x_d}^2 + \sigma^2_{x_d} + \sigma^2_z}},
\end{align*}
$\mathbb{E} \left[R_N\right]$ is positive. 
Just as in the proof for the 1-Node BNN+LV, the variance of $R_N$ can be computed as follows:
\begin{align*}
\mathbb{V} \left[R_N\right] 
= \frac{\mathbb{V} \left[R_1\right]}{N}.
\end{align*}
Therefore, as $N$ increases, the variance approaches $0$ at a rate of $1 / N$.
Now, by Cantelli's Inequality, we have that:
\begin{align*}
\mathrm{Pr}\left[R_N < 0\right] 
\leq \frac{\mathbb{V} \left[R_N\right]}{\mathbb{V} \left[R_N\right] + \mathbb{E} \left[R_N\right]^2}
= \frac{\mathbb{V} \left[R_1\right]}{\mathbb{V} \left[R_1\right] + N \cdot \mathbb{E} \left[R_N\right]^2},
\end{align*}
which shows that $\mathrm{Pr}\left[R_N^{(C)} < 0\right]$ approaches 0 as $N \to \infty$.

\end{proof}

\subsection{Additional Types of Non-Identifiability of 1-Layer BNN+LV Models}\label{sec:y_dep}
Assume that the activation function $g$ is invertible. Let $\{(x_n, y_n)\}_{n=1}^N$ be a set of observed data generated by the model parameters $W$. Define $\widehat{b}^\text{hidden}, b^\text{out}$ to be zero, $\widehat{W}^x$ to be the $H\times D$ zero matrix and $\widehat{W}^{\text{out}}$ to be the $1\times H$ matrix of consisting of $\frac{1}{DH}$ in all entries. Finally, let $\widehat{W}^z$ be a $H\times D$ matrix of 1's and let $\widehat{z}_n = g^{-1}(y_n)$. 

Then, we have that $y_n = f(x_n, \widehat{z}_n; \widehat{W})$. That is, the alternate set of model parameters $\widehat{W}^x$ reconstructs the observed data perfectly. We note that in this case, the latent noise variable $z$ is a function of the observed output $y$ and is hence dependent on the input $x$.

\section{Intractability of Marginalizing out $Z$} \label{sec:intractable-marginalization-of-z}

Instead of approximating the joint posterior $p(W, Z | \mathcal{D})$, 
one might argue that it is better to directly approximate the posterior marginal $p(W | \mathcal{D})$,
since only the posterior marginal is used in the posterior predictive (Equation \ref{eq:posterior-predictive}).
This allows us to bypass the aforementioned local optima caused by approximating $p(W, Z | \mathcal{D})$. 
However, marginalizing out $Z$ may be intractable:
\begin{align} 
\footnotesize
\begin{split}
\mathrm{argmin}_\phi D_\text{KL} \lbrack q_\phi(W | \mathcal{D}) || p(W | \mathcal{D}) \rbrack 
= \mathrm{argmin}_\phi \mathbb{E}_{q_\phi(W | \mathcal{D})} \lbrack \log \underbrace{\mathbb{E}_{p(Z)} \lbrack p(Y | X, W, Z) \rbrack}_{\text{intractable}} \rbrack - D_\text{KL} \lbrack q_\phi(W | \mathcal{D}) || p(W) \rbrack
\label{eq:marg-elbo}
\end{split}
\end{align}
In order to tractably approximate the above expectation over $Z$, 
it is natural to apply a variational lower bound, 
as commonly does in the Variational Autoencoder literature~\citep{kingma_auto-encoding_2013}:
\begin{align}
\log \mathbb{E}_{p(Z)} \lbrack p(Y | X, W, Z) \rbrack &\geq \mathbb{E}_{q_\phi(Z | \mathcal{D})} \lbrack \log p(Y | X, W, Z) \rbrack - D_\text{KL} \lbrack q_\phi(Z | \mathcal{D}) || p(Z) \rbrack.
\label{eq:ll-lower-bound}
\end{align}
By plugging Equation \ref{eq:ll-lower-bound} into Equation \ref{eq:marg-elbo}, 
we get the original mean-field ELBO in Equation \ref{eq:mf-elbo}. 
This derivation of the ELBO illuminates an additional issue:
since the marginal likelihood, $\mathbb{E}_{p(Z)} \lbrack p(Y | X, W, Z) \rbrack$, is lower-bounded,
it is no longer highest at the ground-truth weights (or at weights that parameterize equivalent functions).
As such, the $q_\phi(W | \mathcal{D})$ that maximizes the ELBO may not concentrate around functions that explain the data well.
In fact, the harder $p(Z | \mathcal{D})$ is to approximate using $q_\phi(Z | \mathcal{D})$, the larger the gap is between the bound 
and the true marginal likelihood, and the more $q_\phi(W | \mathcal{D})$ will concentrate around functions that explain the data poorly. 
We note that this issue occurs at the \emph{global} optima of the ELBO.
To alleviate this issue, in which the global optima of the ELBO prefer functions that explain the data poorly, 
one can use a more flexible variational family, leading to a tighter variational bound.
The challenge here, though, is that it is unknown on what types of data-sets it is important to model 
the posterior dependencies between the weights $W$ and latent inputs $Z$,
which result in computationally expensive estimates of the ELBO.

\FloatBarrier

\section{Choosing Differentiable Forms of the NCAI Objective} \label{sec:training-with-ncai}
Tractable training with NCAI depends on instantiating a differentiable form of the training Equation \ref {eqn:obj}. 
In the following, we consider computationally efficient proxies for the two constraints in the NCAI objective.

\subsection{Defining $I_\phi(x; z)$} \label{sec:mixz}

\paragraph{Limitations of directly using $I_\phi(x; z)$.}
If we use KL-divergence, we get the following estimator:
\begin{align}
  I_\phi(x; z) &= D_{\text{KL}}[q_\phi(z|x) p(x) \| q_\phi(z) p(x)] \\
  &= \E{q_\phi(z|x) p(x)} \left\lbrack \log \frac{q_\phi(z|x) p(x)}{q_\phi(z) p(x)} \right\rbrack \\
  &\approx \frac{1}{N} \sum\limits_{n=1}^N
  \E{q_\phi(z_n | x_n)} \left\lbrack \log \frac{q_\phi(z_n | x_n)}{\frac{1}{N} \sum\limits_{n=1}^N q_\phi(z_n | x_n, y_n)} \right\rbrack
\end{align}
where
\begin{align}
  q_\phi(z_n | x_n) &= \E{p(y_n | x_n)} \left\lbrack q_\phi(z_n | x_n, y_n) \right\rbrack 
\end{align}
The above estimator is intractable to compute for two reasons:
\begin{enumerate}
\item Estimating $q_\phi(z_n | x_n)$ requires samples from $p(y_n | x_n)$ (the posterior predictive).
To sample from $p(y_n | x_n)$ one already needs to have a good estimate of the posterior;
however, given a good estimate of $p(y_n | x_n)$, there would be no need to estimate $I_\phi(x; z)$ in the first place.
Alternatively, to sample from $p(y_n | x_n)$, one needs multiple observations of $y$ for the same $x$,
which we do not have.
As such we are forced to estimate $q_\phi(z_n | x_n)$ with the single observation we have $(x_n, y_n)$,
giving us $q_\phi(z_n | x_n, y_n)$ as the approximation.
As a result, it is not possible to estimate $I_\phi(x; z)$ without bias. 
\item The nested expectations above require too many samples for a low-variance estimator of 
the gradient of $I_\phi(x; z)$.
\end{enumerate}

\paragraph{Defining a computationally efficient, differentiable proxy.}
Instead of using a traditional divergence to estimate $I_\phi(x; z)$, we choose to minimize a proxy.
Minimizing a proxy, however, also comes with challenges.
There generally do not exist differentiable analytic forms for common measures of dependence 
such as mutual information, thus necessitating the use of upper/lower-bounds ~\citep{Dieng2018}.
Moreover, in using lower bounds (e.g. MINE~\citep{belghazi2018mine}), 
we found that even if the bound is tight for fixed variational parameters,
these estimators need to be re-trained even when 
the variational means are adjusted by a gradient step.
We found this is prohibitively expensive to compute in practice.

As such, as a proxy for $I_\phi(x; z)$, we penalize the correlation between $x$ and the variational means of $z$ 
as well as the correlation between $y$ and the variational means of $z$:
\begin{equation}
\begin{split}
&\lambda_3 \,\text{PC}(\{x_n\}, \{\mu_{z_n}\}) + \lambda_4\,  \text{PC}(\{y_n\}, \{\mu_{z_n}\})
\end{split}
\end{equation}
where $\text{PC}(\cdot, \{\mu_{z_n}\})$ is a measure of the average Pearson correlation 
between pairs of dimensions in $\{x_n\}$ or $\{y_n\}$ and the latent noise means $\{\mu_{z_n}\}$:
\begin{align}
\text{PC}(\{a_n\}, \{ b_n \}) &= \frac{1}{D} \sum\limits_{d=1}^D \frac{\text{Cov}(\{a_n^{(d)}\}, \{ b_n^{(d)} \})}{\sqrt{\mathbb{V}[\{a_n^{(d)} \}] \mathbb{V}[\{ b_n^{(d)} \}]}}.
\end{align}
Minimizing the correlation between $x$ and $z$ reduces the linear dependence 
between the input and the inferred noise, following the form of non-identifiability identified in Equation \ref{eq:dependence};
minimizing the correlation between $z$ and $y$ 
reduces the non-linear dependence between the input and the inferred noise, 
following the form of non-identifiability identified in Appendix \ref{sec:y_dep}.
The fact that both our constraint and the exact form of non-identifiability we characterize in Equation \ref{eq:dependence}
are both linear explains why this simple constraint is empirically effective.

\subsection{Defining $D\lbrack q_\phi(z) || p(z) \rbrack$} \label{eq:qzpz}

\paragraph{Limitations of directly using $D\lbrack q_\phi(z) || p(z) \rbrack$.}
Minimizing $D\lbrack q_\phi(z) || p(z) \rbrack$ with respect to the variational parameters is challenging
for several reasons: 
(1) common choices for divergences may be biased~\citep{He2019}, 
(2) traditional divergences such as forward/reverse-KL, Jensen-Shannon and MMD~\citep{Gretton2012}
are expensive to compute,
and lastly (3) as we show in this section,
these divergences can be artificially reduced by trivially inflating the variational variances. 

\paragraph{Defining a computationally efficient, differentiable proxy.} 
For these reasons, as a proxy for $D\lbrack q_\phi(z) || p(z) \rbrack$, we choose to penalize the 
Henze-Zirkler (HZ) differentiable non-parametric test-statistic for normality~\citep{hztest1990} 
applied to the set of latent noise means, $\{\mu_{z_n}\}_{n=1}^N$:
\begin{align}
\begin{split}
\text{HZ}(\{a_n\}_{n=1}^N) = \frac{1}{N} &\sum\limits_{i=1}^N \sum\limits_{j=1}^N \exp\left(-\frac{\beta^2}{2} (a_i - a_j)^\intercal \Sigma^{-1} (a_i - a_j) \right) \\
&- 2 (1 + \beta^2)^\frac{-k}{2} \sum\limits_{i=1}^N \exp\left( \frac{-\beta^2}{2 (1 + \beta^2)} (a_i - \mu)^\intercal \Sigma^{-1} (a_i - \mu) \right) \\
&+ N (1 + 2 \beta^2)^\frac{-k}{2}
\end{split}
\end{align}
where $\mu, \Sigma$ are the sample mean and covariance of $\{a_n\}_{n=1}^N$, 
and $\beta = \frac{1}{\sqrt{2}} \left( \frac{N (2 k + 1)}{4} \right)^{-k - 4}$.
This encourages the aggregated posterior $q_\phi(z)$ to be Gaussian.
We additionally penalize the $\ell_2$ penalty of the off-diagonal terms of the 
empirical covariance of the variational mean of the latent noise: 
\begin{align}
\lambda_1 \text{HZ}(\{\mu_{z_n}\}_{n=1}^N)
+  \lambda_2 \big\lVert \offdiag\; \widehat{\Sigma}\left(\{\mu_{z_n}\}_{n=1}^N\right) \big\rVert_2 
\end{align}
Lastly, we note that traditional divergences can be trivially minimized by setting each $q_\phi(z_n | x_n, y_n) = p(z)$ (a phenomenon known as posterior collapse~\citep{He2019}). Our HZ based constraint cannot be fooled in this way.

To demonstrate that our proposed instantiation cannot be trivially minimized by inflating the variational variances,
we performed the following experiment:
(1) we initialized the variational means uniformly inside a triangle, and the variational variances randomly,
(2) we minimized $D \lbrack q_\phi(z) || p(z) \rbrack$ using one of the aforementioned divergences.
Figure \ref{fig:div-pathology} shows $q_\phi(z)$ and the distribution of the posterior means before and after the optimization.
As the figure shows, only our Henze-Zirkler penalty optimizes the means to be Gaussian; in contrast,
the remaining divergences were trivially minimized by keeping the means near their initialized positions
and inflating the variational variances.
This suggests that a poor initialization of the posterior means $\mu_{z_n}$ cannot be corrected
without our Henze-Zirkler penalty.

\begin{figure*}[p]
    \centering
    \includegraphics[width=0.6\textwidth]{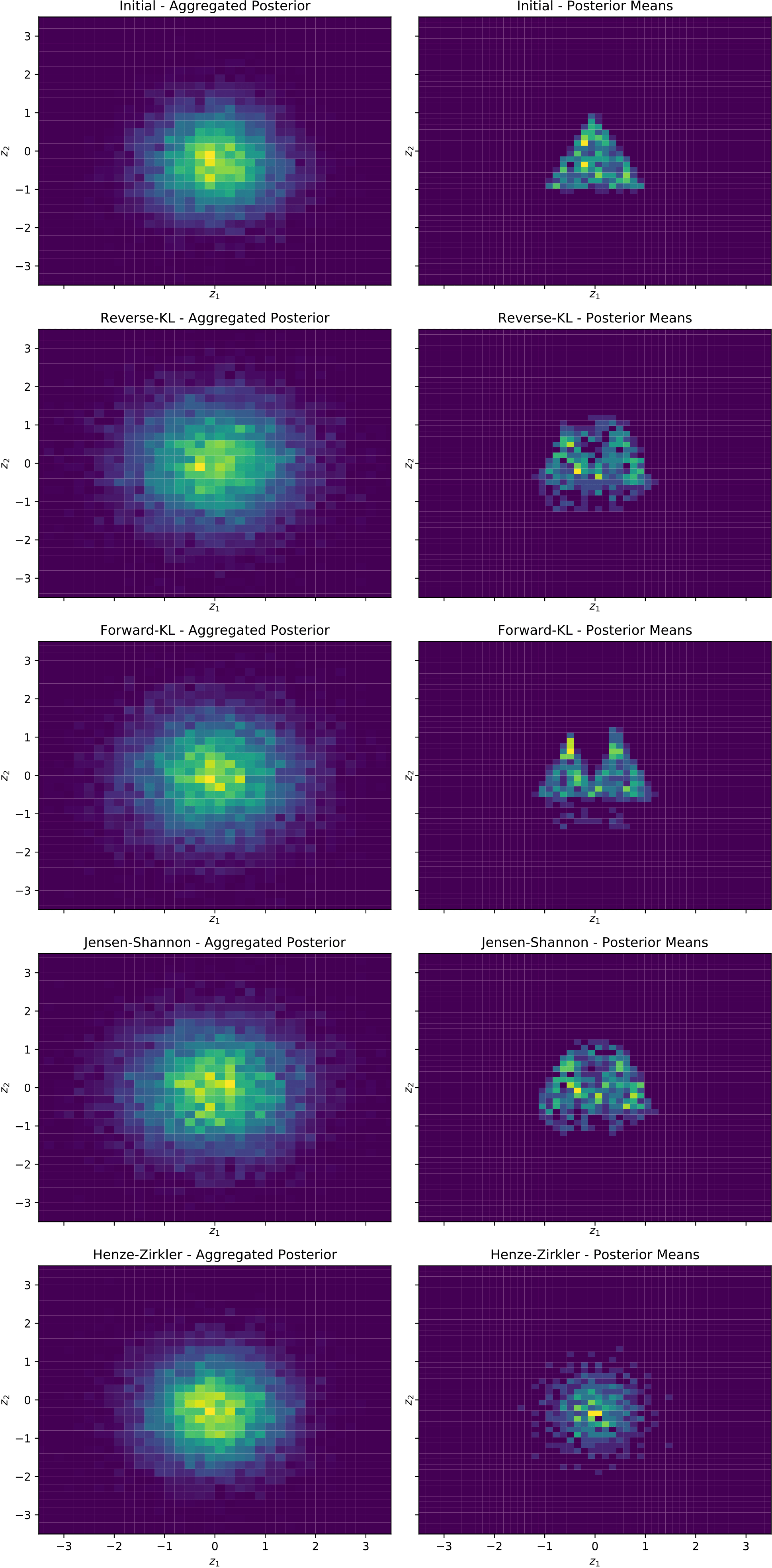} 
    \caption{{Left column: aggregated posterior $q_\phi(z)$.
    Right column: posterior means $\{ \mu_{z_1} \}_{n=1}^N$.
    The top row shows the initial variational parameters---the means of the posterior are uniformly distributed in a triangle,
    while the aggregated posterior looks Gaussian.
    Subsequent rows show the result of minimizing $D\lbrack q_\phi(z) || p(z) \rbrack$ for different divergences.
    Only our Henze-Zirkler optimizes the means to be Gaussian; in contrast,
    the remaining divergences can be trivially minimized by keeping the means near their initialized positions
    and inflating the variational variances.}}
    \label{fig:div-pathology}
\end{figure*}

\subsection{Defining the NCAI Objective}
Finally, we incorporate the ELBO and the differentiable forms of the constraints (as exponentially smoothed penalties) 
into Equation \ref{eqn:obj}:
\begin{equation} 
\begin{split}
  \mathcal{L}_{\text{NCAI}}(\phi) = & -\text{ELBO}(\phi)\\
  &+ \lambda_1 N \exp \left( \frac{\text{HZ}(\{\mu_{z_1}, \ldots, \mu_{z_N}\}}{\epsilon_T} \right) 
  + \lambda_2 N \lVert \offdiag\; \widehat{\Sigma}(\{\mu_{z_1}, \ldots, \mu_{z_N}\}) \rVert_2 \\
  &+ \lambda_3 N \exp\left(\frac{\text{PC}(\{x_n\}, \{\mu_{z_n}\})}{\epsilon_{x}}\right) 
  \cdot \exp\left(\frac{\text{PC}(\{y_n\}, \{\mu_{z_n}\})}{\epsilon_{y}}\right)
  \end{split}
  \label{eq:full-ncai-objective}
\end{equation}
where $\epsilon_T, \epsilon_x, \epsilon_y$ control the growth rate of the exponential penalties. We minimize the negative ELBO following Bayes by Backprop (BBB) \citep{blundell2015weight}: back-propagating through $\E{q_\phi(Z, W | \mathcal{D})} \lbrack \cdot \rbrack$ in the ELBO using the reparameterization trick \citep{kingma_auto-encoding_2013}, computing the KL-divergence terms and constraints using closed-form expressions.

\FloatBarrier
\section{Uncertainty Decomposition}\label{sec:uncertainty}

In addition to the quantitative results in the paper, showing that the uncertainty decomposition learned by NCAI
is quantitatively closer to that produced by HMC than the decomposition learned by BNN+LV,
we also show qualitatively that our uncertainty decomposition is closer to that of HMC in Figure \ref{fig:decomp}.
The figure shows that like HMC, NCAI has appropriately high total and aleatoric uncertainties at $x$'s for which there is a 
high variance in $y$, as well as high epistemic uncertainty for $x$'s near the boundary of the data (near $x = -5$ and $x = 5$).
In contrast, BNN+LV trained via mean-field VI does not.
In comparison to HMC, however, both NCAI and BNN+LV tend to underestimate the epistemic 
uncertainty, which should be high where $p(x)$ is low, signifying uncertainty over the model parameters due to lack of data.
This is because mean-field variational inference is generally unable to capture ``in-between uncertainty'' in BNNs ~\citep{Foong2019}.

\begin{figure*}[p]
    \centering
    
     \begin{subfigure}[t]{0.99\textwidth}
        \centering
        \includegraphics[width=\textwidth]{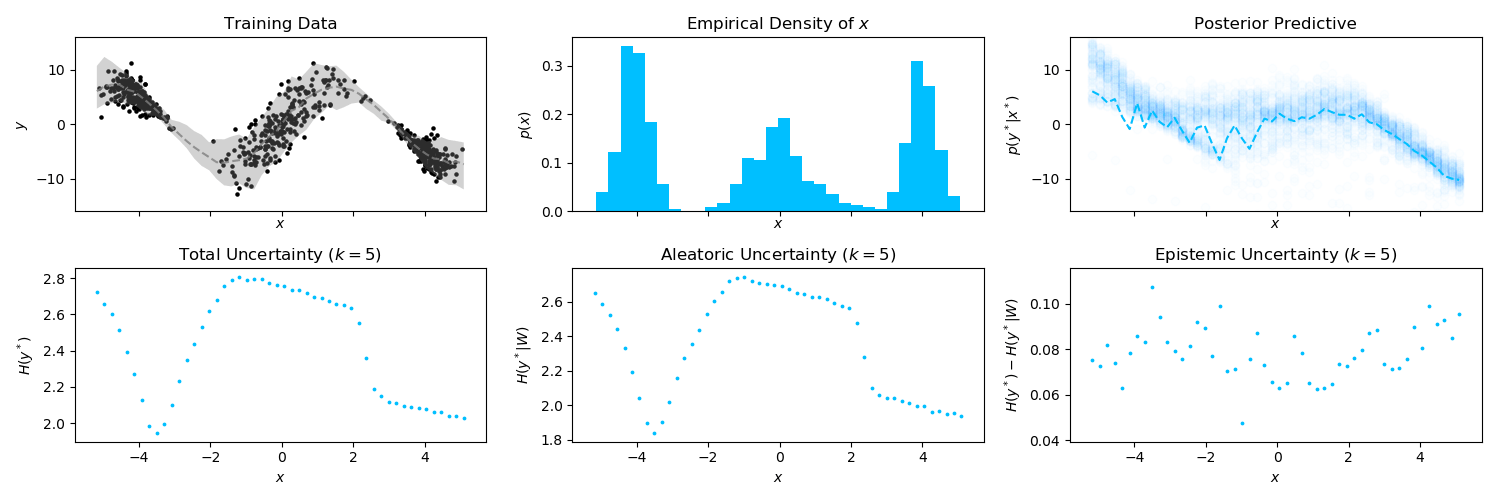} 
        \caption{{Uncertainty decomposition given by BNN+LV with mean-field VI.}}
    \end{subfigure}\vskip0.2cm
    ~ 
    \begin{subfigure}[t]{0.99\textwidth}
        \centering
         \includegraphics[width=\textwidth]{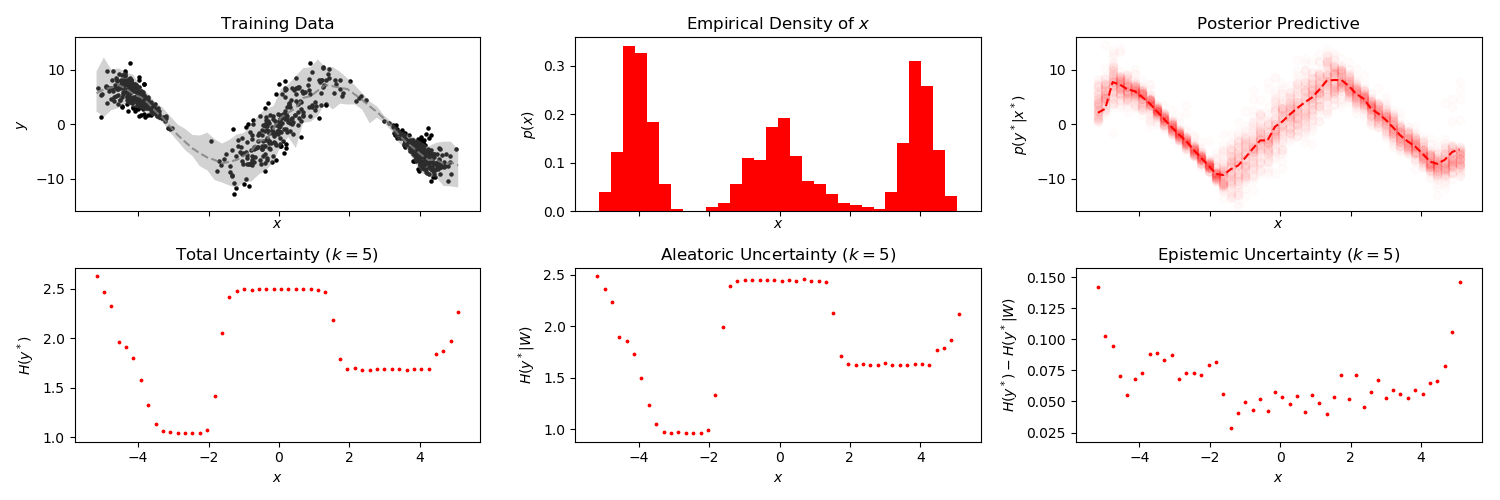} 
        \caption{{Uncertainty decomposition given by BNN+LV with NCAI.}}
    \end{subfigure}
    \vskip0.2cm
    ~ 
    \begin{subfigure}[t]{0.99\textwidth}
        \centering
         \includegraphics[width=\textwidth]{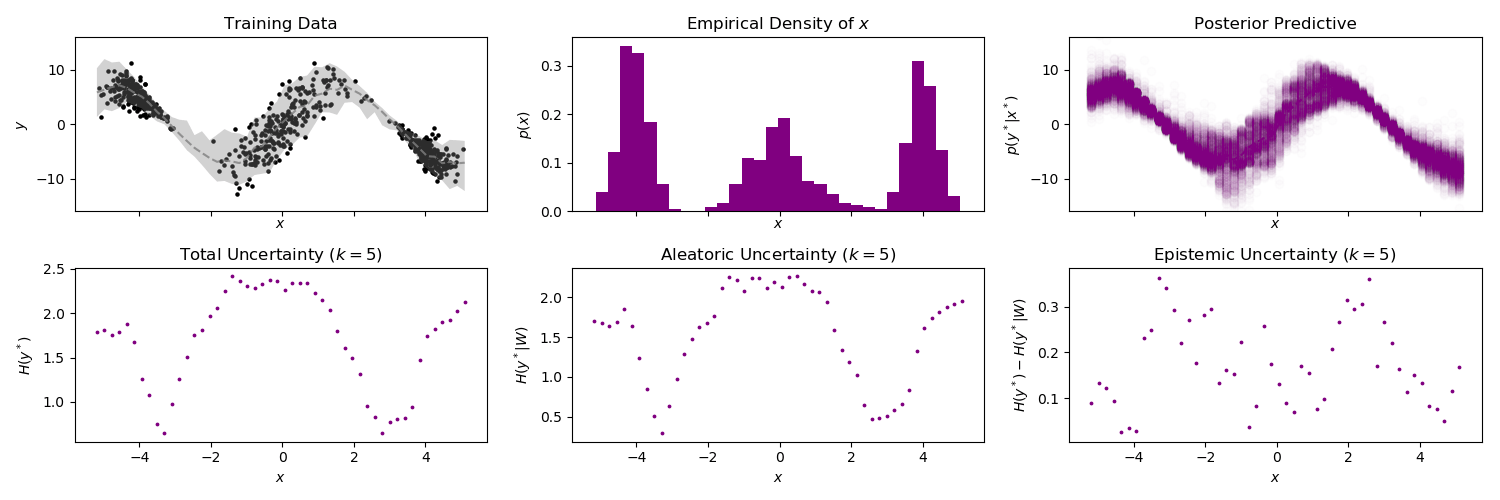} 
        \caption{{Uncertainty decomposition given by HMC.}}
    \end{subfigure}
    \caption{\textbf{Comparison of uncertainty decompositions.} We expect the epistemic noise to be roughly inversely proportional to the empirical density of $x$, e.g. epistemic uncertainty is higher where $x$ is sparsely sampled. We expect the aleatoric uncertainty to match the observed level of noise in the data. We see that BNN+LV trained with mean-field VI is unable to detect the highly noisy regions in data while BNN+LV with NCAI training learns aleatoric uncertainty that matches these regions very well.
The uncertainties are estimated using a nearest neighbor based entropy estimator with $k=5$ nearest neighbors.
HMC was trained with $\sigma^2_z = 1.0, \sigma^2_\epsilon$, as in the ground-truth, and with $\sigma^2_w = 10.0$.}
    \label{fig:decomp}
\end{figure*}
\FloatBarrier

\section{Experimental Setup} \label{sec:exp_setup} 

\subsection{Experimental Details} \label{sec:exp_details}

\paragraph{Architecture.} The network architectures we use for all experiments are summarized in Tables \ref{table:arch_real} and \ref{table:arch_toy}.
We note that we have purposefully selected lower capacity architectures to encourage the non-identifiability 
described in the paper to occur in practice.
We also note that the non-identifiability occurs even when the ground-truth network capacity is known,
as in the case of the HeavyTail and Depeweg data-sets, which have been generated using
a neural network.
As such, even when the data was generated by the same generative process as the one assumed by the model,
the problem of non-identifiability still occurs. 

\begin{table*}[t!]
\center
\footnotesize
\begin{tabular}{l ||c|c|c|c}
\textbf{Data-set} & \textbf{Size} & \textbf{Dimensionality} & \textbf{Hidden Nodes} \\ \hline \hline
\textbf{Lidar} & $221$ & 1 & $10$ \\
\textbf{Yacht} & $309$ & 6 & $5$ \\
\textbf{Energy Efficiency} & $768$ & 8 & $10$ \\
\textbf{Airfoil} & Subsampled to $1000$ & 5 & $30$ \\
\textbf{Abalone} & Subsampled to $1000$ & 10 (1-hot for categorical) & $10$ \\
\textbf{Wine Quality Red} & $1600$ & 1 & $20$ 
\end{tabular}
\caption{Experimental details for the real data-sets}
\label{table:arch_real}
\end{table*}

\paragraph{Train/Validation/Test Data-splits.}
We split each data-set into train/validation/test set 6 times.
We use the first data-split to select hyper-parameters by selecting the hyper-parameters that yield the best
average log-likelihood performance on the validation set across 10 random restarts.
After having selected the hyper-parameter for each method,
we select between $\text{NCAI}_{\lambda=0}$ and $\text{NCAI}_{\lambda}$ by picking the approach that 
yielded the best log-likelihood performance on the validation set across the 10 random restarts.
Now, using the selected hyper-parameters and form of NCAI, we train our models on the remaining 5 data-splits,
averaging the best-of-10 random restarts (selected by validation log-likelihood) across the data-splits. 

For Abalone, Airfoil, Boston Housing, Energy Efficiency, Lidar, Wine Quality Red, Yacht, Goldberg, Williams, Yuan, we split the data into a $70\%$ training set, $20\%$ validation set and $10\%$ test set.

\paragraph{Prior Hyper-parameters Selection.}
Since hyper-parameter search is expensive to compute,
we choose hyper-parameters of the priors $p(z), p(W)$ using empirical Bayes (MAP Type II). That is, we place Inverse Gamma priors ($\alpha = 3.0, \beta = 0.5$) on the variances of the network weights and the latent variables; then we approximate the negative ELBO with the MAP estimates of the variances, making the assumption that these terms dominate the respective integrals in which they appear:
\begin{align}
\begin{split}
-\text{ELBO}(\phi) \approx  &-\E{q_\phi(Z, W | \mathcal{D})} \lbrack \log p(Y | X, W, Z) \rbrack \\
  &+ D_{\text{KL}}[q_\phi(W | \mathcal{D}) \| p(W | s_w^*)p(s_w^*)] \\
  &+ D_{\text{KL}}[q_\phi(Z | \mathcal{D}) \| p(Z | s_z^*)p(s_z^*)] \\
  \end{split}
 \end{align}
 where we define:
 \begin{align}
 s_z^* &= \underset{s_z}{\text{argmin}}\; D_{\text{KL}}[q_\phi(Z | \mathcal{D}) \| p(Z | s_z)p(s_z)] 
 =  \frac{2 \beta + \frac{1}{N} \sum\limits_n \left\lbrack \text{tr}\left(\Sigma_{q_n} \right) + \mu_{q_n}^T \mu_{q_n} \right\rbrack}{K + 2 \alpha - 2}, \\
 s_w^* &= \underset{s_w}{\text{argmin}}\; D_{\text{KL}}[q_\phi(W | \mathcal{D}) \| p(W | s_w)p(s_w)] 
 = \frac{2 \beta + \text{tr}\left(\Sigma_q \right) + \mu_q^T \mu_q }{H + 2 \alpha - 2},
 \end{align}
with $K$ and $H$ as the dimensionality of $z_n$ and $W$, respectively. 
The optimal variances, $s_z^*, s_w^*$, have analytic solutions:
\begin{align}
s_z^* &= \frac{2 \beta_z + \frac{1}{N} \sum\limits_n \left\lbrack \tr\left(\Sigma_{q_n} \right) + \mu_{q_n}^\intercal \mu_{q_n} \right\rbrack}{K + 2 \alpha_z - 2}, \\
s_w^* &= \frac{2 \beta_W + \tr\left(\Sigma_q \right) + \mu_q^\intercal \mu_q }{K + 2 \alpha_W - 2},
\end{align}
where the $\beta$'s and $\alpha$'s are the parameters of the respective Inverse Gamma distributions. 
In training, we update the objective as well as the optimal hyper-parameters via coordinate descent. That is, we iteratively compute $s_z^*, s_w^*$ in closed-form given the current $\phi$, and then optimize $\phi$ while holding  $s_z^*, s_w^*$ fixed. 

\paragraph{Remaining Hyperparameter Selection.} 
For the data-sets Abalone, Airfoil, Boston Housing, Energy Efficiency, Lidar, Wine Quality Red, Yacht, Goldberg, Williams, Yuan,
we used grid-searched over the following parameters:
\begin{itemize}
\item BNN: $\sigma^2_\epsilon = \{1.0, 0.1, 0.01\}$
\item BNN+LV: $\sigma^2_\epsilon = \{0.1, 0.01\}$
\item NCAI: 
$\sigma^2_\epsilon = \{0.1, 0.01\}$, 
$\lambda_2=\{10.0\}$, 
$\epsilon_T=\{0.01, 0.0003\}$, 
$\epsilon_y=\{0.1, 0.5\}$, 
$\epsilon_x=\{0.5, 1.0\}$
\end{itemize}
For HeavyTails we grid-searched over the following parameters:
\begin{itemize}
\item BNN: $\sigma^2_\epsilon = \{1.0, 0.1, 0.5\}$
\item BNN+LV: $\sigma^2_\epsilon = \{0.1\}, \sigma^2_z = \{0.01\}$
\item NCAI: 
$\sigma^2_\epsilon = \{0.1\}$,
$\lambda_2=\{10.0\}$, 
$\epsilon_T=\{0.01, 0.0003\}$,
$\epsilon_y=\{0.1, 0.5\}$,
$\epsilon_x=\{0.5, 1.0\}$
\end{itemize}
For Depeweg we grid-searched over the following parameters:
\begin{itemize}
\item BNN: $\sigma^2_\epsilon = \{1.0, 0.1, 0.5\}$
\item BNN+LV: $\sigma^2_\epsilon = \{0.1\}, \sigma^2_z = \{1.0\}$
\item NCAI: \\
$\sigma^2_\epsilon = \{0.1\}$,
$\lambda_2=\{10.0\}$,
$\epsilon_T=\{0.01, 0.0003\}$,
$\epsilon_y=\{0.1, 0.5\}$,
$\epsilon_x=\{0.5, 1.0\}$
\end{itemize}

\begin{table*}[t!]
\center
\footnotesize
\begin{tabular}{l ||c|c|c|c}
\textbf{Data-set} & \textbf{Hidden Nodes} & \textbf{Layers} \\ \hline \hline
\textbf{Williams} & 20 & 2 \\
\textbf{Yuan} & 20 & 1 \\
\textbf{Goldberg} & 20 & 1 \\
\textbf{HeavyTail} & 50 & 1 \\
\textbf{Depeweg} & 50 & 1 \\
\textbf{Bimodal} & 50 & 2 \\
\end{tabular}
\caption{Experimental details for the synthetic data-sets}
\label{table:arch_toy}
\end{table*}

\paragraph{Random Initialization.} For all models without a specialized random initialization,
we initialize all variational parameters initialized to samples drawn from xavier-normal with a gain of $1.0$.
For runs in which the prior variances are estimated via empirical Bayes MAP (described above),
we initialize the prior variances to samples from the hyper-prior. 


\subsection{Evaluation Metrics}\label{sec:eval}
\paragraph{Quality of Fit.} We measure the \emph{training reconstruction MSE}, the ability of the model to reconstruct the training targets with the inferred weights and latent variables:
\begin{equation}
\frac{1}{N} \sum\limits_{n=1}^N \E{q_\phi(z_n | x_n, y_n) q_\phi(W | \mathcal{D})} \left\lbrack \lVert y_n - f(z_n, W) \rVert_2^2 \right\rbrack. 
\label{eqn:recon}
\end{equation}
At test time, we measure the quality of the posterior predictive distribution of the model by computing the \emph{average log-likelihood}:
\begin{equation}
\frac{1}{N} \sum\limits_{n=1}^N \log \E{p(z_n) q_\phi(W | \mathcal{D})} \left\lbrack p(y_n | x_n, W, z_n) \right\rbrack.
\end{equation}
We also compute the predictive quality of the model by computing the \emph{predictive MSE}:
\begin{equation}
\frac{1}{N} \sum\limits_{n=1}^N \left\lbrack \lVert y_n -  \E{p(z_n)q_\phi(W | \mathcal{D})} [f(z_n, W)] \rVert_2^2 \right\rbrack.
\label{eqn:mse}
\end{equation}
Note that the difference between the reconstruction MSE 
(Equation \ref{eqn:recon}) and the predictive MSE (Equation \ref{eqn:mse}) 
is that in the latter we sample the latent variables from the prior distributions rather than the learned posterior distributions.

\paragraph{Posterior Predictive Calibration.} We measure the quality of the model's predictive uncertainty by computing the percentage of observations for which the ground-truth $y$ lies within a $95\%$ predictive-interval (PI) 
of the learned posterior predictive---this quantity is called the Prediction Interval Coverage Probability (PICP). 
We measure the tightness of the model's predictive uncertainty by computing the $95\%$ Mean Prediction Interval Width (MPIW).

\paragraph{Satisfaction of Model Assumptions.} We estimate the mutual information between $x$ and $z$
by computing the Kraskov nearest-neighbor based estimator~\citep{kraskov2004} (with $5$ nearest neighbors)
on the $x$'s and the means of the $z$'s: $\hat{I}(x; \mu_z)$. 
We use the variational means $\mu_{z_n}$ instead of samples from the aggregated posterior $z \sim q_\phi(z)$, 
since otherwise dependence can be artificially reduced with large variational variances $\sigma_{z_n}^2$.

For the univariate case, when $D = K = 1$, we use the Kolmogorov-Smirnov (KS) 
two-sample test statistic~\citep{kstest} to evaluate divergence between $q_\phi(z)$ and $p(z)$. 
When computing the test statistic, we represent $q_\phi(z)$ using $\mu_{z_n}$'s and $p(z)$ using its samples.
This is because the $\sigma_{z_n}^2$'s are large, the distance between $q_\phi(z)$ and $p(z)$ 
more difficult to detect. A lower KS test-statistic indicates that $q_\phi(z)$ and $p(z)$ are more similar. 
We compute the Jensen-Shannon divergence between $q_\phi(z)$ and $p(z)$ in multivariate cases.

\section{Data-sets}\label{sec:datasets}

 \paragraph{Synthetic Data Generated with Ground-Truth.}
We generate three synthetic data-sets with corresponding ground-truth in order to guarantee 
that our generative process matches our data.
We generate these data-sets by training a neural network to map the $x_n$'s and ground-truth $z_n$'s 
to the noise-less $y_n$'s (i.e. before adding $\epsilon$), specified by some ground-truth function $f_\text{GT}$. 
We then re-generate the $y_n$'s from the learned neural network and treat that network as the ground-truth function.
We specify the parameters of the generative process in Equation \ref{eqn:gen_model} 
used to generate each of these data-sets below:
\begin{enumerate}
\item \textbf{Heavy-Tail}: 
\begin{align*}
f_\text{GT}(x, z) &= 6 \cdot \tanh \left( 0.1 x^3(z + 1)^6 -10 \cdot xz^2 + z \right) \\
\sigma^2_\epsilon &= 0.1 \\
\sigma^2_z &= 0.01 
\end{align*}
For this data-set, we sampled 300 training, validation and test $x$'s uniformly on $[-4, 4]$. 

\item \textbf{Depeweg} \citep{depeweg2018decomposition}:
\begin{align*}
f_\text{GT}(x, z) &= 7 \sin(x) + 3 |\cos (x / 2) | z \\
\sigma^2_\epsilon &= 0.1 \\
\sigma^2_z &= 1.0
\end{align*}
For this data-set, we sampled 750 training $x$'s, 250 validation and 250 test $x$'s from a uniform mixture of the following gaussians: $\mathcal{N}(0, -4.0, 0.16), \mathcal{N}(0, 0, 0.81), \mathcal{N}(0, 4.0, 0.16)$.
(Note: $\sigma^2_\epsilon = 1.0$ in the original data-set from \cite{depeweg2018decomposition}).

\item \textbf{Bimodal} \citep{depeweg2018decomposition}: 
\begin{align*}
f_\text{GT}(x, z) &= \mathbb{I}(z > 0) \cdot 10 \sin(x) + \mathbb{I}(z \leq 0) \cdot 10 \cos(x) \\
\sigma^2_\epsilon &= 1.0 \\
\sigma^2_z &= 0.1
\end{align*}
For this data-set, we sampled 750 training $x$'s, 250 validation and 250 test $x$'s from an exponential distribution with $\lambda = 2$, shifted to the left by $0.5$ and clipped to the range of $[-0.5, 2.0]$. 
 \end{enumerate}

\paragraph{Synthetic Data without Ground-Truth.}
We also consider 3 synthetic data-sets, most of which have been widely used to evaluate heteroscedastic regression models \citep{goldberg1998regression, wright1999bayesian, kersting2007most, wang2012Gaussian}:
\begin{enumerate}
\item \textbf{Goldberg} \citep{goldberg1998regression}: targets are given by $y = 2\sin(2\pi x) + \epsilon(x)$, where $\epsilon(x) \sim \mathcal{N}(0, x + 0.5)$. Evaluated on 200 training input, 200 validation and 200 test inputs uniformly sampled from $[0, 1]$. 
\item \textbf{Yuan} \citep{yuan2004doubly}: targets are given by $y = 2[\mathrm{exp}\{-30(x - 0.25)^2 + \sin(\pi x^2)\}] - 2 + \epsilon(x)$, where $\epsilon(x) \sim \mathcal{N}(0, \mathrm{exp}\{\sin(2\pi x)\})$. Evaluated on 200 training input, 200 validation and 200 test inputs uniformly sampled from $[0, 1]$. 
\item \textbf{Williams} \citep{williams1996using}: the targets are given by $y = \sin(2.5x) \cdot \sin(1.5x) + \epsilon(x)$, where $\epsilon(x) \sim \mathcal{N}(0, 0.01 + 0.25 (1 - \sin(2.5x))^2)$. Evaluated on 200 training input, 200 validation and 200 test inputs uniformly sampled from $[0, 1]$. 
 \end{enumerate}

\paragraph{Real Data.} We use 6 UCI data-sets~\citep{uci17} and a data-set commonly used in the heteroscedastic literature, Lidar~\citep{lidar94}---see Table \ref{table:arch_real} for details.

\FloatBarrier
\section{Additional Quantitative Results and Metrics}\label{sec:results_tab}

Table \ref{table:synth_rmse} shows the RMSE for all models on synthetic data---NCAI consistently 
achieves lower RMSE than mean-field VI.
Tables \ref{table:real_metrics1} and \ref{table:real_metrics2} show the mutual information 
between $x$ and $z$ under the posterior on real data-sets and the Henze-Zirkler test-statistic---on most 
data-sets NCAI better satisfies modeling assumptions than mean-field VI.
Lastly, Tables \ref{res:abalone}, \ref{res:airfoil}, \ref{res:ee}, \ref{res:heavytail}, \ref{res:goldberg}, \ref{res:williams}, 
\ref{res:yuan}, \ref{res:wine}, \ref{res:yacht}, \ref{res:lidar} and \ref{res:depeweg})
provide the complete results for all experiments, including evaluation metrics from Appendix \ref{sec:eval}
omitted from the main text for brevity. 

\FloatBarrier

\begin{table*}[h!]
\centering
\footnotesize
\begin{tabular}{ll||ccccc}
\multicolumn{6}{c}{\emph{RMSE on Test Data for Synthetic Data-sets }}\\
Model & Inference                                    & Heavy Tail         & Goldberg           & Williams           & Yuan               & Depeweg             \\
\hline \hline
\textbf{BNN}  & \textbf{MFVI}                      & $1.831 \pm 0.074$  & $0.335 \pm 0.025$  & $1.017 \pm 0.06$   & $0.607 \pm 0.035$  & $1.953 \pm 0.071$   \\
\textbf{BNN+LV}  & \textbf{MFVI}                   & $1.882 \pm 0.088$  & $0.376 \pm 0.032$  & $1.118 \pm 0.096$  & $0.622 \pm 0.039$  & $3.523 \pm 0.501$   \\ 
\hline
\textbf{BNN+LV}  &  \textbf{$\text{NCAI}_{\lambda=0}$}  & $1.787 \pm 0.094$  & $0.339 \pm 0.026$  & $0.978 \pm 0.083$  & $0.619 \pm 0.039$  & $1.932 \pm 0.059$   \\
\textbf{BNN+LV}  & \textbf{$\text{NCAI}_\lambda$}      & $1.79 \pm 0.09$    & $0.337 \pm 0.025$  & $0.978 \pm 0.083$  & $0.619 \pm 0.039$  & $1.932 \pm 0.059$  
\end{tabular}
\vskip0.1cm
\caption{Comparison of RMSE on synthetic data-sets ($\pm$ std). Across all data-sets BNN+LV trained with $\text{NCAI}_\lambda$ training yields comparable if not better generalization than BNN+LV and BNN trained with mean-field variational inference (MFVI).}
\label{table:synth_rmse}
\end{table*}

\begin{table*}[h!]
\setlength{\tabcolsep}{2.5pt}
\centering
\small
\begin{tabular}{l||cccccc}
\multicolumn{6}{c}{\emph{Mutual Information between $x$ and $z$ (Real Data)}}\\
Inference                                     & \multicolumn{1}{c}{Abalone} & Airfoil            & Energy             & Wine               & Lidar              & Yacht               \\
\hline \hline
 \textbf{MFVI}                     & $0.152 \pm 0.015$            & $0.485 \pm 0.054$  & $0.139 \pm 0.086$  & $0.045 \pm 0.012$  & $0.667 \pm 0.061$  & $0.077 \pm 0.012$   \\ 
\hline
 \textbf{$\text{NCAI}_{\lambda=0}$}  & $0.149 \pm 0.078$            & $0.29 \pm 0.021$   & $0.162 \pm 0.063$  & $0.047 \pm 0.011$  & $0.373 \pm 0.037$  & $0.087 \pm 0.012$   \\
\textbf{$\text{NCAI}_\lambda$}       & $0.149 \pm 0.078$            & $0.29 \pm 0.021$   & $0.226 \pm 0.041$  & $0.029 \pm 0.008$  & $0.842 \pm 0.06$   & $0.087 \pm 0.012$  
\end{tabular}
\vskip0.1cm
\caption{Comparison of \emph{mutual information} between z and x on real data-sets ($\pm$ std). Across nearly all data-sets, NCAI training infers $z$'s that have the least mutual information in comparison mean-field VI (MFVI).}
\label{table:real_metrics1}
\normalsize
\end{table*}

\begin{table*}[h!]
\setlength{\tabcolsep}{2.5pt}
\centering
\footnotesize
\begin{tabular}{l||cccccc}
\multicolumn{6}{c}{\emph{Henze-Zirkler Test-Statistic (Real Data)}}\\
Inference                                     & \multicolumn{1}{c}{Abalone} & Airfoil              & Energy              & Wine                & Lidar              & Yacht                \\

\hline \hline
 \textbf{MFVI}                     & $26.148 \pm 4.394$           & $28.108 \pm 3.205$   & $16.976 \pm 3.519$  & $52.566 \pm 2.633$  & $5.09 \pm 0.991$   & $27.059 \pm 4.144$   \\ 
\hline
 \textbf{$\text{NCAI}_{\lambda=0}$}  & $17.975 \pm 6.725$           & $49.122 \pm 11.426$  & $19.071 \pm 5.414$  & $53.201 \pm 1.247$  & $7.804 \pm 1.727$  & $51.283 \pm 9.548$   \\
\textbf{$\text{NCAI}_\lambda$}       & $17.975 \pm 6.725$           & $49.122 \pm 11.426$  & $1.186 \pm 0.558$   & $1.641 \pm 0.242$   & $0.005 \pm 0.001$  & $51.283 \pm 9.548$  
\end{tabular}
\vskip0.1cm
\caption{Comparison of model assumption satisfaction (in terms of the HZ metric) on real data-sets ($\pm$ std). Across nearly all data-sets, NCAI training infers $z$'s that are more Gaussian (lowest HZ) relative to mean-field VI (MFVI).}
\label{table:real_metrics2}
\normalsize
\end{table*}

\FloatBarrier

\begin{table*}[h!]
  \centering
  \tiny
  \setlength{\tabcolsep}{3pt}
    \begin{tabular}{l|p{25mm}|c|c|c}
    & $\text{NCAI}_\lambda$ & $\text{NCAI}_{\lambda=0}$ & BNN & BNN+LV \\ \hline
$D_{\text{JS}}(q(z)||p(z))$ & $0.007 \pm 0.004$ & {{$0.021 \pm 0.019$}} & N/A & $0.009 \pm 0.003$ \\ \hline 
$D_{\text{KL}}(p(z)||q(z))$ & $0.015 \pm 0.012$ & {{$0.05 \pm 0.046$}} & N/A & $0.015 \pm 0.006$ \\ \hline 
$\hat{I}(x; \mu_{z})$ & $0.146 \pm 0.131$ & {{$0.149 \pm 0.078$}} & N/A & $0.152 \pm 0.015$ \\ \hline 
$\hat{I}(x; z)$ & $0.015 \pm 0.006$ & {{$0.012 \pm 0.005$}} & N/A & $0.011 \pm 0.002$ \\ \hline 
$\text{HZ}(\{\mu_{z_1}, \ldots, \mu_{z_N}\})$ & $0.839 \pm 0.15$ & {{$17.975 \pm 6.725$}} & N/A & $26.148 \pm 4.394$ \\ \hline 
$s_w^*$ & $0.2 \pm 0.018$ & {{$1.272 \pm 1.043$}} & $0.948 \pm 0.68$ & $0.202 \pm 0.02$ \\ \hline 
$s_y^*$ & $0.01 \pm 0.0$ & {{$0.01 \pm 0.0$}} & $0.1 \pm 0.0$ & $0.1 \pm 0.0$ \\ \hline 
$s_z^*$ & $0.252 \pm 0.004$ & {{$0.248 \pm 0.001$}} & N/A & $0.249 \pm 0.0$ \\ \hline 
95\%-MPIW Test (Unnorm) & $7.564 \pm 0.459$ & {{$7.027 \pm 0.357$}} & $4.112 \pm 0.103$ & $7.127 \pm 0.311$ \\ \hline 
95\%-MPIW Train (Unnorm) & $7.716 \pm 0.448$ & {{$7.096 \pm 0.446$}} & $4.111 \pm 0.099$ & $7.248 \pm 0.215$ \\ \hline 
95\%-PICP Test & $94.3 \pm 2.515$ & {{$92.3 \pm 0.975$}} & $76.9 \pm 4.292$ & $94.4 \pm 2.329$ \\ \hline 
95\%-PICP Train & $94.771 \pm 0.559$ & {{$93.343 \pm 1.743$}} & $77.771 \pm 1.476$ & $94.4 \pm 0.509$ \\ \hline 
PC($x, \mu_z$) & $0.001 \pm 0.0$ & {{$0.006 \pm 0.002$}} & N/A & $0.005 \pm 0.002$ \\ \hline 
PC($y, \mu_z$) & $0.032 \pm 0.003$ & {{$0.063 \pm 0.023$}} & N/A & $0.115 \pm 0.016$ \\ \hline 
Post-Pred Avg-LL Test & $-0.837 \pm 0.105$ & {{$-0.831 \pm 0.086$}} & $-1.248 \pm 0.153$ & $-0.843 \pm 0.071$ \\ \hline 
Post-Pred Avg-LL Train & $-0.798 \pm 0.051$ & {{$-0.799 \pm 0.064$}} & $-1.086 \pm 0.099$ & $-0.832 \pm 0.022$ \\ \hline 
RMSE Test (Unnorm) & $0.208 \pm 0.012$ & {{$0.205 \pm 0.013$}} & $0.194 \pm 0.011$ & $0.204 \pm 0.012$ \\ \hline 
RMSE Train (Unnorm) & $0.208 \pm 0.012$ & {{$0.205 \pm 0.013$}} & $0.194 \pm 0.011$ & $0.204 \pm 0.011$ \\ \hline 
Recon MSE & $0.011 \pm 0.0$ & {{$0.012 \pm 0.0$}} & N/A & $0.165 \pm 0.002$ \\ \hline 
Hyperparams & $\sigma^2_\epsilon=0.01$,\newline $\lambda_2=10$, \newline$\epsilon_T=0.01$, \newline$\epsilon_x=0.54$, \newline$\epsilon_y=1.0$ & $\sigma^2_\epsilon=0.01$ & $\sigma^2_\epsilon=0.1$ & $\sigma^2_\epsilon=0.1$ \\  
\end{tabular} 
\caption{Experiment Evaluation Summary for Abalone ($\pm$ std).}
\label{res:abalone}
 \end{table*}

\begin{table*}[h!]
  \centering
  \tiny
    \begin{tabular}{l|p{25mm}|c|c|c}
    & $\text{NCAI}_\lambda$ & $\text{NCAI}_{\lambda=0}$ & BNN & BNN+LV \\ \hline
$D_{\text{JS}}(q(z)||p(z))$ & $-0.0 \pm 0.003$ & {{$-0.001 \pm 0.003$}} & N/A & $0.001 \pm 0.004$ \\ \hline 
$D_{\text{KL}}(p(z)||q(z))$ & $0.002 \pm 0.002$ & {{$-0.0 \pm 0.002$}} & N/A & $0.005 \pm 0.003$ \\ \hline 
$\hat{I}(x; \mu_{z})$ & $0.262 \pm 0.03$ & {{$0.29 \pm 0.021$}} & N/A & $0.485 \pm 0.054$ \\ \hline 
$\hat{I}(x; z)$ & $0.107 \pm 0.005$ & {{$0.105 \pm 0.008$}} & N/A & $0.174 \pm 0.025$ \\ \hline 
$\text{HZ}(\{\mu_{z_1}, \ldots, \mu_{z_N}\})$ & $0.25 \pm 0.11$ & {{$49.122 \pm 11.426$}} & N/A & $28.108 \pm 3.205$ \\ \hline 
$s_w^*$ & $0.548 \pm 0.084$ & {{$0.509 \pm 0.076$}} & $0.762 \pm 0.182$ & $0.105 \pm 0.032$ \\ \hline 
$s_y^*$ & $0.1 \pm 0.0$ & {{$0.1 \pm 0.0$}} & $0.1 \pm 0.0$ & $0.1 \pm 0.0$ \\ \hline 
$s_z^*$ & $0.25 \pm 0.001$ & {{$0.247 \pm 0.0$}} & N/A & $0.25 \pm 0.0$ \\ \hline 
95\%-MPIW Test (Unnorm) & $10.426 \pm 0.81$ & {{$10.283 \pm 0.423$}} & $8.999 \pm 0.164$ & $17.023 \pm 1.972$ \\ \hline 
95\%-MPIW Train (Unnorm) & $10.294 \pm 0.732$ & {{$10.15 \pm 0.41$}} & $8.985 \pm 0.148$ & $16.974 \pm 1.925$ \\ \hline 
95\%-PICP Test & $94.7 \pm 2.797$ & {{$95.5 \pm 1.969$}} & $91.9 \pm 1.475$ & $92.9 \pm 1.245$ \\ \hline 
95\%-PICP Train & $97.429 \pm 0.598$ & {{$97.286 \pm 0.769$}} & $93.2 \pm 2.077$ & $94.686 \pm 0.584$ \\ \hline 
PC($x, \mu_z$) & $0.001 \pm 0.0$ & {{$0.005 \pm 0.002$}} & N/A & $0.004 \pm 0.001$ \\ \hline 
PC($y, \mu_z$) & $0.002 \pm 0.001$ & {{$0.03 \pm 0.01$}} & N/A & $0.162 \pm 0.069$ \\ \hline 
Post-Pred Avg-LL Test & $-0.48 \pm 0.076$ & {{$-0.462 \pm 0.056$}} & $-0.512 \pm 0.083$ & $-0.995 \pm 0.143$ \\ \hline 
Post-Pred Avg-LL Train & $-0.407 \pm 0.043$ & {{$-0.401 \pm 0.026$}} & $-0.422 \pm 0.043$ & $-0.972 \pm 0.137$ \\ \hline 
RMSE Test (Unnorm) & $0.05 \pm 0.005$ & {{$0.05 \pm 0.003$}} & $0.05 \pm 0.003$ & $0.094 \pm 0.009$ \\ \hline 
RMSE Train (Unnorm) & $0.05 \pm 0.005$ & {{$0.05 \pm 0.003$}} & $0.05 \pm 0.003$ & $0.094 \pm 0.008$ \\ \hline 
Recon MSE & $0.177 \pm 0.01$ & {{$0.174 \pm 0.006$}} & N/A & $0.139 \pm 0.013$ \\ \hline 
Hyperparams &  $\sigma^2_\epsilon=0.1$,\newline $\lambda_2=10$,\newline $\epsilon_T=0.01$, \newline$\epsilon_x=0.5$,\newline $\epsilon_y=1.0$ & $\sigma^2_\epsilon=0.1$ & $\sigma^2_\epsilon=0.1$ & $\sigma^2_\epsilon=0.1$ \\  
\end{tabular} 
\caption{Experiment Evaluation Summary for Airfoil($\pm$ std).}
\label{res:airfoil}
 \end{table*}

 \begin{table*}[h!]
  \centering
  \tiny
   \begin{tabular}{l|p{25mm}|c|c|c}
    & $\text{NCAI}_\lambda$ & $\text{NCAI}_{\lambda=0}$ & BNN & BNN+LV \\ \hline
$D_{\text{JS}}(q(z)||p(z))$ & $0.01 \pm 0.005$ & $0.011 \pm 0.009$ & {{N/A}} & $0.032 \pm 0.012$ \\ \hline 
$D_{\text{KL}}(p(z)||q(z))$ & $0.021 \pm 0.01$ & $0.025 \pm 0.01$ & {{N/A}} & $0.066 \pm 0.029$ \\ \hline 
$\hat{I}(x; \mu_{z})$ & $0.226 \pm 0.041$ & $0.162 \pm 0.063$ & {{N/A}} & $0.139 \pm 0.086$ \\ \hline 
$\hat{I}(x; z)$ & $0.14 \pm 0.006$ & $0.122 \pm 0.011$ & {{N/A}} & $0.127 \pm 0.02$ \\ \hline 
$\text{HZ}(\{\mu_{z_1}, \ldots, \mu_{z_N}\})$ & $1.186 \pm 0.558$ & $19.071 \pm 5.414$ & {{N/A}} & $16.976 \pm 3.519$ \\ \hline 
$s_w^*$ & $0.496 \pm 0.403$ & $2.91 \pm 2.843$ & {{$1.87 \pm 1.005$}} & $0.921 \pm 1.313$ \\ \hline 
$s_y^*$ & $0.01 \pm 0.0$ & $0.01 \pm 0.0$ & {{$0.01 \pm 0.0$}} & $0.01 \pm 0.0$ \\ \hline 
$s_z^*$ & $0.251 \pm 0.003$ & $0.245 \pm 0.001$ & {{N/A}} & $0.247 \pm 0.0$ \\ \hline 
95\%-MPIW Test (Unnorm) & $11.828 \pm 2.308$ & $10.534 \pm 1.751$ & {{$5.704 \pm 0.16$}} & $15.159 \pm 5.563$ \\ \hline 
95\%-MPIW Train (Unnorm) & $11.925 \pm 2.154$ & $10.329 \pm 1.331$ & {{$5.71 \pm 0.146$}} & $13.813 \pm 2.45$ \\ \hline 
95\%-PICP Test & $94.51 \pm 2.384$ & $93.464 \pm 1.533$ & {{$81.438 \pm 2.758$}} & $94.51 \pm 2.981$ \\ \hline 
95\%-PICP Train & $95.139 \pm 2.296$ & $94.36 \pm 1.556$ & {{$84.527 \pm 4.621$}} & $94.731 \pm 2.809$ \\ \hline 
PC($x, \mu_z$) & $0.002 \pm 0.001$ & $0.005 \pm 0.004$ & {{N/A}} & $0.008 \pm 0.004$ \\ \hline 
PC($y, \mu_z$) & $0.003 \pm 0.001$ & $0.014 \pm 0.006$ & {{N/A}} & $0.022 \pm 0.006$ \\ \hline 
Post-Pred Avg-LL Test & $0.898 \pm 0.452$ & $0.862 \pm 0.138$ & {{$1.281 \pm 0.171$}} & $0.573 \pm 0.288$ \\ \hline 
Post-Pred Avg-LL Train & $0.953 \pm 0.393$ & $0.941 \pm 0.108$ & {{$1.443 \pm 0.16$}} & $0.657 \pm 0.205$ \\ \hline 
RMSE Test (Unnorm) & $0.035 \pm 0.007$ & $0.029 \pm 0.005$ & {{$0.016 \pm 0.002$}} & $0.041 \pm 0.011$ \\ \hline 
RMSE Train (Unnorm) & $0.035 \pm 0.007$ & $0.029 \pm 0.005$ & {{$0.016 \pm 0.002$}} & $0.041 \pm 0.011$ \\ \hline 
Recon MSE & $0.028 \pm 0.001$ & $0.031 \pm 0.003$ & {{N/A}} & $0.028 \pm 0.002$ \\ \hline 
Hyperparams & $\sigma^2_\epsilon=0.01$,\newline $\lambda_2=10$, \newline $\epsilon_T=0.0003$, \newline $\epsilon_x=0.1$, \newline $\epsilon_y=1.0$ & $\sigma^2_\epsilon=0.01$ & $\sigma^2_\epsilon=0.01$ & $\sigma^2_\epsilon=0.01$ \\  
\end{tabular} 
\caption{Experiment Evaluation Summary for Energy Efficiency($\pm$ std). }
\label{res:ee}
 \end{table*} 

\begin{table*}[h!]
 \setlength{\tabcolsep}{1pt}
  \centering
  \tiny
    \begin{tabular}{l|p{25mm}|c|c|c}
    & $\text{NCAI}_\lambda$ & $\text{NCAI}_{\lambda=0}$ & BNN & BNN+LV \\ \hline
$D_{\text{JS}}(q(z)||p(z))$ & {{$0.014 \pm 0.004$}} & $0.019 \pm 0.006$ & N/A & $0.037 \pm 0.022$ \\ \hline 
$D_{\text{KL}}(p(z)||q(z))$ & {{$0.032 \pm 0.009$}} & $0.041 \pm 0.015$ & N/A & $0.082 \pm 0.058$ \\ \hline 
$\hat{I}(x; \mu_{z})$ & {{$0.036 \pm 0.04$}} & $0.051 \pm 0.049$ & N/A & $0.243 \pm 0.079$ \\ \hline 
$\hat{I}(x; z)$ & {{$0.018 \pm 0.025$}} & $0.023 \pm 0.03$ & N/A & $0.214 \pm 0.052$ \\ \hline 
$\text{HZ}(\{\mu_{z_1}, \ldots, \mu_{z_N}\})$ & {{$0.027 \pm 0.011$}} & $7.137 \pm 5.436$ & N/A & $4.701 \pm 5.439$ \\ \hline 
$s_w^*$ & {{$2.643 \pm 0.226$}} & $2.355 \pm 0.28$ & $0.12 \pm 0.007$ & $1.246 \pm 0.149$ \\ \hline 
$s_y^*$ & {{$0.1 \pm 0.0$}} & $0.1 \pm 0.0$ & $0.5 \pm 0.0$ & $0.1 \pm 0.0$ \\ \hline 
$s_z^*$ & {{$0.01 \pm 0.0$}} & $0.01 \pm 0.0$ & N/A & $0.01 \pm 0.0$ \\ \hline 
95\%-MPIW Test (Unnorm) & {{$5.428 \pm 0.403$}} & $5.418 \pm 0.729$ & $2.979 \pm 0.016$ & $6.789 \pm 0.408$ \\ \hline 
95\%-MPIW Train (Unnorm) & {{$5.371 \pm 0.38$}} & $5.368 \pm 0.7$ & $2.98 \pm 0.005$ & $6.67 \pm 0.342$ \\ \hline 
95\%-PICP Test & {{$94.933 \pm 0.723$}} & $93.333 \pm 1.054$ & $74.2 \pm 2.834$ & $94.733 \pm 0.641$ \\ \hline 
95\%-PICP Train & {{$95.4 \pm 0.894$}} & $93.467 \pm 2.116$ & $73.867 \pm 3.288$ & $94.867 \pm 1.095$ \\ \hline 
KS Test-Stat & {{$0.023 \pm 0.004$}} & $0.051 \pm 0.028$ & N/A & $0.058 \pm 0.035$ \\ \hline 
PC($x, \mu_z$) & {{$0.001 \pm 0.0$}} & $0.0 \pm 0.0$ & N/A & $0.0 \pm 0.0$ \\ \hline 
PC($y, \mu_z$) & {{$0.111 \pm 0.017$}} & $0.068 \pm 0.086$ & N/A & $0.126 \pm 0.107$ \\ \hline 
Post-Pred Avg-LL Test & {{$-1.426 \pm 0.042$}} & $-1.481 \pm 0.018$ & $-2.47 \pm 0.083$ & $-1.867 \pm 0.078$ \\ \hline 
Post-Pred Avg-LL Train & {{$-1.399 \pm 0.058$}} & $-1.429 \pm 0.066$ & $-2.6 \pm 0.143$ & $-1.894 \pm 0.078$ \\ \hline 
RMSE Test (Unnorm) & {{$1.79 \pm 0.09$}} & $1.787 \pm 0.094$ & $1.831 \pm 0.074$ & $1.882 \pm 0.088$ \\ \hline 
RMSE Train (Unnorm) & {{$1.789 \pm 0.09$}} & $1.787 \pm 0.094$ & $1.831 \pm 0.074$ & $1.883 \pm 0.087$ \\ \hline 
Recon MSE & {{$0.142 \pm 0.003$}} & $0.16 \pm 0.017$ & N/A & $0.13 \pm 0.007$ \\ \hline 
Hyperparams & $\sigma_z^2=0.01$, \newline $\sigma^2_\epsilon=0.1$, \newline $\lambda_2=10$, \newline $\epsilon_T=0.01$, \newline $\epsilon_x=0.1$, \newline $\epsilon_y=1.0$ & $\sigma_z^2=0.01$, \newline $\sigma^2_\epsilon=0.1$ & $\sigma^2_\epsilon=0.5$ & $\sigma_z^2=0.01$, \newline $\sigma^2_\epsilon=0.1$ \\  
\end{tabular} 
\caption{Experiment Evaluation Summary for Heavy-Tail ($\pm$ std). }
\label{res:heavytail}
 \end{table*}

\begin{table*}[h!]
  \centering
  \tiny
    \begin{tabular}{l|p{25mm}|c|c|c}
    & $\text{NCAI}_\lambda$ & $\text{NCAI}_{\lambda=0}$ & BNN & BNN+LV \\ \hline
$D_{\text{JS}}(q(z)||p(z))$ & {{$0.002 \pm 0.002$}} & $0.001 \pm 0.002$ & N/A & $0.003 \pm 0.003$ \\ \hline 
$D_{\text{KL}}(p(z)||q(z))$ & {{$0.001 \pm 0.001$}} & $0.002 \pm 0.002$ & N/A & $0.007 \pm 0.002$ \\ \hline 
$\hat{I}(x; \mu_{z})$ & {{$0.046 \pm 0.067$}} & $0.02 \pm 0.024$ & N/A & $0.229 \pm 0.113$ \\ \hline 
$\hat{I}(x; z)$ & {{$-0.006 \pm 0.008$}} & $-0.011 \pm 0.007$ & N/A & $0.076 \pm 0.101$ \\ \hline 
$\text{HZ}(\{\mu_{z_1}, \ldots, \mu_{z_N}\})$ & {{$0.026 \pm 0.038$}} & $0.621 \pm 0.234$ & N/A & $0.918 \pm 0.41$ \\ \hline 
$s_w^*$ & {{$0.456 \pm 0.093$}} & $0.463 \pm 0.087$ & $0.627 \pm 0.039$ & $0.416 \pm 0.152$ \\ \hline 
$s_y^*$ & {{$0.1 \pm 0.0$}} & $0.1 \pm 0.0$ & $0.1 \pm 0.0$ & $0.1 \pm 0.0$ \\ \hline 
$s_z^*$ & {{$0.249 \pm 0.002$}} & $0.247 \pm 0.0$ & N/A & $0.248 \pm 0.001$ \\ \hline 
95\%-MPIW Test (Unnorm) & {{$3.969 \pm 0.184$}} & $4.091 \pm 0.21$ & $2.265 \pm 0.102$ & $4.226 \pm 0.676$ \\ \hline 
95\%-MPIW Train (Unnorm) & {{$3.948 \pm 0.171$}} & $4.099 \pm 0.176$ & $2.264 \pm 0.096$ & $4.298 \pm 0.674$ \\ \hline 
95\%-PICP Test & {{$91.8 \pm 2.308$}} & $92.7 \pm 1.924$ & $73.4 \pm 3.681$ & $91.8 \pm 2.49$ \\ \hline 
95\%-PICP Train & {{$93.9 \pm 0.894$}} & $94.1 \pm 0.822$ & $75.5 \pm 2.598$ & $93.5 \pm 1.173$ \\ \hline 
KS Test-Stat & {{$0.016 \pm 0.004$}} & $0.019 \pm 0.005$ & N/A & $0.025 \pm 0.003$ \\ \hline 
PC($x, \mu_z$) & {{$0.001 \pm 0.001$}} & $0.0 \pm 0.0$ & N/A & $0.0 \pm 0.0$ \\ \hline 
PC($y, \mu_z$) & {{$0.247 \pm 0.148$}} & $0.403 \pm 0.04$ & N/A & $0.193 \pm 0.227$ \\ \hline 
Post-Pred Avg-LL Test & {{$-0.963 \pm 0.041$}} & $-0.962 \pm 0.04$ & $-1.055 \pm 0.08$ & $-1.026 \pm 0.056$ \\ \hline 
Post-Pred Avg-LL Train & {{$-0.885 \pm 0.03$}} & $-0.884 \pm 0.033$ & $-0.95 \pm 0.07$ & $-0.981 \pm 0.107$ \\ \hline 
RMSE Test (Unnorm) & {{$0.337 \pm 0.025$}} & $0.339 \pm 0.026$ & $0.335 \pm 0.025$ & $0.376 \pm 0.032$ \\ \hline 
RMSE Train (Unnorm) & {{$0.337 \pm 0.026$}} & $0.339 \pm 0.026$ & $0.335 \pm 0.025$ & $0.376 \pm 0.031$ \\ \hline 
Recon MSE & {{$0.16 \pm 0.008$}} & $0.151 \pm 0.007$ & N/A & $0.156 \pm 0.013$ \\ \hline 
Hyperparams & $\sigma^2_\epsilon=0.1$, \newline $\lambda_2=10$, \newline $\epsilon_T=0.0003$, \newline $\epsilon_x=0.1$, \newline $\epsilon_y=1.0$ & $\sigma^2_\epsilon=0.1$ & $\sigma^2_\epsilon=0.1$ & $\sigma^2_\epsilon=0.1$ \\  
\end{tabular}
\caption{Experiment Evaluation Summary for Goldberg($\pm$ std). } 
\label{res:goldberg}
 \end{table*}

\begin{table*}[h!]
  \centering
  \tiny
    \begin{tabular}{l|p{25mm}|c|c|c}
    & $\text{NCAI}_\lambda$ & $\text{NCAI}_{\lambda=0}$ & BNN & BNN+LV \\ \hline
$D_{\text{JS}}(q(z)||p(z))$ & $0.012 \pm 0.005$ & {{$0.015 \pm 0.005$}} & N/A & $0.006 \pm 0.005$ \\ \hline 
$D_{\text{KL}}(p(z)||q(z))$ & $0.025 \pm 0.011$ & {{$0.031 \pm 0.009$}} & N/A & $0.016 \pm 0.008$ \\ \hline 
$\hat{I}(x; \mu_{z})$ & $0.614 \pm 0.075$ & {{$0.519 \pm 0.091$}} & N/A & $0.982 \pm 0.121$ \\ \hline 
$\hat{I}(x; z)$ & $0.155 \pm 0.048$ & {{$0.059 \pm 0.02$}} & N/A & $0.235 \pm 0.035$ \\ \hline 
$\text{HZ}(\{\mu_{z_1}, \ldots, \mu_{z_N}\})$ & $0.015 \pm 0.019$ & {{$7.248 \pm 2.598$}} & N/A & $6.445 \pm 2.818$ \\ \hline 
$s_w^*$ & $2.927 \pm 1.612$ & {{$2.368 \pm 1.55$}} & $0.75 \pm 0.065$ & $0.997 \pm 0.943$ \\ \hline 
$s_y^*$ & $0.01 \pm 0.0$ & {{$0.01 \pm 0.0$}} & $0.1 \pm 0.0$ & $0.1 \pm 0.0$ \\ \hline 
$s_z^*$ & $0.247 \pm 0.002$ & {{$0.246 \pm 0.001$}} & N/A & $0.247 \pm 0.001$ \\ \hline 
95\%-MPIW Test (Unnorm) & $1.468 \pm 0.207$ & {{$1.314 \pm 0.126$}} & $0.89 \pm 0.036$ & $1.881 \pm 0.171$ \\ \hline 
95\%-MPIW Train (Unnorm) & $1.479 \pm 0.249$ & {{$1.31 \pm 0.162$}} & $0.889 \pm 0.033$ & $1.908 \pm 0.165$ \\ \hline 
95\%-PICP Test & $95.4 \pm 1.517$ & {{$92.9 \pm 1.294$}} & $77.2 \pm 3.439$ & $92.9 \pm 3.008$ \\ \hline 
95\%-PICP Train & $96.9 \pm 0.894$ & {{$95.0 \pm 0.5$}} & $78.8 \pm 3.978$ & $95.1 \pm 0.418$ \\ \hline 
KS Test-Stat & $0.03 \pm 0.008$ & {{$0.034 \pm 0.008$}} & N/A & $0.033 \pm 0.011$ \\ \hline 
PC($x, \mu_z$) & $0.005 \pm 0.002$ & {{$0.001 \pm 0.001$}} & N/A & $0.0 \pm 0.0$ \\ \hline 
PC($y, \mu_z$) & $0.018 \pm 0.007$ & {{$0.41 \pm 0.113$}} & N/A & $0.572 \pm 0.068$ \\ \hline 
Post-Pred Avg-LL Test & $-0.489 \pm 0.154$ & {{$-0.414 \pm 0.184$}} & $-1.591 \pm 0.417$ & $-1.033 \pm 0.156$ \\ \hline 
Post-Pred Avg-LL Train & $-0.228 \pm 0.125$ & {{$-0.195 \pm 0.108$}} & $-1.357 \pm 0.119$ & $-0.965 \pm 0.078$ \\ \hline 
RMSE Test (Unnorm) & $0.987 \pm 0.103$ & {{$0.978 \pm 0.083$}} & $1.017 \pm 0.06$ & $1.118 \pm 0.096$ \\ \hline 
RMSE Train (Unnorm) & $0.988 \pm 0.101$ & {{$0.979 \pm 0.083$}} & $1.017 \pm 0.06$ & $1.117 \pm 0.096$ \\ \hline 
Recon MSE & $0.017 \pm 0.001$ & {{$0.017 \pm 0.001$}} & N/A & $0.145 \pm 0.004$ \\ \hline 
Hyperparams & $\sigma^2_\epsilon=0.01$, \newline $\lambda_2=10$, \newline $\epsilon_T=0.01$, \newline $\epsilon_x=0.5$, \newline $\epsilon_y=0.5$ & $\sigma^2_\epsilon=0.01$ & $\sigma^2_\epsilon=0.1$ & $\sigma^2_\epsilon=0.1$ \\  
\end{tabular} 
\caption{Experiment Evaluation Summary for Williams($\pm$ std). }
\label{res:williams}
 \end{table*}

\begin{table*}[h!]
  \centering
  \tiny
    \begin{tabular}{l|p{25mm}|c|c|c}
    & $\text{NCAI}_\lambda$ & $\text{NCAI}_{\lambda=0}$ & BNN & BNN+LV \\ \hline
$D_{\text{JS}}(q(z)||p(z))$ & $0.005 \pm 0.003$ & {{$0.006 \pm 0.003$}} & N/A & $0.008 \pm 0.003$ \\ \hline 
$D_{\text{KL}}(p(z)||q(z))$ & $0.01 \pm 0.005$ & {{$0.013 \pm 0.006$}} & N/A & $0.012 \pm 0.004$ \\ \hline 
$\hat{I}(x; \mu_{z})$ & $0.254 \pm 0.057$ & {{$0.283 \pm 0.112$}} & N/A & $0.24 \pm 0.129$ \\ \hline 
$\hat{I}(x; z)$ & $0.128 \pm 0.025$ & {{$0.006 \pm 0.03$}} & N/A & $0.028 \pm 0.017$ \\ \hline 
$\text{HZ}(\{\mu_{z_1}, \ldots, \mu_{z_N}\})$ & $0.004 \pm 0.004$ & {{$8.091 \pm 5.185$}} & N/A & $5.252 \pm 5.607$ \\ \hline 
$s_w^*$ & $0.418 \pm 0.083$ & {{$0.304 \pm 0.028$}} & $0.251 \pm 0.137$ & $0.311 \pm 0.031$ \\ \hline 
$s_y^*$ & $0.1 \pm 0.0$ & {{$0.1 \pm 0.0$}} & $0.1 \pm 0.0$ & $0.1 \pm 0.0$ \\ \hline 
$s_z^*$ & $0.248 \pm 0.001$ & {{$0.248 \pm 0.0$}} & N/A & $0.249 \pm 0.0$ \\ \hline 
95\%-MPIW Test (Unnorm) & $6.862 \pm 1.262$ & {{$5.243 \pm 0.573$}} & $2.007 \pm 0.155$ & $5.275 \pm 0.569$ \\ \hline 
95\%-MPIW Train (Unnorm) & $6.346 \pm 0.821$ & {{$4.906 \pm 0.354$}} & $2.006 \pm 0.153$ & $4.957 \pm 0.36$ \\ \hline 
95\%-PICP Test & $95.5 \pm 2.151$ & {{$94.5 \pm 2.0$}} & $63.4 \pm 1.981$ & $93.6 \pm 2.485$ \\ \hline 
95\%-PICP Train & $97.4 \pm 1.14$ & {{$95.5 \pm 0.707$}} & $69.6 \pm 4.519$ & $94.9 \pm 0.822$ \\ \hline 
KS Test-Stat & $0.031 \pm 0.012$ & {{$0.027 \pm 0.006$}} & N/A & $0.029 \pm 0.009$ \\ \hline 
PC($x, \mu_z$) & $0.0 \pm 0.0$ & {{$0.0 \pm 0.0$}} & N/A & $0.0 \pm 0.0$ \\ \hline 
PC($y, \mu_z$) & $0.109 \pm 0.036$ & {{$0.879 \pm 0.028$}} & N/A & $0.813 \pm 0.159$ \\ \hline 
Post-Pred Avg-LL Test & $-1.285 \pm 0.066$ & {{$-1.211 \pm 0.083$}} & $-2.846 \pm 0.346$ & $-1.278 \pm 0.164$ \\ \hline 
Post-Pred Avg-LL Train & $-1.111 \pm 0.065$ & {{$-1.04 \pm 0.057$}} & $-2.347 \pm 0.154$ & $-1.079 \pm 0.065$ \\ \hline 
RMSE Test (Unnorm) & $0.635 \pm 0.042$ & {{$0.619 \pm 0.039$}} & $0.607 \pm 0.035$ & $0.622 \pm 0.039$ \\ \hline 
RMSE Train (Unnorm) & $0.635 \pm 0.042$ & {{$0.62 \pm 0.039$}} & $0.607 \pm 0.035$ & $0.622 \pm 0.039$ \\ \hline 
Recon MSE & $0.145 \pm 0.008$ & {{$0.159 \pm 0.007$}} & N/A & $0.153 \pm 0.006$ \\ \hline 
Hyperparams & $\sigma^2_\epsilon=0.1$, \newline $\lambda_2=10$, \newline $\epsilon_T=0.01$, \newline $\epsilon_x=0.1$, \newline $\epsilon_y=1.0$ & $\sigma^2_\epsilon=0.1$ & $\sigma^2_\epsilon=0.1$ & $\sigma^2_\epsilon=0.1$ \\  
\end{tabular} 
\caption{Experiment Evaluation Summary for Yuan($\pm$ std). }
\label{res:yuan}
 \end{table*}

 \begin{table*}[h!]
  \centering
  \tiny
    \begin{tabular}{l|p{25mm}|c|c|c}
    & $\text{NCAI}_\lambda$ & $\text{NCAI}_{\lambda=0}$ & BNN & BNN+LV \\ \hline
$D_{\text{JS}}(q(z)||p(z))$ & {{$0.012 \pm 0.012$}} & $0.002 \pm 0.002$ & N/A & $0.001 \pm 0.002$ \\ \hline 
$D_{\text{KL}}(p(z)||q(z))$ & {{$0.042 \pm 0.012$}} & $0.003 \pm 0.003$ & N/A & $0.003 \pm 0.003$ \\ \hline 
$\hat{I}(x; \mu_{z})$ & {{$0.029 \pm 0.008$}} & $0.047 \pm 0.011$ & N/A & $0.045 \pm 0.012$ \\ \hline 
$\hat{I}(x; z)$ & {{$0.013 \pm 0.007$}} & $0.022 \pm 0.003$ & N/A & $0.02 \pm 0.004$ \\ \hline 
$\text{HZ}(\{\mu_{z_1}, \ldots, \mu_{z_N}\})$ & {{$1.641 \pm 0.242$}} & $53.201 \pm 1.247$ & N/A & $52.566 \pm 2.633$ \\ \hline 
$s_w^*$ & {{$0.209 \pm 0.017$}} & $0.15 \pm 0.002$ & $0.197 \pm 0.183$ & $0.147 \pm 0.005$ \\ \hline 
$s_y^*$ & {{$0.01 \pm 0.0$}} & $0.1 \pm 0.0$ & $0.1 \pm 0.0$ & $0.1 \pm 0.0$ \\ \hline 
$s_z^*$ & {{$0.257 \pm 0.001$}} & $0.251 \pm 0.0$ & N/A & $0.251 \pm 0.0$ \\ \hline 
95\%-MPIW Test (Unnorm) & {{$2.4 \pm 0.121$}} & $2.45 \pm 0.04$ & $1.028 \pm 0.014$ & $2.466 \pm 0.026$ \\ \hline 
95\%-MPIW Train (Unnorm) & {{$2.396 \pm 0.106$}} & $2.439 \pm 0.04$ & $1.027 \pm 0.013$ & $2.463 \pm 0.034$ \\ \hline 
95\%-PICP Test & {{$94.796 \pm 1.185$}} & $94.279 \pm 1.479$ & $61.191 \pm 2.176$ & $94.734 \pm 1.426$ \\ \hline 
95\%-PICP Train & {{$94.897 \pm 1.145$}} & $94.893 \pm 0.286$ & $63.265 \pm 1.179$ & $94.969 \pm 0.257$ \\ \hline 
PC($x, \mu_z$) & {{$0.001 \pm 0.0$}} & $0.004 \pm 0.001$ & N/A & $0.003 \pm 0.0$ \\ \hline 
PC($y, \mu_z$) & {{$0.049 \pm 0.01$}} & $0.142 \pm 0.041$ & N/A & $0.154 \pm 0.055$ \\ \hline 
Post-Pred Avg-LL Test & {{$-0.849 \pm 0.038$}} & $-1.147 \pm 0.025$ & $-1.709 \pm 0.22$ & $-1.143 \pm 0.027$ \\ \hline 
Post-Pred Avg-LL Train & {{$-0.805 \pm 0.033$}} & $-1.119 \pm 0.013$ & $-1.479 \pm 0.056$ & $-1.123 \pm 0.015$ \\ \hline 
RMSE Test (Unnorm) & {{$0.983 \pm 0.023$}} & $0.976 \pm 0.016$ & $0.92 \pm 0.022$ & $0.981 \pm 0.017$ \\ \hline 
RMSE Train (Unnorm) & {{$0.983 \pm 0.023$}} & $0.976 \pm 0.017$ & $0.92 \pm 0.022$ & $0.981 \pm 0.017$ \\ \hline 
Recon MSE & {{$0.011 \pm 0.001$}} & $0.114 \pm 0.001$ & N/A & $0.113 \pm 0.001$ \\ \hline 
Hyperparams & $\sigma^2_\epsilon=0.01$, \newline $\lambda_2=10$, \newline $\epsilon_T=0.01$,  \newline $\epsilon_x=0.1$, \newline $\epsilon_y=0.5$ & $\sigma^2_\epsilon=0.01$ & $\sigma^2_\epsilon=0.1$ & $\sigma^2_\epsilon=0.1$ \\  
\end{tabular} 
\caption{Experiment Evaluation Summary for Wine Quality Red($\pm$ std). }
\label{res:wine}
 \end{table*}

  \begin{table*}[h!]
  \centering
  \tiny
    \begin{tabular}{l|p{25mm}|c|c|c}
    & $\text{NCAI}_\lambda$ & $\text{NCAI}_{\lambda=0}$ & BNN & BNN+LV \\ \hline
$D_{\text{JS}}(q(z)||p(z))$ & $0.006 \pm 0.008$ & {{$-0.0 \pm 0.002$}} & N/A & $0.005 \pm 0.004$ \\ \hline 
$D_{\text{KL}}(p(z)||q(z))$ & $0.003 \pm 0.006$ & {{$0.001 \pm 0.005$}} & N/A & $0.002 \pm 0.004$ \\ \hline 
$\hat{I}(x; \mu_{z})$ & $0.086 \pm 0.013$ & {{$0.087 \pm 0.012$}} & N/A & $0.077 \pm 0.012$ \\ \hline 
$\hat{I}(x; z)$ & $0.108 \pm 0.002$ & {{$0.108 \pm 0.004$}} & N/A & $0.107 \pm 0.001$ \\ \hline 
$\text{HZ}(\{\mu_{z_1}, \ldots, \mu_{z_N}\})$ & $5.42 \pm 0.747$ & {{$51.283 \pm 9.548$}} & N/A & $27.059 \pm 4.144$ \\ \hline 
$s_w^*$ & $1.094 \pm 0.903$ & {{$1.137 \pm 0.84$}} & $1.395 \pm 1.155$ & $0.384 \pm 0.026$ \\ \hline 
$s_y^*$ & $0.01 \pm 0.0$ & {{$0.01 \pm 0.0$}} & $0.01 \pm 0.0$ & $0.01 \pm 0.0$ \\ \hline 
$s_z^*$ & $0.251 \pm 0.001$ & {{$0.246 \pm 0.001$}} & N/A & $0.247 \pm 0.0$ \\ \hline 
95\%-MPIW Test (Unnorm) & $6.734 \pm 0.212$ & {{$6.712 \pm 0.235$}} & $6.163 \pm 0.178$ & $8.095 \pm 0.762$ \\ \hline 
95\%-MPIW Train (Unnorm) & $6.724 \pm 0.223$ & {{$6.703 \pm 0.217$}} & $6.163 \pm 0.193$ & $8.118 \pm 0.601$ \\ \hline 
95\%-PICP Test & $99.016 \pm 2.199$ & {{$98.689 \pm 2.933$}} & $97.377 \pm 5.865$ & $98.033 \pm 2.137$ \\ \hline 
95\%-PICP Train & $99.444 \pm 0.387$ & {{$99.444 \pm 0.387$}} & $97.593 \pm 2.544$ & $98.611 \pm 0.655$ \\ \hline 
PC($x, \mu_z$) & $0.007 \pm 0.003$ & {{$0.007 \pm 0.002$}} & N/A & $0.007 \pm 0.003$ \\ \hline 
PC($y, \mu_z$) & $0.005 \pm 0.002$ & {{$0.022 \pm 0.01$}} & N/A & $0.017 \pm 0.01$ \\ \hline 
Post-Pred Avg-LL Test & $0.836 \pm 0.074$ & {{$0.832 \pm 0.077$}} & $0.818 \pm 0.187$ & $0.638 \pm 0.121$ \\ \hline 
Post-Pred Avg-LL Train & $0.865 \pm 0.025$ & {{$0.872 \pm 0.024$}} & $0.868 \pm 0.074$ & $0.678 \pm 0.047$ \\ \hline 
RMSE Test (Unnorm) & $0.005 \pm 0.001$ & {{$0.005 \pm 0.001$}} & $0.005 \pm 0.001$ & $0.008 \pm 0.001$ \\ \hline 
RMSE Train (Unnorm) & $0.005 \pm 0.001$ & {{$0.005 \pm 0.001$}} & $0.005 \pm 0.001$ & $0.008 \pm 0.001$ \\ \hline 
Recon MSE & $0.014 \pm 0.0$ & {{$0.014 \pm 0.0$}} & N/A & $0.014 \pm 0.001$ \\ \hline 
Hyperparams & $\sigma^2_\epsilon=0.01$, \newline $\lambda_2=10$, \newline $\epsilon_T=0.01$, \newline $\epsilon_x=0.5$, \newline $\epsilon_y=0.5$ & $\sigma^2_\epsilon=0.01$ & $\sigma^2_\epsilon=0.01$ & $\sigma^2_\epsilon=0.01$ \\  
\end{tabular} 
\caption{Experiment Evaluation Summary for Yacht($\pm$ std). }
\label{res:yacht}
 \end{table*}

\begin{table*}[h!]
  \centering
  \tiny
   \begin{tabular}{l|p{25mm}|c|c|c}
    & $\text{NCAI}_\lambda$ & $\text{NCAI}_{\lambda=0}$ & BNN & BNN+LV \\ \hline
$D_{\text{JS}}(q(z)||p(z))$ & {{$0.004 \pm 0.002$}} & $0.004 \pm 0.001$ & N/A & $0.006 \pm 0.002$ \\ \hline 
$D_{\text{KL}}(p(z)||q(z))$ & {{$0.008 \pm 0.003$}} & $0.008 \pm 0.004$ & N/A & $0.01 \pm 0.003$ \\ \hline 
$\hat{I}(x; \mu_{z})$ & {{$0.842 \pm 0.06$}} & $0.373 \pm 0.037$ & N/A & $0.667 \pm 0.061$ \\ \hline 
$\hat{I}(x; z)$ & {{$0.035 \pm 0.012$}} & $-0.015 \pm 0.007$ & N/A & $0.146 \pm 0.026$ \\ \hline 
$\text{HZ}(\{\mu_{z_1}, \ldots, \mu_{z_N}\})$ & {{$0.005 \pm 0.001$}} & $7.804 \pm 1.727$ & N/A & $5.09 \pm 0.991$ \\ \hline 
$s_w^*$ & {{$0.488 \pm 0.026$}} & $0.444 \pm 0.019$ & $0.28 \pm 0.132$ & $0.231 \pm 0.002$ \\ \hline 
$s_y^*$ & {{$0.01 \pm 0.0$}} & $0.01 \pm 0.0$ & $0.1 \pm 0.0$ & $0.01 \pm 0.0$ \\ \hline 
$s_z^*$ & {{$0.247 \pm 0.0$}} & $0.247 \pm 0.0$ & N/A & $0.248 \pm 0.0$ \\ \hline 
95\%-MPIW Test (Unnorm) & {{$0.258 \pm 0.026$}} & $0.247 \pm 0.022$ & $0.366 \pm 0.005$ & $0.313 \pm 0.03$ \\ \hline 
95\%-MPIW Train (Unnorm) & {{$0.293 \pm 0.011$}} & $0.277 \pm 0.012$ & $0.366 \pm 0.005$ & $0.336 \pm 0.014$ \\ \hline 
95\%-PICP Test & {{$96.818 \pm 2.591$}} & $96.364 \pm 2.033$ & $96.364 \pm 3.447$ & $95.455 \pm 2.784$ \\ \hline 
95\%-PICP Train & {{$95.871 \pm 0.736$}} & $94.968 \pm 0.957$ & $93.161 \pm 1.08$ & $93.29 \pm 0.866$ \\ \hline 
KS Test-Stat & {{$0.028 \pm 0.005$}} & $0.025 \pm 0.005$ & N/A & $0.024 \pm 0.009$ \\ \hline 
PC($x, \mu_z$) & {{$0.0 \pm 0.0$}} & $0.0 \pm 0.0$ & N/A & $0.0 \pm 0.0$ \\ \hline 
PC($y, \mu_z$) & {{$0.007 \pm 0.003$}} & $0.084 \pm 0.009$ & N/A & $0.121 \pm 0.008$ \\ \hline 
Post-Pred Avg-LL Test & {{$0.263 \pm 0.11$}} & $0.269 \pm 0.107$ & $-0.31 \pm 0.069$ & $0.129 \pm 0.131$ \\ \hline 
Post-Pred Avg-LL Train & {{$0.155 \pm 0.043$}} & $0.159 \pm 0.046$ & $-0.386 \pm 0.035$ & $-0.021 \pm 0.053$ \\ \hline 
RMSE Test (Unnorm) & {{$0.995 \pm 0.059$}} & $0.988 \pm 0.061$ & $1.143 \pm 0.087$ & $1.231 \pm 0.057$ \\ \hline 
RMSE Train (Unnorm) & {{$0.994 \pm 0.059$}} & $0.988 \pm 0.062$ & $1.143 \pm 0.087$ & $1.231 \pm 0.056$ \\ \hline 
Recon MSE & {{$0.017 \pm 0.0$}} & $0.017 \pm 0.0$ & N/A & $0.015 \pm 0.001$ \\ \hline 
Hyperparams & $\sigma^2_\epsilon=0.01$, \newline $\lambda_2=10$, \newline $\epsilon_T=0.01$, \newline $\epsilon_x=0.5$, \newline $\epsilon_y=0.5$ & $\sigma^2_\epsilon=0.01$ & $\sigma^2_\epsilon=0.1$ & $\sigma^2_\epsilon=0.01$ \\  
\end{tabular} 
\caption{Experiment Evaluation Summary for Lidar($\pm$ std). }
\label{res:lidar}
 \end{table*}

 \begin{table*}[h!]
  \centering
  \tiny
    \begin{tabular}{l|p{25mm}|c|c|c}
    & $\text{NCAI}_\lambda$ & $\text{NCAI}_{\lambda=0}$ & BNN & BNN+LV \\ \hline
    $D_{\text{JS}}(q(z)||p(z))$ & $0.003 \pm 0.001$ & {{$0.005 \pm 0.001$}} & N/A & $0.031 \pm 0.005$ \\ \hline 
$D_{\text{KL}}(p(z)||q(z))$ & $0.008 \pm 0.002$ & {{$0.009 \pm 0.002$}} & N/A & $0.357 \pm 0.088$ \\ \hline 
$\hat{I}(x; \mu_{z})$ & $0.057 \pm 0.017$ & {{$0.032 \pm 0.017$}} & N/A & $0.428 \pm 0.04$ \\ \hline 
$\hat{I}(x; z)$ & $0.047 \pm 0.015$ & {{$0.024 \pm 0.014$}} & N/A & $0.387 \pm 0.045$ \\ \hline 
$\text{HZ}(\{\mu_{z_1}, \ldots, \mu_{z_N}\})$ & $0.015 \pm 0.004$ & {{$0.792 \pm 0.357$}} & N/A & $6.408 \pm 2.439$ \\ \hline 
$s_w^*$ & $34.575 \pm 17.89$ & {{$22.305 \pm 7.342$}} & $1.805 \pm 0.094$ & $12.39 \pm 4.903$ \\ \hline 
$s_y^*$ & $0.1 \pm 0.0$ & {{$0.1 \pm 0.0$}} & $1.0 \pm 0.0$ & $0.1 \pm 0.0$ \\ \hline 
$s_z^*$ & $1.0 \pm 0.0$ & {{$1.0 \pm 0.0$}} & N/A & $1.0 \pm 0.0$ \\ \hline 
95\%-MPIW Test (Unnorm) & $7.375 \pm 0.263$ & {{$7.145 \pm 0.16$}} & $4.011 \pm 0.006$ & $22.165 \pm 10.073$ \\ \hline 
95\%-MPIW Train (Unnorm) & $7.433 \pm 0.299$ & {{$7.114 \pm 0.217$}} & $4.011 \pm 0.001$ & $22.267 \pm 10.346$ \\ \hline 
95\%-PICP Test & $93.84 \pm 1.78$ & {{$93.44 \pm 1.757$}} & $73.68 \pm 1.842$ & $96.0 \pm 1.095$ \\ \hline 
95\%-PICP Train & $95.493 \pm 0.256$ & {{$95.227 \pm 0.289$}} & $75.493 \pm 1.489$ & $96.773 \pm 0.446$ \\ \hline 
KS Test-Stat & $0.014 \pm 0.001$ & {{$0.02 \pm 0.002$}} & N/A & $0.044 \pm 0.007$ \\ \hline 
PC($x, \mu_z$) & $0.003 \pm 0.001$ & {{$0.0 \pm 0.0$}} & N/A & $0.0 \pm 0.0$ \\ \hline 
PC($y, \mu_z$) & $0.035 \pm 0.009$ & {{$0.138 \pm 0.014$}} & N/A & $0.161 \pm 0.039$ \\ \hline 
Post-Pred Avg-LL Test & $-1.979 \pm 0.04$ & {{$-1.973 \pm 0.049$}} & $-2.306 \pm 0.059$ & $-2.342 \pm 0.048$ \\ \hline 
Post-Pred Avg-LL Train & $-1.92 \pm 0.021$ & {{$-1.895 \pm 0.018$}} & $-2.217 \pm 0.069$ & $-2.229 \pm 0.04$ \\ \hline 
RMSE Test (Unnorm) & $1.985 \pm 0.051$ & {{$1.932 \pm 0.059$}} & $1.953 \pm 0.071$ & $3.523 \pm 0.501$ \\ \hline 
RMSE Train (Unnorm) & $1.985 \pm 0.051$ & {{$1.933 \pm 0.059$}} & $1.953 \pm 0.071$ & $3.521 \pm 0.501$ \\ \hline 
Recon MSE & $0.124 \pm 0.002$ & {{$0.122 \pm 0.001$}} & N/A & $0.123 \pm 0.005$ \\ \hline 
Hyperparams & $\sigma_z^2=1.0$, \newline $\sigma^2_\epsilon=0.1$, \newline $\lambda_2=10$, \newline $\epsilon_T=0.01$, \newline $\epsilon_x=0.5$, \newline $\epsilon_y=1.0$ & $\sigma_z^2=1.0, \sigma^2_\epsilon=0.1$ & $\sigma^2_\epsilon=1.0$ & $\sigma_z^2=1.0$, \newline $\sigma^2_\epsilon=0.1$ \\  
\end{tabular} 
\caption{Experiment Evaluation Summary for Depeweg($\pm$ std). }
\label{res:depeweg}
 \end{table*}


\FloatBarrier
\newpage

\bibliography{bib}

\newpage

\end{document}